\documentclass{article}

\PassOptionsToPackage{numbers, sort&compress}{natbib}

\usepackage[preprint]{neurips_2026}

\usepackage[utf8]{inputenc} 
\usepackage[T1]{fontenc}    
\usepackage[backref = page, colorlinks = True, linkcolor = violet!70!white, citecolor = teal]{hyperref}       
\usepackage{url}            
\usepackage{booktabs}       
\usepackage{amsfonts}       
\usepackage{nicefrac}       
\usepackage{microtype}      
\usepackage{xcolor}         
\usepackage{amsmath,amssymb}
\usepackage{amsthm}
\usepackage{mathtools}
\usepackage{graphicx}
\usepackage{enumitem}
\usepackage{subcaption}
\usepackage{algorithm}
\usepackage{algpseudocode}
\usepackage{wrapfig}
\usepackage{cleveref}
\usepackage{titlesec}
\usepackage{standalone}
\usepackage{tikz}
\usetikzlibrary{arrows.meta,fit}
\usepackage{wrapfig}
\usepackage{algpseudocode}
\usepackage{caption}
\usepackage{tcolorbox}
\usepackage{threeparttable}
\usepackage{multirow}
\tcbset{colback=white, colframe=black, boxrule=0.5pt, arc=2pt, left=4pt, right=4pt, top=4pt, bottom=4pt}

\titlespacing*{\paragraph}{0pt}{0pt}{0.5em}
\titlespacing*{\section}{0pt}{*0.8}{*0.6}
\titlespacing*{\subsection}{0pt}{*0.7}{*0.5}

\usepackage{xspace}

\newtheorem{proposition}{Proposition}
\newtheorem{theorem}{Theorem}
\newtheorem{definition}{Definition}

\newcommand{\E}{\mathbb{E}}

\newcommand{\y}{y_{1:T}}
\newcommand{\x}{\xi_{1:T}}

\DeclareMathOperator*{\argmin}{arg\,min}

\DeclarePairedDelimiter\abs{\lvert}{\rvert}%
\DeclarePairedDelimiter\norm{\lVert}{\rVert}%

\makeatletter
\let\oldabs\abs
\def\abs{\@ifstar{\oldabs}{\oldabs*}}
\let\oldnorm\norm
\def\norm{\@ifstar{\oldnorm}{\oldnorm*}}
\makeatother

\title{\textsc{Action-BED}: Task-Driven Bayesian Experimental Design with Singly Intractable Objectives}

\author{%
  Tom Rossa \\
  Department of Statistics\\
  University of Oxford\\
  \texttt{tom.rossa@stats.ox.ac.uk} \\\\
  \And
  Angus Phillips \\
  Department of Statistics \\
  University of Oxford \\
  \texttt{angus.phillips@stats.ox.ac.uk} \\
  \And
  Tom Rainforth \\
  Department of Statistics \\
  University of Oxford \\
  \texttt{rainforth@stats.ox.ac.uk} \\
}

\begin{document}

\maketitle

\begin{abstract}
\looseness=-1
    Bayesian experimental design (BED)  has traditionally been based on maximising expected uncertainty reductions from prior to posterior. 
    A major shortfall of this approach is that it leads to doubly intractable objectives that are difficult to optimise, while customising them to particular downstream tasks of interest can also be difficult. 
    Following first principles decision theory, we demonstrate that BED can alternatively be formulated in terms of an expected future loss (EFL) on downstream actions, providing a simple and naturally task-driven framework.
    Critically, we then show that all such EFLs can be rearranged into \emph{singly intractable objectives} that can be jointly optimised with respect to both the design policy and a downstream \emph{action policy} using stochastic gradients, an approach we refer to as \textsc{Action-BED}.
    This formulation further sidesteps the need for any explicit posterior or marginal likelihood estimation and is naturally implicit, requiring only the ability to sample from the joint model over model parameters and data, and evaluate the downstream loss function.  It thus allows design policies to be learned more effectively, efficiently, and simply than existing methods, while providing easy customisation to different downstream tasks and losses.  
\end{abstract}

\section{Introduction}
\label{intro} 

\looseness=-1
Bayesian Experimental Design (BED) \citep{lindley_measure_1956, lindley_1972, chaloner_bayesian_1995, huan_optimal_2024, rainforth_modern_2024, degroot1962uncertainty} is a principled framework for the optimal acquisition of data, in which designs are chosen to maximise the expected uncertainty reduction (EUR) from prior to posterior over unknown values of interest. 
Typically the uncertainty measure is taken to be the Shannon entropy \citep{shannon_mathematical_1948}, leading to the classical expected information gain (EIG) objective \citep{lindley_measure_1956}. In this perspective, data is valued according to how it changes our belief distribution over unknown values, which is theoretically justified by the fact that downstream Bayes-optimal actions only depend on observed data through the updated belief distribution~\citep{Savage1951}.

\looseness=-1
However, there are two major issues with using this framework in practice.
First, defining uncertainties that are customised to particular downstream tasks of interest can be challenging~\citep{kerrigan_geometric_2025}, noting that the data gathered is rarely the end goal in itself and is instead used to aid downstream estimation, prediction, or decision-making.
In particular, common metrics like Shannon entropy constitute generic uncertainties which fail to reflect that not all information is equally useful for our task of interest.

\looseness=-1
Second, EURs are generally doubly intractable~\citep{rainforth_modern_2024}: without an analytic posterior, the EUR is inevitably a nested expectation~\citep{rainforth_nesting_2018} as meaningful uncertainty measures are nonlinear~\citep{lindley1982scoring,Savage1971Elicitation,dawid2007geometry,gneiting2007strictly}.
This makes estimating and optimising EURs very challenging as standard estimators cannot be applied and, while a wide array of computational strategies have been proposed to deal with this~\citep{foster_deep_2021, goda_multilevel_2020, ao_estimating_2024, iollo_bayesian_2025, iollo_pasoa_2024, iqbal_nesting_2024, fort2017mcmc}, they are typically only applicable to EIGs and tend to suffer from high computational costs and bias.
Others have instead looked to learn amortised posterior approximations to circumvent the double intractability~\citep{foster_variational_2019,barber_information_2003}, often learning this alongside the design/policy optimisation using variational formulations~\citep{foster_unified_2019, shen2025variational, huang_aline_2025, bracher_jadai_2025, huang_amortized_2024}.
However, this amortised posterior learning is still itself a very challenging task and inaccuracies in the posterior approximations can undermine the resulting data acquisition.

We propose to address these challenges with a fundamental move away from the EUR framework by instead defining the utility of data through \emph{the performance of the downstream actions it enables}.
Specifically, using first principles Bayesian decision theory, we show that the Bayes-optimal data gathering policy can be formulated as the minimiser of the \emph{expected future loss} (EFL) of downstream Bayes-optimal actions, allowing for the definition of experimental design objectives that are naturally customised to our downstream tasks and losses of interest.

While the set of unique objectives indexed by the EFL and EUR frameworks turns out to be equivalent, our critical insight is that the former allows for a simple rearrangement into a \emph{singly intractable} objective that we jointly optimise with respect to a design policy and a \emph{downstream action policy}.
This results in a much simpler and computationally tractable policy-based framework than existing approaches that we call \textbf{\textsc{Action-BED}}.  As shown in~\Cref{fig:method_placeholder},
\textsc{Action-BED} avoids the need to directly perform posterior approximations or explicitly estimate any uncertainties: it simply requires the minimisation of an expected loss with respect to the joint policy parameters, which can be done using standard stochastic gradient descent methods.
Furthermore, the task-driven EFL objectives it uses are \emph{naturally implicit}, requiring only reparametrisable simulations from the joint model over parameters and data, and a differentiable loss function.
%
These benefits can substantially simplify the policy training and we empirically find that \textsc{Action-BED} often significantly outperforms previous state-of-the-art in terms of EIG values even when targeting different downstream tasks and losses.

In summary, we propose a paradigm shift in BED from thinking about uncertainties to future losses.
Not only does this allow us to easily customise design policy learning to downstream tasks of interest, it also provides substantial computational and practical benefits, avoiding the double intractability of EURs by jointly optimising a downstream action policy alongside the design policy using our \textsc{Action-BED} framework.
In turn, this provides significant empirical benefits that allow more informative and targeted data to be collected than existing state-of-the-art approaches.

\begin{figure}
    \centering
    \begin{tikzpicture}[
    font=\small,
    >={Latex[length=2.5mm]},
    box/.style={
        draw,
        rounded corners,
        minimum width=2.8cm,
        minimum height=1.1cm,
        align=center
    },
    policy/.style={box, fill=green!10},
    prior/.style={box, fill=blue!10},
    block/.style={box, fill=blue!8},
    metric/.style={box, fill=orange!12},
    loss/.style={box, fill=red!10}
]

\node[policy] (design) at (0,0) {\textbf{Design Policy}\\[1mm] $\pi_d^\phi$};

\node[block, minimum width=2.7cm, minimum height=2.0cm] (data) at (4,0) {
\begin{tikzpicture}[x=0.32cm,y=0.32cm,baseline={(current bounding box.center)}]
    \draw[->] (0,0) -- (4.5,0);
    \draw[->] (0,0) -- (0,3.2);
    \fill (0.6,0.7) circle (1.4pt);
    \fill (1.2,1.1) circle (1.4pt);
    \fill (1.9,1.6) circle (1.4pt);
    \fill (2.8,1.9) circle (1.4pt);
    \fill (3.5,2.5) circle (1.4pt);
\end{tikzpicture}
\\[1.5mm]
\textbf{Data}
};

\node[policy] (actionnet) at (8,0) {\textbf{Action Policy}\\[1mm] $\pi_a^\psi(\mathcal{D})$};

\node[block, minimum width=2.7cm, minimum height=2.0cm] (action) at (11.6,0) {
\begin{tikzpicture}[x=0.32cm,y=0.32cm,baseline={(current bounding box.center)}]
    \draw[->] (0,0) -- (4.5,0);
    \draw[->] (0,0) -- (0,3.2);
    \fill[black!60] (0.6,0.7) circle (1.1pt);
    \fill[black!60] (1.2,1.1) circle (1.1pt);
    \fill[black!60] (1.9,1.6) circle (1.1pt);
    \fill[black!60] (2.8,1.9) circle (1.1pt);
    \fill[black!60] (3.5,2.5) circle (1.1pt);
    \draw[thick,blue]
        plot[smooth] coordinates {(0.4,0.5) (1.0,0.9) (1.8,1.4) (2.7,1.95) (3.7,2.55)};
\end{tikzpicture}
\\[1.5mm]
\textbf{Action}
};

\node[loss] (taskloss) at (15.6,0) {\textbf{Future Loss}\\[1mm] $\ell_{\mathrm{task}}(a,\theta)$};

\node[metric] (uncert) at (4,2.1) {\textbf{Posterior Uncertainty}\\[1mm] $\mathcal{U}\!\left(p(\theta\mid\mathcal{D})\right)$};

\draw[->,thick] (design) -- node[above] {rollout} (data);
\draw[->,thick] (data) -- (actionnet);
\draw[->,thick] (actionnet) -- (action);
\draw[->,thick] (action) -- (taskloss);

\draw[->,thick,dashed] (data) -- (uncert);

\draw[->,thick,dashed]
    (uncert.west) -- node[midway,sloped,above,align=center,text width=2.1cm] {\small \textbf{Existing BED Approaches}} (design.north);

\coordinate (lossdown)   at (15.6,-1.4);
\coordinate (actiontap)  at (8.0,-1.4);
\coordinate (designtap)  at (0.0,-1.4);

\draw[thick,dashed,red!70!black]
    (taskloss.south) -- (lossdown) --
    node[midway,below,text=red!70!black]
    {\scriptsize \large \textbf{\textsc{Action-BED:}} Joint policy updates $\nabla_{\phi,\psi}\,\ell_{\mathrm{task}}$}
    (designtap);

\fill[red!70!black] (actiontap) circle (1.6pt);
\draw[->,thick,dashed,red!70!black] (actiontap) -- (actionnet.south);

\fill[red!70!black] (designtap) circle (1.6pt);
\draw[->,thick,dashed,red!70!black] (designtap) -- (design.south);

\end{tikzpicture}
    \caption{
    \textbf{\textsc{Action-BED}}; our task-driven, singly intractable algorithm for policy-based BED. Rather than targeting uncertainty reductions, \textsc{Action-BED} jointly optimises a downstream action policy alongside the design policy to minimise the expected future loss using stochastic gradient descent.}
    \label{fig:method_placeholder}
    \vspace{-1em}
\end{figure}

\section{Background}
\label{back}

\paragraph{Bayesian experimental design.} Bayesian experimental design (BED) \citep{lindley_measure_1956, lindley_1972, fisher1935design, chaloner_bayesian_1995, huan_optimal_2024, rainforth_modern_2024, degroot1962uncertainty} is typically defined as choosing designs to maximise the reduction in a particular uncertainty measure between the prior and the posterior. 
Namely, let $p(\theta)p(y | \theta, \xi)$ denote the joint generating process over outcome $y \in \mathcal{Y}$ and variables of interest $\theta \in \Theta$ given experimental design $\xi \in \Xi$. Then the expected uncertainty reduction (EUR) in $\theta$ for an uncertainty measure $\mathcal{U}$ and design $\xi$ is:
\begin{equation}
\mathrm{EUR}_\theta(\xi)
=
\mathbb{E}_{p(\theta)p(y | \theta, \xi)}
\left[
\mathcal{U}[p(\theta)] - \mathcal{U}[p(\theta | y, \xi)]
\right].
\end{equation}
Typically, $\mathcal{U}$ is taken to be the Shannon entropy, $\mathcal{U}[q(\theta)] = \mathbb{E}_{q(\theta)}[- \log q(\theta)]$, in which case the EUR corresponds to the \emph{expected information gain} (EIG) in $\theta$, i.e.~the mutual information between $\theta$ and $y$.

\paragraph{Sequential and policy-based BED.} BED is often most impactful in sequential settings where one performs an experiment over $T$ rounds. At each round $t$ a design $\xi_t \in \Xi$ is proposed and outcome $y_t \in \mathcal{Y}$ is observed according to our proposed model $p(y_t | \xi_t, \theta)$. Let $h_{t} := \{(\xi_1,y_1),\dots,(\xi_{t},y_{t})\}$ be the experimental history to time $t$, referred to in generality as the data. Traditional approaches to sequential BED \citep{myung_tutorial_2013} would at each step infer the posterior and select the next design by optimising the EIG, however this process is prohibitively expensive. Instead, policy-based approaches \citep{foster_deep_2021, blau_optimizing_2022, ivanova_implicit_2021, iqbal_nesting_2024, huang_aline_2025, bracher_jadai_2025, shen_bayesian_2023, shen2025variational, dong_variational_2025, hedman_stepdad_2025, huang_amortized_2024} train an adaptive policy network $\pi_d \in \Pi_d$ which maps historical data $h_{t}$ onto optimal next designs $\xi_{t+1}$. The policy is generally trained to maximise (an approximation of) the overall EUR from observing the full experimental history.
This results in an acquisition policy which acts non-myopically to maximise the EUR from observing the full dataset and which can be deployed almost instantaneously.
Note that the optimal design policy is always deterministic~\citep{lindley_1972} and so we will assume deterministic policies throughout, but our methods can equally be applied with probabilistic design policies if desired.

\paragraph{Optimising EIG.} The longstanding challenge with EUR objectives such as the EIG is their fundamental double intractability. This arises because the uncertainty measures considered are non-linear functionals of the intractable posterior distribution~\citep{lindley1982scoring,Savage1971Elicitation,dawid2007geometry,gneiting2007strictly}, which are then nested within expectations over the data generating process. 
Much work has been devoted to developing estimation and optimisation procedures for these objectives, in particular for the EIG \citep{ao_estimating_2024, coons_multifidelity_2025, foster_deep_2021, foster_unified_2019, foster_variational_2019, goda_multilevel_2020, goda_unbiased_2022, huan_gradientbased_2012, huan_simulationbased_2013, iollo_pasoa_2024, iollo_bayesian_2025, iqbal_nesting_2024, ivanova_implicit_2021, kleinegesse_bayesian_2020, kleinegesse_efficient_2019, rainforth_nesting_2018}. Of particular relevance is the family of variational bounds that allow the EIG to be represented as an optimisation over some kind of amortised posterior approximation~\citep{foster_variational_2019, foster_unified_2019} (which is sometimes indirectly defined through a marginal likelihood approximation).
For example, the Barber Agakov (BA) bound is given by~\citep{barber_information_2003}:
\begin{equation}
    \mathcal{I}_T(\pi_d) \geq \mathcal{L}(\pi_d,\psi) = \mathbb{E}_{p(\theta)p(\y | \theta; \pi_d)} \left[\log q_\psi(\theta | h_T) - \log p(\theta)\right]
\end{equation}
where $q_\psi(\theta | h_T)$ is an amortised approximation of the posterior distribution. 
If there are some variational parameters $\psi \in \Psi$ such that $q_\psi(\theta | h_T)$ perfectly captures the true posterior $p(\theta | h_T)$ for all possible $h_T$, then $\max_{\psi \in \Psi} \mathcal{L}(\pi_d,\psi) = \mathcal{I}_T(\pi_d)$.
This bound then allows a singly intractable joint optimisation of both design policy $\pi_d$ and variational posterior $q_\psi$. However, in practice, it is limited by the variational family and difficulty of learning accurate amortised posteriors.


\section{Loss-Driven Bayesian Experimental Design}

As we have established, the de-facto perspective on BED as maximising EURs leads to objectives which are fundamentally doubly intractable and therefore pose a real computational barrier to their adoption, particularly at scale. Furthermore, variational formulations for circumventing this double intractability rely on learning accurate amortised posterior approximations, which is typically challenging and often unnecessary when the posterior itself is not the downstream goal of the experiment. 
They are also specific to the EIG, and there is a lack of more general computationally viable frameworks for other uncertainty measures, in particular customising to downstream tasks and losses of interest.

Instead we propose an entirely loss-driven approach, inspired by the principle that the value of data should be measured by how well it allows us to complete downstream tasks. Grounding ourselves in first principles Bayesian decision theory (BDT), we show Bayes-optimal design policies take the form of the 
minimisers of an expected future loss when future actions are made Bayes-optimally. We then show how this can be turned into a practical, computationally tractable BED algorithm, \textsc{Action-BED}, which solves the aforementioned challenges and is additionally loss-driven and implicit, only relying on joint simulation of parameters and data under the proposed model.


\subsection{Bayesian Experimental Design from First Principles}

Under Bayesian coherence~\citep{cox1946probability}, a rational agent should plan ahead in a way that assumes that they will act optimally given the information available to them in the future.
Using this principle, we can derive a Bayes-optimal design policy from first principles, by starting with terminal actions taken after all data is gathered and then working backwards through our sequential decisions.


To derive the optimal downstream action given previously acquired data,
we note that the Savage axioms imply the existence, for any rational agent, of a loss function on actions and the state of the world~\citep{Savage1951},
$\ell: \mathcal{A} \times \Theta \to \mathbb{R}$, which encapsulates our preferences over the consequences from the actions we take, and a subjective belief model on the world given observed data $p(\theta | h_T)$.
Here $\theta \in \Theta$ now denotes whatever aspect of the true world state directly influences our loss function. The Bayes-optimal action, $a^B$, is then the minimiser of the Bayesian expected loss,
$a^B = \argmin_{a \in \mathcal{A}} \E_{p(\theta | h_T)}\left[\ell(a, \theta)\right]$. This is easily generalised to cases where the loss depends on the data $h_T$ itself, but we omit this dependence for ease of notation. 

For a given belief state and loss function, 
the minimal achievable Bayesian expected loss, $\min_{a \in \mathcal{A}} \E_{p(\theta | h_T)}\left[\ell(a, \theta)\right]$, is now a function of the data we observe.
It is analogous to a value function in reinforcement learning (RL)~\citep{murphy2024reinforcement}:
better data will provide a more certain belief state and, in turn, a better achievable expected downstream loss.
Properly formalising this requires the definition of a belief distribution on possible data that will be generated under our design policy, $p(h_T;\pi_d)$.
The existence of this belief distribution is again implied for a rational agent by the Savage axioms, noting that we are uncertain about both $h_T$ and $\theta$ before the experiment.
Further, our beliefs on $\theta$ should not depend on our data gathering policy, so we should have that our ``prior'' $p(\theta)=\int p(h_T;\pi_d) p(\theta | h_T) dh_T$ is independent of $\pi_d$.

As minimising the loss is our ultimate objective and a rational agent will act optimally once the data is gathered, the Bayes-optimal data gathering policy is then the minimiser of the \emph{expected future loss} (EFL) when downstream actions are themselves made Bayes-optimally as follows.

\begin{definition}[Bayes-Optimal Experimental Design Policy]
\label{def:obed_policy}
The Bayes-optimal sequential experimental design policy $\pi_d^B$ for a given belief model and downstream loss pairing is defined as:
\vspace{-2pt}
\begin{equation}
    \label{eqn:min_data_min_a}
    \pi_d^B = \argmin_{\pi_d \in \Pi_d}  \mathrm{EFL}_{\ell}^B(\pi_d) \quad \mathrm{where} \quad \mathrm{EFL}_{\ell}^B(\pi_d) = \E_{p(h_T; \pi_d)}\left[\min_{a \in \mathcal{A}} \E_{p(\theta | h_T)}\left[\ell(a, \theta)\right]\right]. 
\end{equation}
\end{definition}
\vspace{-4pt}
We note that similar theoretical EFL formulations of the Bayes-optimal experimental designs have already been established in the classical BED literature, most notably \citet{lindley_1972} and \citet{dawid1998coherent}.
However, they only considered individual design decisions rather than design \emph{policies} and, moreover, it has not since been directly used for deriving computationally viable BED approaches.
Instead, the BED field has since exclusively relied on the mathematical equivalence of this EFL formulation to EURs (see~\Cref{sec:unification}) when constructing practical methods.


\subsection{Singly Intractable and Loss-Driven BED via Joint Policy Optimisation}

While \Cref{def:obed_policy} provides a precise formulation of the Bayes-optimal data gathering policy as the minimiser of a task-driven EFL, directly solving \Cref{eqn:min_data_min_a} is very difficult as it involves a nested optimisation due to the dependence of the optimal action $a^B$ on the observed data $h_T$. Our key insight is to switch from trying to optimise individual actions to optimising an \emph{action policy}, $\pi_a$, which maps from observed data to downstream actions. As we show below, this leads to a singly intractable joint objective for design and action policy training which is compatible with any downstream loss.


Let $\pi_a \in\Pi_a := \{\pi \mid \pi: \mathcal{Y}^T \times \Xi^T \to \mathcal{A}\}$ be an \emph{action policy} mapping observed data to downstream actions. 
Noting that $\Pi_a$ contains all admissible measurable mappings from the data to actions and the optimal action is defined on a data-by-data basis for a given model and loss, under standard measurability assumptions detailed in Appendix~\ref{app:action_policy_lifting}, we can lift the optimisation over individual actions in the EFL outside the expectation to become an optimisation over action policies as follows:
\begin{align}
    \mathrm{EFL}_{\ell}^B(\pi_d) &= \min_{\pi_a \in \Pi_a} \E_{p(h_T; \pi_d)p(\theta|h_T)} \left[\ell(\pi_a(h_T), \theta)\right].
    \label{eq:efl_bayes_action_policy}
\end{align}
While the minimisation over action policies in~\Cref{eq:efl_bayes_action_policy} does not usually have an analytic solution, we can instead think of the EFL under more general downstream action policies before jointly optimising for both $\pi_a$ and $\pi_d$, which will still recover the Bayes-optimal policies for both actions:
\begin{align}
\begin{split}
    \label{eqn:joint_opt_problem}
    (\pi_a, \pi_d)^B &= \argmin_{(\pi_a, \pi_d) \in \Pi_a \times \Pi_d} \mathrm{EFL}_{\ell}(\pi_d,\pi_a), \\
    \mathrm{where} \quad
    \mathrm{EFL}_{\ell}(\pi_d,\pi_a) &= \E_{p(\theta)p(\y|\theta;\pi_d)} \left[\ell(\pi_a(h_T), \theta)\right],
    \end{split}
\end{align}
and we have rewritten our belief model using $p(\theta)p(\y|\theta;\pi_d)=p(h_T; \pi_d)p(\theta|h_T)$.

We refer to approaches that directly train policies using~\Cref{eqn:joint_opt_problem} as \textbf{\textsc{Action-BED}}.
Its seemingly straightforward reinterpretation of experimental design as a joint optimisation problem now has remarkably impactful consequences. Indeed the EFL represents a completely general, singly intractable, naturally implicit and loss-driven objective for optimal experimental design. It never requires the approximation of posterior distributions or evaluation of unknown densities, it simply relies on simulations from the model. As we will see in~\Cref{meth}, parameterising both action and design policies as deep neural networks leads to a practical end-to-end experimental design algorithm. 

\subsection{Recovering the Expected Uncertainty Reduction Formulation}
\label{sec:unification}
\looseness=-1
Traditional EUR objectives can be formalised and reconciled with our BED formulation through the framework of proper scoring rules. A scoring rule is a function $s: \mathcal{P}_\theta \times \Theta \to \mathbb{R}$ which assigns a score $s(p_\theta, \theta)$ for predicting a distribution $p_\theta \in \mathcal{P}_\theta$ when the realisation of a random variable $\Theta$ is $\theta$. The scoring rule is (strictly) proper if it is (strictly) minimised when the predicted distribution is exactly the distribution of the target random variable. Following ideas introduced by \cite{degroot1962uncertainty, dawid1998coherent,bickfordsmith2025}, we can define general uncertainty measures, known as \emph{generalised entropies}, using $h_s[q(\theta)] = \E_{q(\theta)}[s(q(\cdot), \theta)]$. 
We then define an experimental design objective via the expected posterior uncertainty (EPU):
\begin{equation}
    \text{EPU}_s(\pi_d) = \E_{p(\y ; \pi_d)}\left[h_s[p(\theta | h_T)]\right] = \E_{p(\theta) p(\y | \theta; \pi_d)}\left[s(p_\theta(\cdot | h_T), \theta)\right].
\end{equation}
We now show that via an equivalence between loss-action pairings and proper scoring rules, any optimal BED policy according to \Cref{def:obed_policy} can be represented as the minimiser of an EPU for some $s$, and a minimiser of an EPU for any $s$ corresponds to a Bayes-optimal design policy according to \Cref{def:obed_policy} for at least one loss-belief pairing (see Appendix \ref{appendix:equivalence_between_visions} for proof).
\begin{theorem}[Equivalence of EFL \& EPU]
    \label{thm:equiv_BED}
    Let
    $s : \mathcal{P}_{\theta} \times \Theta \to \mathbb{R}$
    be a proper scoring rule on $\Theta$. 
    Then
    \begin{equation}
    \label{eq:bayes_opt_psr}
    \pi_d^B
    =
    \arg\min_{\pi_d \in \Pi_d}
    \mathbb{E}_{p(\theta)p(\y | \theta;\pi_d)}
    \big[
    s\big(p_\theta(\cdot | h_T), \theta\big)
    \big]
    \end{equation}
    is a Bayes-optimal design policy for the belief model $p(\theta)p(\y | \theta;\pi_d)$ and for any loss $\ell \in \mathcal{L}$, where
    \begin{equation}
    \label{eq:loss_set}
    \mathcal{L}
    =
    \Big\{
    \ell :
    \ell\big(\tilde{\pi}_a(q_\theta), \theta\big)
    =
    s(q_\theta, \theta),
    \;
    \forall q_\theta \in \mathcal{P}_{\theta},
    \theta \in \Theta
    \Big\}, \quad 
    \tilde{\pi}_a(q_\theta)
    =
    \arg\min_{a \in \mathcal{A}}
    \mathbb{E}_{\theta \sim q_\theta}
    \big[
    \ell(a, \theta)
    \big].
    \end{equation}
    Furthermore, the set $\mathcal{L}$ is never empty, and any lower-bounded loss function $\ell$ induces a corresponding scoring rule $s$ that is proper, though not necessarily strictly proper.
\end{theorem}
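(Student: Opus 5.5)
The plan is to show that, for any $\ell \in \mathcal{L}$, the EFL objective and the EPU objective coincide pointwise in $\pi_d$; the two arg-mins are then identical. Fix $\ell\in\mathcal{L}$ and a data realisation $h_T$, and write $q_\theta = p(\theta\mid h_T)$. By definition of $\tilde{\pi}_a$, the inner minimum in \Cref{def:obed_policy} equals
\begin{equation*}
\min_{a}\E_{q_\theta}[\ell(a,\theta)] = \E_{q_\theta}[\ell(\tilde\pi_a(q_\theta),\theta)].
\end{equation*}
The defining property of $\mathcal{L}$ turns this into $\E_{q_\theta}[s(q_\theta,\theta)] = h_s[q_\theta]$. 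Taking the expectation over $p(h_T;\pi_d)$ and using $p(h_T;\pi_d)p(\theta\mid h_T)=p(\theta)p(\y\mid\theta;\pi_d)$ gives $\mathrm{EFL}^B_\ell(\pi_d)=\mathrm{EPU}_s(\pi_d)$ for every $\pi_d$. Hence the minimiser in \Cref{eq:bayes_opt_psr} is exactly $\pi_d^B$ for the loss $\ell$.

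To show $\mathcal{L}\neq\emptyset$, I will construct a canonical loss. Take the action space to be $\mathcal{A}=\mathcal{P}_\theta$ and set $\ell(a,\theta):=s(a,\theta)$. Propriety of $s$ means $q_\theta$ itself minimises $a\mapsto\E_{q_\theta}[s(a,\theta)]$, so we may choose $\tilde\pi_a(q_\theta)=q_\theta$. With this choice, $\ell(\tilde\pi_a(q_\theta),\theta)=s(q_\theta,\theta)$ holds identically. When $s$ is only proper and not strictly proper, the arg-min in \Cref{eq:loss_set} is set-valued. I will handle this by interpreting $\tilde\pi_a$ as a selection from the arg-min and choosing the identity selection. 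This is legitimate because the first paragraph only uses the fact that $\tilde\pi_a(q_\theta)$ attains the minimum, so any minimising selection gives the same value $h_s$.

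For the converse, let $\ell$ be bounded below and define $s(q,\theta):=\ell(\tilde\pi_a(q),\theta)$, with $\tilde\pi_a(q)$ a Bayes act for $q$. Propriety follows by comparison with $p$'s own Bayes act:
\begin{equation*}
\E_p[s(q,\theta)]=\E_p[\ell(\tilde\pi_a(q),\theta)]\ \ge\ \min_a\E_p[\ell(a,\theta)]=\E_p[s(p,\theta)].
\end{equation*}
Strictness can fail whenever two distinct beliefs share a Bayes act, for example the absolute loss, where all beliefs with a common median share the same act. The lower bound on $\ell$ is what makes the expectations and the infimum well defined and keeps them from being $-\infty$.

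The main obstacle I expect is existence and measurability of the Bayes act. The infimum may fail to be attained, and the map $q\mapsto\tilde\pi_a(q)$, composed with $h_T\mapsto p(\theta\mid h_T)$, has to be measurable for $s$ and the EPU to make sense. I will first assume, as in Appendix~\ref{app:action_policy_lifting}, that minimisers exist and admit a measurable selection via a measurable selection theorem. If attainment fails, I will fall back to $\epsilon$-optimal acts: this gives only an $\epsilon$-proper score and the equality $\mathrm{EFL}^B_\ell=\mathrm{EPU}_s$ up to $\epsilon$, and I will let $\epsilon\to0$.
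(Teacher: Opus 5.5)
Your proposal is correct and follows essentially the same route as the paper's proof. You identify $\mathrm{EFL}^B_\ell(\pi_d)$ with $\mathrm{EPU}_s(\pi_d)$ pointwise via the Bayes act and the defining property of $\mathcal{L}$; you prove propriety of the induced score $s(q,\theta)=\ell(\tilde\pi_a(q),\theta)$ by comparing with the Bayes act of the true belief; and you show $\mathcal{L}\neq\emptyset$ by taking $\mathcal{A}=\mathcal{P}_\theta$ and $\ell=s$. Choosing the identity selection in the non-strictly-proper case and checking attainment and measurable selection are refinements the paper leaves implicit; the $\epsilon\to0$ fallback is not needed, since the statement already assumes the arg-min defining $\tilde\pi_a$ is attained.
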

In order to make explicit the connection to EUR objectives, we note that if the scoring rule does not directly depend on $\pi_d$ or $\y$, the expected uncertainty of the prior $\E_{p(\y ; \pi_d)}\left[h_s[p(\theta)]\right] = h_s[p(\theta)]$ is constant in $\pi_d$, therefore EUR objectives are equivalent to the EPU objectives considered in \Cref{thm:equiv_BED}. 
The set of possible EFL and EPU objectives is thus strictly greater than EURs.
The EIG itself is equivalent, up to a constant offset, to an EPU objective with the log score $s(q_\theta(\cdot), \theta) = - \log q_\theta(\theta)$, for which the 
optimal action turns out to be the Bayesian posterior itself. 

\looseness=-1
\Cref{thm:equiv_BED} thus reconciles the two BED paradigms as different representations of the same underlying decision problem. 
The key distinctions are thus first in 
shifting the burden of problem specification from directly choosing uncertainty measures to specifying an explicit downstream task and loss.
This makes it easier to tailor objectives for specific downstream tasks, such as targeting the point estimation of some quantity of interest, or incorporating in real-world decision-making, such as business decisions or choosing medical treatments. It also makes clear how choices of uncertainty measures in the EUR framework implicitly target particular downstream tasks and losses, e.g.~EIG targets log loss on probabilistic predictions, expected variance reduction targets point estimation with squared error loss.

\looseness=-1
Second, there are critical practical distinctions between the viewpoints in terms of their computational viability, with the EFL formulation leading to the general purpose singly intractable problem formulation we introduced in~\Cref{eqn:joint_opt_problem}.
Interestingly, this formulation recovers existing variational formulations~\cite{foster_variational_2019,barber_information_2003} as a special case when using an EFL that is equivalent to the EIG (as here the optimal action is the posterior, see Appendix \ref{app:variational_recovery}).
However, it critically generalises to other action spaces and losses, and it turns out the action policies we need to learn for other downstream losses are often much simpler than those required for the EIG.
In particular, when the required downstream actions correspond to point estimates or explicit decisions rather than distributions, then the action policy is a simple functional mapping to single values, avoiding the need to implicitly estimate or approximate the posterior.
For example, if using squared error loss on an estimation task, then the action policy only needs to learn to map from data to the posterior mean.
This can substantially simplify the policy training.
Indeed we observe in our experiments (\Cref{exp}) that suitably chosen predictive losses indirectly achieve higher EIG values than direct EIG optimisation approaches.


\clearpage
\section{\textsc{Action-BED}}
\label{meth}

\begin{wrapfigure}{r}{0.53\columnwidth}
\vspace{-20pt}
\begin{tcolorbox}
\begin{algorithmic}[1]
\vspace{-4pt}
\Require Policies $\pi_d^\phi, \pi_a^\psi$, initialisations $\phi_0, \psi_0$
\For{$k = 0,1,\dots,K-1$}
    \State $\theta^{1:B}, \varepsilon_{1:T}^{1:B} \sim p(\theta)q(\varepsilon_{1:T})$ 
    \State $g^\phi_k, g_k^\psi \gets \widehat{\nabla_{\phi, \psi}} {\text{EFL}}(\phi_k, \psi_k)$  (\cref{eqn:mc_gradient_est})
    \State $\phi_{k+1}, \psi_{k+1}  \gets \textsc{Update}(\phi_k, g^\phi_k, \psi_k, g^\psi_k)$
\EndFor
\State \Return $\phi_K, \psi_K$
\end{algorithmic}
\captionsetup{type=algorithm}
\captionof{algorithm}{\textsc{Action-BED}}
\label{alg:main_alg}
\end{tcolorbox}
\vspace{-20pt}
\end{wrapfigure}
We now show how our \textsc{Action-BED} framework, based on the joint optimisation of the EFL with respect to action and design policies as per \Cref{eqn:joint_opt_problem}, can be realised as a practical end-to-end algorithm for task-driven BED.
Specifically, building on the deep adaptive design (DAD) approach of~\citep{foster_deep_2021,ivanova_implicit_2021}, we propose to directly parameterise both policies using deep neural networks,  denoted $\pi_d^\phi$ and $\pi_a^\psi$, with parameters $\phi\in\Phi$ and $\psi\in\Psi$.
We then train these networks using end-to-end stochastic gradient descent on the expected future loss for rollouts drawn from the model, as shown in \Cref{alg:main_alg}. Empirically, it is often helpful to initialise $\psi \in \Psi$ by pre-training using a fixed random design policy (Appendix~\ref{app:downstream_warmup}), following which our algorithm remains unchanged.
For example, if $p(\y|\theta; \pi_d^\phi)$ is reparameterisable, we can optimise this objective using pathwise gradients \citep{mohamed_monte_2020} obtained via reparameterisation \citep{kingma_autoencoding_2013}:
\begin{equation}
    \label{eqn:EFL_reparam_gradient}
    \nabla_{\phi, \psi} \text{EFL}(\phi,\psi)
    =
    \mathbb{E}_{p(\theta)q(\varepsilon_{1:T})}
    \left[ \nabla_{\phi, \psi}
        \ell\big(
        \pi_a^\psi(h_T(\theta,\varepsilon_{1:T};\pi_d^\phi)),
        \theta
        \big)
    \right],
\end{equation}
where we reparameterised $h_T = h_T(\theta,\varepsilon_{1:T};\pi_d^\phi)$ for $\varepsilon_{1:T}\sim q(\varepsilon_{1:T})$. See Appendix~\ref{app:reparam} for full details of the reparameterisation. We can then estimate these gradients unbiasedly using simple Monte Carlo:
\begin{equation}
\label{eqn:mc_gradient_est}
\widehat{\nabla_{\phi,\psi}}\,\text{EFL}(\phi,\psi)
=
\frac{1}{B} \sum_{b=1}^B\nabla_{\phi,\psi}
\ell\big(\pi_a^\psi(h_T(\theta^b,\varepsilon_{1:T}^b;\pi_d^\phi)),
\theta^b\big)
\end{equation}  
for $\theta^b \sim p(\theta), \varepsilon_{1:T}^b \sim q(\varepsilon_{1:T})$, from which stochastic gradient updates can be performed using any desired optimizer. 
If the simulator is not reparameterisable, we can resort to the REINFORCE estimator \citep{williams_simple_1992a} to estimate gradients, provided that we have access to the score function of the data generating process. These requirements are the main limitations of our approach, although we note that a very large class of BED problems satisfy at least one of these assumptions. Details on the architectures used for the policies are provided in Appendix~\ref{appendix:architectures}.



\section{Related Work}
\label{sec:related_work}

To the best of our knowledge, there are very limited prior works considering truly task-driven approaches to BED. One approach is \citet{huang_amortized_2024}, who maximise the improvement in expected utility of the optimal downstream action after observing new data. They only consider utility functions on actions living in outcome space $\mathcal{Y}$, meaning their method cannot be applied as a baseline in any of our BED experiments and is instead applied on prediction-oriented active learning settings \citep{smith_predictionoriented_2023}. Furthermore, they maximise their expected utility objective by approximating the intractable predictive distribution and explicitly solving the nested optimisation problem over downstream actions. In contrast, our approach does not require any distributional approximations and removes the nested optimisation by learning action policies. Short of being truly task-based, recent works \citep{filstroff_targeted_2024, zhong_goaloriented_2026, goaloriented, shen2025variational, kleinegesse_gradientbased_2021, chakraborty_likelihood-free_2024} consider goal-oriented experimental design, in which designs are sought to maximise the EUR in downstream quantities of interest. While this approach shifts focus from model parameters, it is still fundamentally an EUR objective and therefore inherits the associated computation challenges. Finally, \citet{kerrigan_geometric_2025} introduces a new perspective on BED, maximising the dependence between $y$ and $\theta$ by solving an optimal transport problem with a chosen cost function. The choice of cost function provides a way of tailoring designs to downstream tasks of interest, but cannot be interpreted as minimising a loss over downstream actions in a principled BDT perspective. 


In alternative contexts, \citet{huang_lossdriven_2026} introduces a task-driven framework for active learning, 
which they solve analytically when the downstream loss is a Bregman divergence. Our approach is instead singly intractable for \emph{any} downstream loss. In a Bayesian optimisation (BO) setting, \citet{neiswanger_generalizing_2022} frame classical BO acquisition functions in a loss-based perspective via generalised entropies, which they use to derive new acquisition functions for custom BO tasks. In their case, analytic solutions depend on the Gaussian process posterior, specific to the BO setting.

Our approach also shares strong connections with RL, since $\min_{a \in \mathcal{A}} \E_{p(\theta|h_T)}\left[\ell(a, \theta)\right]$ can be interpreted as a value function giving the best achievable downstream loss given the current belief state. Prior works \citep{blau_optimizing_2022, blau_cross-entropy_2023, shen_bayesian_2023, barlas_performance_2025, shen2025variational} have adopted an RL approach to BED by defining rewards as the incremental improvement in the above value function. However, this has only been done for the EIG, implicitly choosing the log-loss to define the value function, while the generalisation to any downstream loss has not yet been made. Furthermore, while variational and amortised inference approaches \citep{foster_unified_2019, dong_variational_2025, shen2025variational, huang_aline_2025, bracher_jadai_2025} learn amortised variational approximations which can be interpreted as downstream action networks (often within RL frameworks e.g. \citet{shen2025variational}), the generalisation to arbitrary downstream actions has also not yet been made. Ultimately our approach could directly benefit from inheriting RL ideas, such as reward shaping and credit assignment, in the future.

\section{Experiments}
\label{exp}

We evaluate \textbf{\textsc{Action-BED}} across four BED tasks spanning a range of complementary settings.  
Our experiments illustrate the improved performance and computational efficiency of \textsc{Action-Bed} on downstream tasks, due to our task-driven and singly intractable procedure. 



\paragraph{Baselines and evaluation protocol.}
We compare against two competitive policy-based BED baselines. \textbf{\textsc{DAD}}~\citep{foster_deep_2021} serves as the standard EIG-based policy baseline when an explicit likelihood is available, while \textbf{\textsc{iDAD}}~\citep{ivanova_implicit_2021} provides its likelihood-free counterpart. \textbf{\textsc{ALINE}} \citep{huang_aline_2025} is a gold-standard EIG-based approach using a transformer as a joint amortised design and inference network, trained using reinforcement learning with a dense reward. Additionally we include a \textbf{Random} policy which samples designs from a task-specific proposal. In order to evaluate these baselines on downstream actions, we train a design policy using the baseline approach, which is then frozen while a downstream action network is trained to minimise the downstream task loss on trajectories generated by the pre-trained design policy. Thus, the design network of each baseline is trained to maximise EIG while the downstream network is trained to minimise downstream loss by approximating the Bayes-optimal downstream action. 
Methods are evaluated in terms of downstream loss, design policy EIG bounds \citep{foster_deep_2021}, and end-to-end wall-clock GPU training time.

\paragraph{Training robustness.} To confirm that our relative performance gains are not driven by architectural choices, we matched architectures across all baselines and further repeated our experiments using smaller architectures in Appendix~\ref{appendix:light_archi}. Due to limited computational resources, our main results for the source location finding benchmark are presented for a single training seed. We therefore evaluate training stability using the smaller architectures, with results reported in Appendix~\ref{app:training_stability_seeds} and Table~\ref{tab:seed_stability_source_location}, showing our downstream performance is stable over training seeds. 


\subsection{Source Location Finding}
\label{sec:slf_exp}

\begin{wrapfigure}{r}{0.50\textwidth}
    \centering
    \vspace{-1.5em}
    \includegraphics[width=0.43\textwidth]{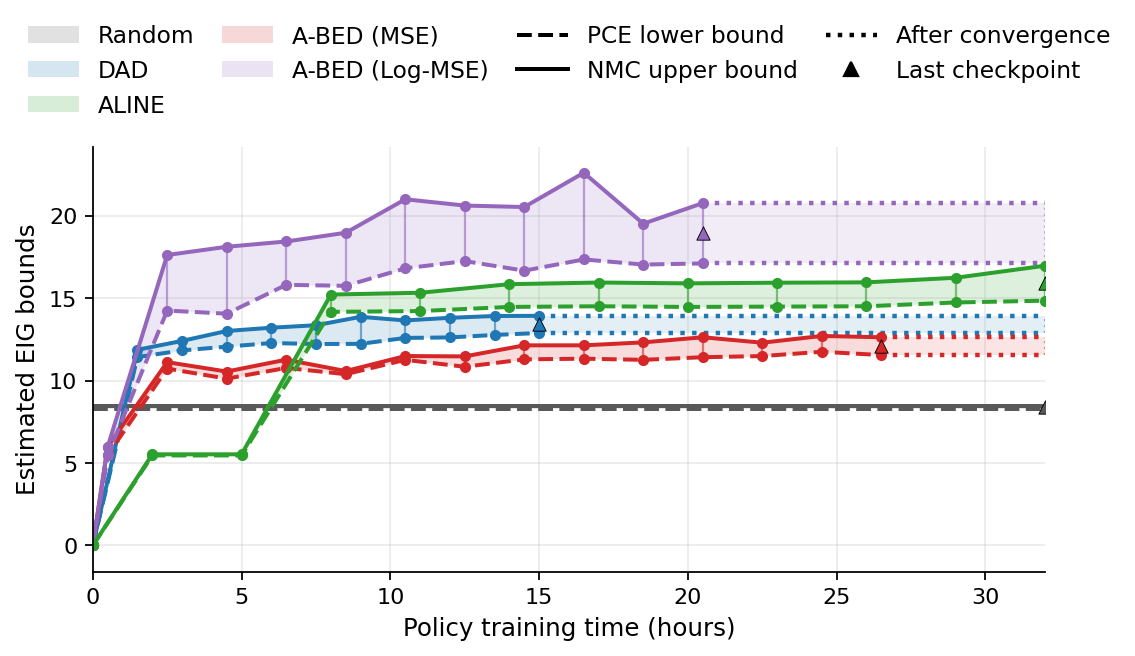}
    \caption{Comparison of EIG bounds as a function of design policy training time, until convergence.}
    \label{fig:eig_bounds_time}
    \vspace{-1.5em}
\end{wrapfigure}

The first experiment is inspired by the acoustic energy attenuation model introduced in \cite{location}. The unknown parameter $\theta \in \mathbb{R}^{2 \times 2}$ specifies the locations of two sources which emit signals that decay with distance. Over $T=30$ steps, the design policy sequentially selects locations and observes the resulting aggregate signal intensity. The goal is to learn an adaptive design strategy that enables efficient downstream prediction of source locations. 
For the downstream predictive loss, we consider the mean squared error (MSE) and the Log-MSE, shorthand for an MSE loss augmented with a logarithmic penalty, $\ell_{\text{Log-MSE}}(\theta, \hat{\theta}) = ||\theta - \hat{\theta}||^2 + \log\big(||\theta - \hat{\theta}||^2 + \epsilon\big)$, which increases incentive for precise localisation. 
Implementation, architectural, and experimental details are provided in Appendix~\ref{appendix:locationfinding}.

\begin{table}[t]
\centering
\scriptsize
\setlength{\tabcolsep}{8.0pt}
\renewcommand{\arraystretch}{1.15}

\caption{\small
\textbf{Source Location Finding}. Uncertainties are reported as one standard error over $2048$ rollouts.
Parentheses indicate the loss used to train the downstream action network.
Compute time includes training both the design and downstream action networks.
}
\label{tab:location_table}

\begin{tabular}{lcccccc}
\toprule
\multirow{2}{*}{\textbf{Method}}
& \multicolumn{4}{c}{\textbf{Performance ($\pm$ 1 s.e.)}}
& \multicolumn{1}{c}{\textbf{Compute}} \\
\cmidrule(lr){2-6}
\cmidrule(lr){7-7}
& MSE $(10^{-2})$ $\downarrow$
& Log-MSE $\downarrow$
& sPCE $\uparrow$
& sNMC $\uparrow$
& Time (min) $\downarrow$ \\
\midrule

Action-BED (MSE)
& $0.55 \pm 0.06$
& $-6.83 \pm 0.03$
& $11.63 \pm 0.04$
& $12.84 \pm 0.08$
& 1171 \\

Action-BED (Log)
& $2.8 \pm 0.4$
& $\mathbf{-10.10 \pm 0.03}$
& $\mathbf{17.14 \pm 0.11}$
& $\mathbf{21.07 \pm 0.45}$
& 1184 \\

\addlinespace[1pt]
\midrule

DAD (MSE)
& $5.6 \pm 0.6$
& $-6.35 \pm 0.04$
& $12.94 \pm 0.03$
& $16.35 \pm 0.14$
& 1214 \\

DAD (Log)
& $10.7 \pm 0.8$
& $-6.58 \pm 0.05$
& $12.94 \pm 0.03$
& $16.35 \pm 0.14$
& 1216 \\

\addlinespace[1pt]
\midrule

ALINE (MSE)
& $\mathbf{0.21 \pm 0.03}$
& $-7.58 \pm 0.05$
& $14.83 \pm 0.05$
& $16.97 \pm 0.13$
& 2962 \\

ALINE (Log)
& $0.81 \pm 0.10$
& $-8.11 \pm 0.07$
& $14.83 \pm 0.05$
& $16.97 \pm 0.13$
& 2970 \\

\addlinespace[1pt]
\midrule

Random
& $22.4 \pm 0.8$
& $-3.16 \pm 0.04$
& $8.27 \pm 0.04$
& $8.49 \pm 0.05$
& \textbf{31} \\

\bottomrule
\end{tabular}

\vspace{-2em}
\end{table}

\paragraph{Results.} 
Table~\ref{tab:location_table} shows the quantitative results on the source location finding task while qualitative design trajectories are included in Appendix~\ref{appendix:qualitative_results_lf}. We firstly observe the remarkable performance of \textsc{Action-BED} under the Log-MSE objective; we achieve both the best downstream Log-MSE error as well as state-of-the-art EIG of the design policy. We believe that this is possible because the Log-MSE remains sensitive to very accurate predictions and continues to reward design policies that identify the sources more precisely, whereas the sPCE contrastive lower bound saturates at high EIG values therefore becoming harder to train to. Utilising this informative loss function and sidestepping the challenges of EIG-based training is made directly possible by our approach. Under the MSE loss, \textsc{ALINE} performs best however requires around 2.5 times the compute to achieve such a result. A plausible explanation is that its dense training rewards help identify the latent sources more efficiently, while such guidance is absent in our setting. \textsc{Action-BED} significantly outperforms \textsc{DAD}, which uses a similar amount of compute. 
In terms of computational efficiency, we show in \Cref{fig:eig_bounds_time} the attained EIG of the design policy versus training time, where \textsc{Action-BED} (Log) achieves competitive EIG values faster than competing approaches.

\paragraph{Additional ablations and evaluation.} 
We provide several additional analyses in Appendix~\ref{app:amortised_posterior_actions} and~\ref{appendix:additional_results_lf}. We first evaluate the decision-relevant quality of the learned design policies independently of their associated downstream action networks by applying a common posterior-based decision procedure to all methods. This isolates the informativeness of the acquired histories themselves and confirms the objective-dependent ranking observed in the main results. Moreover, it highlights the substantial deployment-time cost avoided by amortised decision-making with \textsc{Action-BED}; see Tables~\ref{tab:deployment_fixed_policy_light} and~\ref{tab:posterior_action_runtime}. We then investigate the coupling between the jointly learned design policy and downstream action policy in \textsc{Action-BED} by evaluating each design and action network cross-pairing; see Table~\ref{tab:cross_policy_coupling_normalized}. 
Downstream performance degrades substantially under such policy swaps, showing that the action network exploits not only the observed outcomes but also information encoded in the adaptive design trajectory itself. This coupling is strongest for the most accurate policies, whose designs concentrate efficiently near the latent sources. Second, we show for \textsc{Action-BED} that retraining the action policy on a fixed pre-trained design policy leads to a mild downstream degradation compared to joint training, suggesting that the joint dynamics beneficially co-adapt the two policies; see Table~\ref{tab:native_vs_retrained_abed}. Finally, we ablate the warmstart used before joint optimisation; see Table~\ref{tab:warmstart_abed}. Removing it slightly worsens downstream performance, likely because early low-quality action predictions provide a noisier training signal to the design policy, perturbing the joint optimisation dynamics. In this benchmark, warmstarting is therefore useful and incurs only a small computational overhead.

\subsection{Dynamical Systems}

We next consider stochastic single- and double-pendulum systems \citep{Belousov2019, iqbal_nesting_2024}. In these tasks, the system evolves over time according to a controlled SDE depending on some unknown physical parameters. The continuous dynamics are discretised using Euler-Maruyama over $T=50$ steps with $dt=0.05$. At each step an agent influences the dynamics by applying a torque $\xi_t$. The goal is to learn an adaptive design policy that enables accurate downstream prediction of the latent physical parameters $\theta \in \Theta$ from the observed trajectories. We evaluate the prediction with several downstream objectives---MSE, log-MSE, and a weighted MSE over the components of $\theta$---the latter requiring the design policy to prioritise parameter regions of particular interest. These tasks pose three main challenges: the data-generating process is non-exchangeable due to Markovian dynamics, designs must satisfy physical constraints, and longer rollouts induce complex feedback between interventions and observations.

\paragraph{Stochastic Pendulum. } This task comprises a single pendulum with state \(y_t=(q_t,\dot q_t)\) and unknown parameters \(\theta \in \Theta \subseteq \mathbb{R}^3\). At each step, the agent applies a bounded torque \(\xi_t \in [-1,1]\) to control the system. Full experimental details are provided in Appendix~\ref{appendix:stopendulum}.
Table~\ref{tab:pendulum_downstream_losses} confirms that loss-driven acquisition improves the corresponding downstream objective: each variant performs best under the loss used for training. Interestingly, we observe again that the design policy of \textsc{Action-BED} (Log) achieves the highest EIG, outperforming the baselines which directly optimise EIG. We also observe that our method requires substantially less compute than the baselines. 



\begin{table}[t]
\centering
\scriptsize
\setlength{\tabcolsep}{4.5pt}
\renewcommand{\arraystretch}{1.15}

\caption{
\small \textbf{Stochastic Single Pendulum}. Uncertainties are reported as one standard error over $2048$ rollouts.
Compute time includes training both the design and downstream action networks.
}
\label{tab:pendulum_downstream_losses}

\begin{tabular}{lcccccc}
\toprule
\multirow{2}{*}{\textbf{Method}}
& \multicolumn{5}{c}{\textbf{Performance ($\pm$ 1 s.e.)}}
& \multicolumn{1}{c}{\textbf{Compute}} \\
\cmidrule(lr){2-6}
\cmidrule(lr){7-7}
& MSE $(10^{-3})$ $\downarrow$
& Log-MSE $\downarrow$
& W. MSE $(10^{-3})$ $\downarrow$
& sPCE $\uparrow$
& sNMC $\uparrow$
& Time (min) $\downarrow$ \\
\midrule

Action-BED (MSE)
& $\mathbf{6.00 \pm 0.14}$
& $-6.60 \pm 0.05$
& $4.90 \pm 0.13$
& $3.68 \pm 0.04$
& $3.71 \pm 0.04$
& 240 \\

Action-BED (Log)
& $7.20 \pm 0.17$
& $\mathbf{-6.75 \pm 0.05}$
& $4.80 \pm 0.11$
& $\mathbf{3.99 \pm 0.04}$
& $\mathbf{4.00 \pm 0.04}$
& 287 \\

Action-BED (W. MSE)
& $9.80 \pm 0.25$
& $-6.07 \pm 0.04$
& $\mathbf{3.50 \pm 0.09}$
& $3.33 \pm 0.04$
& $3.36 \pm 0.04$
& 290 \\

\addlinespace[1pt]
\midrule

DAD (MSE)
& $7.20 \pm 0.17$
& $-6.20 \pm 0.04$
& $5.00 \pm 0.14$
& $3.82 \pm 0.04$
& $3.88 \pm 0.04$
& 670 \\

DAD (Log)
& $8.10 \pm 0.19$
& $-6.49 \pm 0.05$
& $5.20 \pm 0.14$
& $3.82 \pm 0.04$
& $3.88 \pm 0.04$
& 709 \\

DAD (W. MSE)
& $7.30 \pm 0.17$
& $-6.44 \pm 0.05$
& $4.40 \pm 0.12$
& $3.82 \pm 0.04$
& $3.88 \pm 0.04$
& 671 \\

\addlinespace[1pt]
\midrule

ALINE (MSE)
& $23.50 \pm 0.51$
& $-5.43 \pm 0.02$
& $7.59 \pm 0.12$
& $2.23 \pm 0.06$
& $2.24 \pm 0.06$
& 2120 \\

ALINE (Log)
& $27.53 \pm 0.59$
& $-5.69 \pm 0.02$
& $9.89 \pm 0.17$
& $2.23 \pm 0.06$
& $2.24 \pm 0.06$
& 2141 \\

ALINE (W. MSE)
& $23.58 \pm 0.50$
& $-5.61 \pm 0.02$
& $7.06 \pm 0.11$
& $2.23 \pm 0.06$
& $2.24 \pm 0.06$
& 2122 \\

\addlinespace[1pt]
\midrule

Random
& $29.0 \pm 1.2$
& $-4.25 \pm 0.05$
& $12.6 \pm 1.0$
& $1.42 \pm 0.03$
& $1.50 \pm 0.03$
& \textbf{19} \\

\bottomrule
\end{tabular}

\vspace{-2em}
\end{table}

\paragraph{Stochastic Double-Pendulum. } The double-pendulum task is substantially harder as its dynamics are nonlinear, coupled, and potentially chaotic, making trajectories highly sensitive to designs. 
The state \(y_t \in \mathbb{R}^4\) contains the two joint angles and angular velocities, while \(\theta \in \Theta \subseteq \mathbb{R}^4\) contains the unknown masses and link lengths. Designs are bounded joint torques \(\xi_t=(\xi_t^1,\xi_t^2) \in [-4,4]\times[-2,2]\). Details are given in Appendix~\ref{appendix:doublependulum}. In this setting, Table~\ref{tab:double_pendulum_downstream_losses} again shows that \textsc{Action-BED} achieves considerably improved downstream prediction for reduced computational cost. In this task, \textsc{DAD} appears to achieve marginally higher EIG although EIG bounds overlap with \textsc{Action-BED}.


\begin{table}[t]
\centering
\scriptsize
\setlength{\tabcolsep}{4.0pt}
\renewcommand{\arraystretch}{1.15}

\begin{threeparttable}
\caption{
\small \textbf{Stochastic Double Pendulum}. Uncertainties are reported as one standard error over $2048$ rollouts.
Compute time includes training both the design and downstream action networks.
}
\label{tab:double_pendulum_downstream_losses}

\begin{tabular}{lcccccc}
\toprule
\multirow{2}{*}{\textbf{Method}}
& \multicolumn{5}{c}{\textbf{Performance ($\pm$ 1 s.e.)}}
& \multicolumn{1}{c}{\textbf{Compute}} \\
\cmidrule(lr){2-6}
\cmidrule(lr){7-7}
& MSE $(10^{-4})$ $\downarrow$
& Log-MSE $\downarrow$
& W. MSE $(10^{-4})$ $\downarrow$
& sPCE $\uparrow$
& sNMC $\uparrow$
& Time (min) $\downarrow$ \\
\midrule

Action-BED (MSE)
& $\mathbf{1.11 \pm 0.03}$
& $-10.21 \pm 0.04$
& $1.64 \pm 0.04$
& $10.73 \pm 0.05$
& $12.71 \pm 0.19$
& 501 \\

Action-BED (Log)
& $1.31 \pm 0.03$
& $\mathbf{-10.32 \pm 0.04}$
& $1.75 \pm 0.05$
& $10.81 \pm 0.12$
& $14.29 \pm 0.56$
& 504 \\

Action-BED (W. MSE)
& $1.14 \pm 0.03$
& $-10.28 \pm 0.04$
& $\mathbf{1.51 \pm 0.05}$
& $10.65 \pm 0.04$
& $12.10 \pm 0.16$
& 504 \\

\addlinespace[1pt]
\midrule

DAD (MSE)
& $2.10 \pm 0.06$
& $-9.50 \pm 0.03$
& $2.47 \pm 0.07$
& $\mathbf{11.37 \pm 0.03}$
& $\mathbf{14.79 \pm 0.25}$
& 934 \\

DAD (Log)
& $2.43 \pm 0.10$
& $-9.51 \pm 0.03$
& $2.79 \pm 0.08$
& $\mathbf{11.37 \pm 0.03}$
& $\mathbf{14.79 \pm 0.25}$
& 940 \\

DAD (W. MSE)
& $2.13 \pm 0.06$
& $-9.47 \pm 0.03$
& $2.46 \pm 0.07$
& $\mathbf{11.37 \pm 0.03}$
& $\mathbf{14.79 \pm 0.25}$
& 933 \\

\addlinespace[1pt]
\midrule

ALINE (MSE)
& $11.21 \pm 0.34$
& $-8.46 \pm 0.02$
& $15.12 \pm 0.04$
& $9.28 \pm 0.07$
& $9.78 \pm 0.14$
& 3013 \\

ALINE (Log)
& $19.21 \pm 0.51$
& $-8.53 \pm 0.03$
& $23.29 \pm 0.07$
& $9.28 \pm 0.07$
& $9.78 \pm 0.14$
& 3062 \\

ALINE (W. MSE)
& $11.81 \pm 0.33$
& $-8.52 \pm 0.03$
& $14.02 \pm 0.04$
& $9.28 \pm 0.07$
& $9.78 \pm 0.14$
& 3007 \\

\addlinespace[1pt]
\midrule

Random
& $20.0 \pm 0.9$
& $-6.84 \pm 0.05$
& $48.0 \pm 1.3$
& $8.26 \pm 0.06$
& $8.41 \pm 0.08$
& \textbf{58} \\

\bottomrule
\end{tabular}

\end{threeparttable}
\vspace{-1.2em}
\end{table}

\subsection{MNIST Masked Classification}
Finally, we consider a sequential masked-classification task adapted from the experimental setting in \citep{iollo_bayesian_2025}. Each episode begins with an unobserved image \(\theta \in \mathbb{R}^{28 \times 28}\) from MNIST \citep{lecun1998gradient}, together with its class label \(z \in \{0,\ldots,9\}\). At each of the \(T=5\) acquisition steps, the policy selects the corner \(\xi \in [1,28]^2\) of a local \(5\times5\) patch and observes a noisy measurement of that patch, $y = A_{\xi}\theta + \eta,$ with $\eta \sim \mathcal{N}(0,\sigma I_d)$. Here, \(A_{\xi}\theta\) denotes the masking operator induced by \(\xi\), and applied to the image. The goal is to select the patches most informative for downstream digit classification. This setting is challenging because acquisition is driven by the class label, whereas measurements arise from the latent image. The patch likelihood \(p(y | \theta,\xi)\) is explicit, but its label-conditioned counterpart \(p(y | z,\xi)\) is only implicitly induced by the MNIST class distribution. This problem is also strongly information-constrained, since the downstream policy must infer the class from only a few noisy local measurements in a high-dimensional space. Additional details are provided in Appendix~\ref{appendix:mnist_classif_exp}.

\textsc{Action-BED} naturally targets this implicit problem by jointly training a design policy and downstream classifier to minimise the cross-entropy loss:
\begin{equation*}
    \min_{\phi, \psi} \mathbb{E}_{p(z)p(\theta|z)p(\y|\theta; \pi_d^\phi)}\left[\ell_{\text{CE}}\left(\pi_a^\psi(\cdot | h_T), z\right)\right]
\end{equation*}
where $\ell_{\text{CE}}\left(\pi_a^\psi(\cdot | h_T), z\right) = - \log \pi_a^\psi(z | h_T)$ is the cross-entropy loss and $\pi_a^\psi(\cdot | h_T)$ is a probabilistic classifier. Training to this objective only requires simulating $z, \theta$ from the empirical MNIST dataset and reparameterising the simulation of $p(\y | \theta; \pi_d)$. By contrast, \textsc{DAD} must target EIG in the latent image \(\theta\), since the outcomes conditional on class label $p(y | z, \xi)$ do not have explicit likelihood. \textsc{iDAD} is able to target the EIG in the label $z$, thereby providing a more competitive baseline. 
\textsc{ALINE} does not naturally apply in this setting due to the hierarchical likelihood and classification target, we therefore adapted the setup of the inference network accordingly. 
Quantitative and qualitative results are reported respectively in Table~\ref{tab:mnist_classification_task} and \Cref{fig:four_policy_rollouts_mnist}.
\textsc{Action-BED} achieves the best downstream performance across the board, demonstrating the lowest cross-entropy loss and highest classification accuracy amongst all approaches, whilst requiring the least training time. These results demonstrate that \textsc{Action-BED} easily maximises downstream performance in implicit models without additional modifications, critic networks, or internal posterior approximations.

\begin{table}[t]
\caption{\small \textbf{MNIST Masked Classification.} Uncertainties are reported as one standard error across evaluation batches. We report cross-entropy and classification accuracy on both the training and test datasets. GPU time includes training of both the design and downstream classification networks.}
\label{tab:mnist_classification_task}
\centering
\scriptsize
\setlength{\tabcolsep}{6pt}
\renewcommand{\arraystretch}{1.12}
\begin{tabular}{@{}lccccc@{}}
\toprule
& \multicolumn{2}{c}{\textbf{Train dataset}}
& \multicolumn{2}{c}{\textbf{Test dataset}}
& \textbf{Compute} \\
\cmidrule(lr){2-3}
\cmidrule(lr){4-5}
\cmidrule(l){6-6}
\textbf{Method}
& Cross-entropy
& Accuracy (\%)
& Cross-entropy
& Accuracy (\%)
& GPU time (min) \\
\midrule

Action-BED
& $\mathbf{0.033 \pm 0.001}$
& $\mathbf{98.87 \pm 0.04}$
& $\mathbf{0.097 \pm 0.007}$
& $\mathbf{97.42 \pm 0.16}$
& 9 \\

\midrule

DAD
& $0.350 \pm 0.004$
& $83.32 \pm 0.04$
& $0.546 \pm 0.009$
& $81.83 \pm 0.16$
& 53 \\

iDAD
& $0.099 \pm 0.002$
& $96.49 \pm 0.08$
& $0.164 \pm 0.008$
& $95.10 \pm 0.21$
& 12 \\

ALINE
& $ 0.256 \pm 0.008 $
& $ 92.13 \pm 0.08 $
& $ 0.271 \pm 0.01 $
& $ 91.59 \pm 0.10 $
& 357 \\

\midrule

Random
& $0.786 \pm 0.015$
& $71.78 \pm 0.87$
& $0.797 \pm 0.015$
& $71.40 \pm 0.86$
& \textbf{5} \\

\bottomrule
\end{tabular}
\vspace{-1.2em}
\end{table}


\section{Conclusion}
\label{conclusion}

In this work we introduce \textsc{Action-BED}, a practical algorithm for task-driven BED, resulting from the simple principle of minimising an expected future loss over downstream actions. We show that this approach is grounded in first principles decision theory and we demonstrate an equivalence to existing EUR approaches. Our algorithm, which jointly learns a design network and downstream action network, is singly intractable, naturally implicit and directly targets downstream task performance. As a result, we demonstrate improved downstream performance compared to baselines which directly optimise for EIG, while requiring significantly less training effort. Our approach is applicable to BED problems with reparameterisable simulators or access to gradients of the log-likelihood of the observation model, which represents a very broad class of BED problems. Advancing the efficiency of BED methods for downstream task performance has clear societal benefits by enabling better decision making and downstream predictions whenever these actions are informed by experimental data, for instance in medical diagnoses. While experimental data can sometimes be collected with malicious intent, we believe the potential benefits of our approach outweigh the risks of misuse. 

\section*{Acknowledgements}

Tom Rossa is supported by the EPSRC CDT in Statistics and
Machine Learning (EP/Y034813/1). 
Angus Phillips is supported by the EPSRC CDT in Modern Statistics and Statistical Machine Learning (EP/S023151/1).
Tom Rainforth is supported by the EPSRC grant EP/Y037200/1.

\newpage
\bibliographystyle{apalike}
\bibliography{references}

@book{Savage1951,
  author    = {Savage, L. J.},
  title     = {The theory of statistical decision},
  publisher = {Journal of the American Statistical Association},
  year      = {1951},
  volume    = {46},  
  page      = {55–67},
}

@article{Savage1971Elicitation,
  author    = {Leonard J. Savage},
  title     = {Elicitation of Personal Probabilities and Expectations},
  journal   = {Journal of the American Statistical Association},
  year      = {1971},
  volume    = {66},
  number    = {336},
  pages     = {783--801},
  doi       = {10.1080/01621459.1971.10482360}
}

@inproceedings{bickfordsmith2025,
  title        = {Rethinking aleatoric and epistemic uncertainty},
  author       = {Bickford Smith, Freddie and Kossen, Jannik and Trollope, Eleanor and Van Der Wilk, Mark and Foster, Adam and Rainforth, Tom},
  booktitle    = {Proceedings of the 42nd International Conference on Machine Learning},
  series       = {Proceedings of Machine Learning Research},
  volume       = {267},
  pages        = {4345--4359},
  year         = {2025},
  publisher    = {PMLR}
}

@misc{goaloriented,
      title={Goal-Oriented Sequential Bayesian Experimental Design for Causal Learning}, 
      author={Zheyu Zhang and Jiayuan Dong and Jie Liu and Xun Huan},
      year={2025},
      eprint={2507.07359},
      archivePrefix={arXiv},
      primaryClass={cs.LG},
      url={https://arxiv.org/abs/2507.07359}, 
}

@inproceedings{lee2019settransformer,
  title={Set Transformer: A Framework for Attention-based Permutation-Invariant Neural Networks},
  author={Lee, Juho and Lee, Yoonho and Kim, Jungtaek and Kosiorek, Adam R. and Choi, Seungjin and Teh, Yee Whye},
  booktitle={Proceedings of the 36th International Conference on Machine Learning},
  year={2019}
}

@InProceedings{tnp_ashman24a,
  title = 	 {In-Context In-Context Learning with Transformer Neural Processes},
  author =       {Ashman, Matthew and Diaconu, Cristiana and Weller, Adrian and Turner, Richard E.},
  booktitle = 	 {Proceedings of the 6th Symposium on Advances in Approximate Bayesian Inference},
  pages = 	 {1--29},
  year = 	 {2024},
  editor = 	 {Antorán, Javier and Naesseth, Christian A.},
  volume = 	 {253},
  series = 	 {Proceedings of Machine Learning Research},
  month = 	 {21 Jul},
  publisher =    {PMLR},
  url = 	 {https://proceedings.mlr.press/v253/ashman24a.html}
}

@inproceedings{Paszke2019PyTorch,
  author    = {Paszke, Adam and Gross, Sam and Massa, Francisco and Lerer, Adam and Bradbury, James and Chanan, Gregory and Killeen, Trevor and Lin, Zeming and Gimelshein, Natalia and Antiga, Luca and Desmaison, Alban and Kopf, Andreas and Yang, Edward and DeVito, Zachary and Raison, Martin and Tejani, Alykhan and Chilamkurthy, Sasank and Steiner, Benoit and Fang, Lu and Bai, Junjie and Chintala, Soumith},
  title     = {PyTorch: An Imperative Style, High-Performance Deep Learning Library},
  booktitle = {Advances in Neural Information Processing Systems 32 (NeurIPS)},
  year      = {2019},
  editor    = {Wallach, Hanna and Larochelle, Hugo and Beygelzimer, Alina and d'Alch{\'e}-Buc, Florence and Fox, Edward and Garnett, Roman},
  pages     = {8024--8035}
}

@inproceedings{zaheer2017deepsets,
  author    = {Manzil Zaheer and
               Satwik Kottur and
               Siamak Ravanbakhsh and
               Barnab{\'a}s P{\'o}czos and
               Ruslan Salakhutdinov and
               Alexander J. Smola},
  title     = {Deep Sets},
  booktitle = {Advances in Neural Information Processing Systems 30 (NeurIPS 2017)},
  editor    = {I. Guyon and
               U. V. Luxburg and
               S. Bengio and
               H. Wallach and
               R. Fergus and
               S. Vishwanathan and
               R. Garnett},
  pages     = {3391--3401},
  year      = {2017},
  publisher = {Curran Associates, Inc.}
}

@article{hochreiter1997lstm,
  author  = {Hochreiter, Sepp and Schmidhuber, J{\"u}rgen},
  title   = {Long Short-Term Memory},
  journal = {Neural Computation},
  volume  = {9},
  number  = {8},
  pages   = {1735--1780},
  year    = {1997},
  publisher = {MIT Press}
}

@article{lecun1998gradient,
  title   = {Gradient-Based Learning Applied to Document Recognition},
  author  = {LeCun, Yann and Bottou, L{\'e}on and Bengio, Yoshua and Haffner, Patrick},
  journal = {Proceedings of the IEEE},
  volume  = {86},
  number  = {11},
  pages   = {2278--2324},
  year    = {1998},
}

@ARTICLE{location,
  author={Xiaohong Sheng and Yu-Hen Hu},
  journal={IEEE Transactions on Signal Processing}, 
  title={Maximum likelihood multiple-source localization using acoustic energy measurements with wireless sensor networks}, 
  year={2005},
  volume={53},
  number={1},
  pages={44-53},
  doi={10.1109/TSP.2004.838930}}

@inproceedings{Belousov2019,
  author    = {Belousov, Boris and Abdulsamad, Hany and Schultheis, Moritz and Peters, Jan},
  title     = {Belief Space Model Predictive Control for Approximately Optimal System Identification},
  booktitle = {Proceedings of the Multidisciplinary Conference on Reinforcement Learning and Decision Making (RLDM)},
  year      = {2019}
}

@inproceedings{blau_optimizing_2022,
	title = {Optimizing {Sequential} {Experimental} {Design} with {Deep} {Reinforcement} {Learning}},
	issn = {2640-3498},
	url = {https://proceedings.mlr.press/v162/blau22a.html},
	language = {en},
	urldate = {2024-11-20},
	booktitle = {Proceedings of the 39th {International} {Conference} on {Machine} {Learning}},
	publisher = {PMLR},
	author = {Blau, Tom and Bonilla, Edwin V. and Chades, Iadine and Dezfouli, Amir},
	month = jun,
	year = {2022},
	pages = {2107--2128},
}

@inproceedings{foster_deep_2021,
	title = {Deep {Adaptive} {Design}: {Amortizing} {Sequential} {Bayesian} {Experimental} {Design}},
	issn = {2640-3498},
	shorttitle = {Deep {Adaptive} {Design}},
	url = {https://proceedings.mlr.press/v139/foster21a.html},
	language = {en},
	urldate = {2024-11-20},
	booktitle = {Proceedings of the 38th {International} {Conference} on {Machine} {Learning}},
	publisher = {PMLR},
	author = {Foster, Adam and Ivanova, Desi R. and Malik, Ilyas and Rainforth, Tom},
	month = jul,
	year = {2021},
	pages = {3384--3395},
}

@inproceedings{ivanova_implicit_2021,
	title = {Implicit {Deep} {Adaptive} {Design}: {Policy}-{Based} {Experimental} {Design} without {Likelihoods}},
	volume = {34},
	shorttitle = {Implicit {Deep} {Adaptive} {Design}},
	url = {https://papers.neurips.cc/paper_files/paper/2021/hash/d811406316b669ad3d370d78b51b1d2e-Abstract.html},
	urldate = {2024-11-20},
	booktitle = {Advances in {Neural} {Information} {Processing} {Systems}},
	publisher = {Curran Associates, Inc.},
	author = {Ivanova, Desi R and Foster, Adam and Kleinegesse, Steven and Gutmann, Michael U. and Rainforth, Thomas},
	year = {2021},
	pages = {25785--25798},
}

@article{rainforth_modern_2024,
	title = {Modern {Bayesian} {Experimental} {Design}},
	volume = {39},
	issn = {0883-4237, 2168-8745},
	url = {https://projecteuclid.org/journals/statistical-science/volume-39/issue-1/Modern-Bayesian-Experimental-Design/10.1214/23-STS915.full},
	doi = {10.1214/23-STS915},
	number = {1},
	urldate = {2024-11-20},
	journal = {Statistical Science},
	publisher = {Institute of Mathematical Statistics},
	author = {Rainforth, Tom and Foster, Adam and Ivanova, Desi R. and Smith, Freddie Bickford},
	month = feb,
	year = {2024},
	pages = {100--114},
}

@inproceedings{rainforth_nesting_2018,
	title = {On {Nesting} {Monte} {Carlo} {Estimators}},
	issn = {2640-3498},
	url = {https://proceedings.mlr.press/v80/rainforth18a.html},
	language = {en},
	urldate = {2024-11-20},
	booktitle = {Proceedings of the 35th {International} {Conference} on {Machine} {Learning}},
	publisher = {PMLR},
	author = {Rainforth, Tom and Cornish, Rob and Yang, Hongseok and Warrington, Andrew and Wood, Frank},
	month = jul,
	year = {2018},
	pages = {4267--4276},
}

@article{huan_optimal_2024,
	title = {Optimal experimental design: {Formulations} and computations},
	volume = {33},
	issn = {0962-4929, 1474-0508},
	shorttitle = {Optimal experimental design},
	url = {https://www.cambridge.org/core/journals/acta-numerica/article/optimal-experimental-design-formulations-and-computations/38BBD0DC1A0386FDF306B6C0167DF7D9#},
	doi = {10.1017/S0962492924000023},
	language = {en},
	urldate = {2024-11-20},
	journal = {Acta Numerica},
	author = {Huan, Xun and Jagalur, Jayanth and Marzouk, Youssef},
	month = jul,
	year = {2024},
	pages = {715--840},
}

@inproceedings{foster_variational_2019,
	title = {Variational {Bayesian} {Optimal} {Experimental} {Design}},
	volume = {32},
	url = {https://proceedings.neurips.cc/paper_files/paper/2019/hash/d55cbf210f175f4a37916eafe6c04f0d-Abstract.html},
	urldate = {2024-11-20},
	booktitle = {Advances in {Neural} {Information} {Processing} {Systems}},
	publisher = {Curran Associates, Inc.},
	author = {Foster, Adam and Jankowiak, Martin and Bingham, Elias and Horsfall, Paul and Teh, Yee Whye and Rainforth, Thomas and Goodman, Noah},
	year = {2019},
}

@inproceedings{foster_unified_2019,
	title = {A {Unified} {Stochastic} {Gradient} {Approach} to {Designing} {Bayesian}-{Optimal} {Experiments}},
	url = {https://www.semanticscholar.org/paper/A-Unified-Stochastic-Gradient-Approach-to-Designing-Foster-Jankowiak/3536d7d8ac730c85623b3f31bda9961c97a9d927},
	urldate = {2024-11-20},
	author = {Foster, Adam and Jankowiak, M. and O'Meara, M. and Teh, Y. and Rainforth, Tom},
	month = nov,
	year = {2019},
}

@inproceedings{iqbal_nesting_2024,
	title = {Nesting {Particle} {Filters} for {Experimental} {Design} in {Dynamical} {Systems}},
	issn = {2640-3498},
	url = {https://proceedings.mlr.press/v235/iqbal24a.html},
	language = {en},
	urldate = {2024-11-20},
	booktitle = {Proceedings of the 41st {International} {Conference} on {Machine} {Learning}},
	publisher = {PMLR},
	author = {Iqbal, Sahel and Corenflos, Adrien and Särkkä, Simo and Abdulsamad, Hany},
	month = jul,
	year = {2024},
	pages = {21047--21068},
}

@inproceedings{iollo_pasoa_2024,
	title = {{PASOA}- {PArticle} {baSed} {Bayesian} {Optimal} {Adaptive} design},
	issn = {2640-3498},
	url = {https://proceedings.mlr.press/v235/iollo24a.html},
	language = {en},
	urldate = {2024-11-20},
	booktitle = {Proceedings of the 41st {International} {Conference} on {Machine} {Learning}},
	publisher = {PMLR},
	author = {Iollo, Jacopo and Heinkelé, Christophe and Alliez, Pierre and Forbes, Florence},
	month = jul,
	year = {2024},
	pages = {21020--21046},
}

@article{goda_unbiased_2022,
	title = {Unbiased {MLMC} {Stochastic} {Gradient}-{Based} {Optimization} of {Bayesian} {Experimental} {Designs}},
	volume = {44},
	issn = {1064-8275},
	url = {https://epubs.siam.org/doi/10.1137/20M1338848},
	doi = {10.1137/20M1338848},
	number = {1},
	urldate = {2024-11-20},
	journal = {SIAM Journal on Scientific Computing},
	publisher = {Society for Industrial and Applied Mathematics},
	author = {Goda, Takashi and Hironaka, Tomohiko and Kitade, Wataru and Foster, Adam},
	month = feb,
	year = {2022},
	pages = {A286--A311},
}

@article{ao_estimating_2024,
	title = {On {Estimating} the {Gradient} of the {Expected} {Information} {Gain} in {Bayesian} {Experimental} {Design}},
	volume = {38},
	copyright = {Copyright (c) 2024 Association for the Advancement of Artificial Intelligence},
	issn = {2374-3468},
	url = {https://ojs.aaai.org/index.php/AAAI/article/view/30012},
	doi = {10.1609/aaai.v38i18.30012},
	language = {en},
	number = {18},
	urldate = {2024-11-20},
	journal = {Proceedings of the AAAI Conference on Artificial Intelligence},
	author = {Ao, Ziqiao and Li, Jinglai},
	month = mar,
	year = {2024},
	note = {Number: 18},
	pages = {20311--20319},
}

@article{mohamed_monte_2020,
	title = {Monte {Carlo} {Gradient} {Estimation} in {Machine} {Learning}},
	volume = {21},
	issn = {1533-7928},
	url = {http://jmlr.org/papers/v21/19-346.html},
	number = {132},
	urldate = {2024-12-11},
	journal = {Journal of Machine Learning Research},
	author = {Mohamed, Shakir and Rosca, Mihaela and Figurnov, Michael and Mnih, Andriy},
	year = {2020},
	pages = {1--62},
}

@inproceedings{smith_predictionoriented_2023,
	title = {Prediction-{Oriented} {Bayesian} {Active} {Learning}},
	issn = {2640-3498},
	url = {https://proceedings.mlr.press/v206/bickfordsmith23a.html},
	language = {en},
	urldate = {2024-12-14},
	booktitle = {Proceedings of {The} 26th {International} {Conference} on {Artificial} {Intelligence} and {Statistics}},
	publisher = {PMLR},
	author = {Smith, Freddie Bickford and Kirsch, Andreas and Farquhar, Sebastian and Gal, Yarin and Foster, Adam and Rainforth, Tom},
	month = apr,
	year = {2023},
	pages = {7331--7348},
}

@inproceedings{huang_amortized_2024,
	title = {Amortized {Bayesian} {Experimental} {Design} for {Decision}-{Making}},
	url = {https://openreview.net/forum?id=zBG7WogAvm},
	language = {en},
	urldate = {2024-12-20},
	author = {Huang, Daolang and Guo, Yujia and Acerbi, Luigi and Kaski, Samuel},
	month = nov,
	year = {2024},
}

@article{goda_multilevel_2020,
	title = {Multilevel {Monte} {Carlo} estimation of expected information gains},
	volume = {38},
	issn = {0736-2994},
	url = {https://doi.org/10.1080/07362994.2019.1705168},
	doi = {10.1080/07362994.2019.1705168},
	number = {4},
	urldate = {2025-01-17},
	journal = {Stochastic Analysis and Applications},
	publisher = {Taylor \& Francis},
	author = {Goda, Takashi and Hironaka, Tomohiko and Iwamoto, Takeru},
	month = jul,
	year = {2020},
	note = {\_eprint: https://doi.org/10.1080/07362994.2019.1705168},
	pages = {581--600},
}

@misc{coons_multifidelity_2025,
	title = {A {Multi}-fidelity {Estimator} of the {Expected} {Information} {Gain} for {Bayesian} {Optimal} {Experimental} {Design}},
	url = {http://arxiv.org/abs/2501.10845},
	doi = {10.48550/arXiv.2501.10845},
	urldate = {2025-02-18},
	publisher = {arXiv},
	author = {Coons, Thomas E. and Huan, Xun},
	month = jan,
	year = {2025},
	note = {arXiv:2501.10845 [stat]},
}

@article{lindley_measure_1956,
	title = {On a {Measure} of the {Information} {Provided} by an {Experiment}},
	volume = {27},
	issn = {0003-4851, 2168-8990},
	url = {https://projecteuclid.org/journals/annals-of-mathematical-statistics/volume-27/issue-4/On-a-Measure-of-the-Information-Provided-by-an-Experiment/10.1214/aoms/1177728069.full},
	doi = {10.1214/aoms/1177728069},
	number = {4},
	urldate = {2025-04-01},
	journal = {The Annals of Mathematical Statistics},
	publisher = {Institute of Mathematical Statistics},
	author = {Lindley, D. V.},
	month = dec,
	year = {1956},
	pages = {986--1005},
}

@article{chaloner_bayesian_1995,
	title = {Bayesian {Experimental} {Design}: {A} {Review}},
	volume = {10},
	issn = {0883-4237, 2168-8745},
	shorttitle = {Bayesian {Experimental} {Design}},
	url = {https://projecteuclid.org/journals/statistical-science/volume-10/issue-3/Bayesian-Experimental-Design-A-Review/10.1214/ss/1177009939.full},
	doi = {10.1214/ss/1177009939},
	number = {3},
	urldate = {2025-04-01},
	journal = {Statistical Science},
	publisher = {Institute of Mathematical Statistics},
	author = {Chaloner, Kathryn and Verdinelli, Isabella},
	month = aug,
	year = {1995},
	pages = {273--304},
}

@article{myung_tutorial_2013,
	title = {A tutorial on adaptive design optimization},
	volume = {57},
	issn = {0022-2496},
	url = {https://www.sciencedirect.com/science/article/pii/S0022249613000503},
	doi = {10.1016/j.jmp.2013.05.005},
	number = {3},
	urldate = {2025-04-03},
	journal = {Journal of Mathematical Psychology},
	author = {Myung, Jay I. and Cavagnaro, Daniel R. and Pitt, Mark A.},
	month = jun,
	year = {2013},
	pages = {53--67},
}

@inproceedings{kleinegesse_efficient_2019,
	title = {Efficient {Bayesian} {Experimental} {Design} for {Implicit} {Models}},
	issn = {2640-3498},
	url = {https://proceedings.mlr.press/v89/kleinegesse19a.html},
	language = {en},
	urldate = {2025-04-03},
	booktitle = {Proceedings of the {Twenty}-{Second} {International} {Conference} on {Artificial} {Intelligence} and {Statistics}},
	publisher = {PMLR},
	author = {Kleinegesse, Steven and Gutmann, Michael U.},
	month = apr,
	year = {2019},
	pages = {476--485},
}

@inproceedings{kleinegesse_bayesian_2020,
	title = {Bayesian {Experimental} {Design} for {Implicit} {Models} by {Mutual} {Information} {Neural} {Estimation}},
	issn = {2640-3498},
	url = {https://proceedings.mlr.press/v119/kleinegesse20a.html},
	language = {en},
	urldate = {2025-04-03},
	booktitle = {Proceedings of the 37th {International} {Conference} on {Machine} {Learning}},
	publisher = {PMLR},
	author = {Kleinegesse, Steven and Gutmann, Michael U.},
	month = nov,
	year = {2020},
	pages = {5316--5326},
}

@article{huan_simulationbased_2013,
	title = {Simulation-based optimal {Bayesian} experimental design for nonlinear systems},
	volume = {232},
	issn = {0021-9991},
	url = {https://doi.org/10.1016/j.jcp.2012.08.013},
	doi = {10.1016/j.jcp.2012.08.013},
	number = {1},
	urldate = {2025-04-03},
	journal = {J. Comput. Phys.},
	author = {Huan, Xun and Marzouk, Youssef M.},
	month = jan,
	year = {2013},
	pages = {288--317},
}

@article{shen_bayesian_2023,
	title = {Bayesian {Sequential} {Optimal} {Experimental} {Design} for {Nonlinear} {Models} {Using} {Policy} {Gradient} {Reinforcement} {Learning}},
	volume = {416},
	issn = {00457825},
	url = {http://arxiv.org/abs/2110.15335},
	doi = {10.1016/j.cma.2023.116304},
	urldate = {2025-04-04},
	journal = {Computer Methods in Applied Mechanics and Engineering},
	author = {Shen, Wanggang and Huan, Xun},
	month = nov,
	year = {2023},
	note = {arXiv:2110.15335 [cs]},
	pages = {116304},
}

@misc{kerrigan_geometric_2025,
	title = {A {Geometric} {Approach} to {Optimal} {Experimental} {Design}},
	url = {http://arxiv.org/abs/2510.14848},
	doi = {10.48550/arXiv.2510.14848},
	urldate = {2025-12-01},
	publisher = {arXiv},
	author = {Kerrigan, Gavin and Naesseth, Christian A. and Rainforth, Tom},
	month = oct,
	year = {2025},
	note = {arXiv:2510.14848 [stat]},
}

@inproceedings{huang_aline_2025,
	title = {{ALINE}: {Joint} {Amortization} for {Bayesian} {Inference} and {Active} {Data} {Acquisition}},
	shorttitle = {{ALINE}},
	url = {https://openreview.net/forum?id=btm5Z5Vu8G&referrer=%5Bthe%20profile%20of%20Luigi%20Acerbi%5D(%2Fprofile%3Fid%3D~Luigi_Acerbi1)},
	language = {en},
	urldate = {2025-12-11},
	author = {Huang, Daolang and Wen, Xinyi and Bharti, Ayush and Kaski, Samuel and Acerbi, Luigi},
	month = oct,
	year = {2025},
}

@article{dawid2007geometry,
  title={The geometry of proper scoring rules},
  author={Dawid, A Philip},
  journal={Annals of the Institute of Statistical Mathematics},
  volume={59},
  number={1},
  pages={77--93},
  year={2007},
  publisher={Springer}
}

@article{gneiting2007strictly,
  title={Strictly proper scoring rules, prediction, and estimation},
  author={Gneiting, Tilmann and Raftery, Adrian E},
  journal={Journal of the American statistical Association},
  volume={102},
  number={477},
  pages={359--378},
  year={2007},
  publisher={Taylor \& Francis}
}

@article{lindley1982scoring,
  title={Scoring rules and the inevitability of probability},
  author={Lindley, Dennis V},
  journal={International statistical review/revue internationale de statistique},
  pages={1--11},
  year={1982},
  publisher={JSTOR}
}

@article{fort2017mcmc,
  title={MCMC design-based non-parametric regression for rare event. Application to nested risk computations},
  author={Fort, Gersende and Gobet, Emmanuel and Moulines, Eric},
  journal={Monte Carlo Methods and Applications},
  volume={23},
  number={1},
  pages={21--42},
  year={2017},
  publisher={De Gruyter}
}

@article{dong_variational_2025,
	title = {Variational {Bayesian} optimal experimental design with normalizing flows},
	volume = {433},
	issn = {00457825},
	url = {https://linkinghub.elsevier.com/retrieve/pii/S0045782524007126},
	doi = {10.1016/j.cma.2024.117457},
	language = {en},
	urldate = {2026-01-21},
	journal = {Computer Methods in Applied Mechanics and Engineering},
	author = {Dong, Jiayuan and Jacobsen, Christian and Khalloufi, Mehdi and Akram, Maryam and Liu, Wanjiao and Duraisamy, Karthik and Huan, Xun},
	month = jan,
	year = {2025},
	pages = {117457},
}

@misc{iollo_bayesian_2025,
	title = {Bayesian {Experimental} {Design} via {Contrastive} {Diffusions}},
	url = {http://arxiv.org/abs/2410.11826},
	doi = {10.48550/arXiv.2410.11826},
	urldate = {2026-02-11},
	publisher = {arXiv},
	author = {Iollo, Jacopo and Heinkelé, Christophe and Alliez, Pierre and Forbes, Florence},
	month = mar,
	year = {2025},
	note = {arXiv:2410.11826 [stat]
version: 2},
}

@article{cox1946probability,
  title={Probability, frequency and reasonable expectation},
  author={Cox, Richard T and others},
  journal={American journal of physics},
  volume={14},
  number={1},
  pages={1--13},
  year={1946}
}

@article{murphy2024reinforcement,
  title={Reinforcement learning: an overview},
  author={Murphy, Kevin},
  journal={arXiv preprint arXiv:2412.05265},
  year={2024}
}

@article{kuratowski1965general,
  title={A general theorem on selectors},
  author={Kuratowski, Kazimierz and Ryll-Nardzewski, Czes{\l}aw},
  journal={Bull. Acad. Polon. Sci. S{\'e}r. Sci. Math. Astronom. Phys},
  volume={13},
  number={6},
  pages={397--403},
  year={1965}
}

@article{williams_simple_1992a,
	title = {Simple {Statistical} {Gradient}-{Following} {Algorithms} for {Connectionist} {Reinforcement} {Learning}},
	volume = {8},
	issn = {0885-6125},
	url = {https://doi.org/10.1007/BF00992696},
	doi = {10.1007/BF00992696},
	number = {3-4},
	urldate = {2026-02-24},
	journal = {Mach. Learn.},
	author = {Williams, Ronald J.},
	month = may,
	year = {1992},
	pages = {229--256},
}

@article{huan_gradientbased_2012,
	title = {Gradient-based stochastic optimization methods in {Bayesian} experimental design},
	volume = {4},
	doi = {10.1615/Int.J.UncertaintyQuantification.2014006730},
	journal = {International Journal for Uncertainty Quantification},
	author = {Huan, Xun and Marzouk, Youssef},
	month = dec,
	year = {2012},
}

@inproceedings{barber_information_2003,
	title = {Information {Maximization} in {Noisy} {Channels} : {A} {Variational} {Approach}},
	volume = {16},
	shorttitle = {Information {Maximization} in {Noisy} {Channels}},
	url = {https://papers.nips.cc/paper_files/paper/2003/hash/a6ea8471c120fe8cc35a2954c9b9c595-Abstract.html},
	urldate = {2026-02-24},
	booktitle = {Advances in {Neural} {Information} {Processing} {Systems}},
	publisher = {MIT Press},
	author = {Barber, David and Agakov, Felix},
	year = {2003},
}

@misc{bracher_jadai_2025,
	title = {{JADAI}: {Jointly} {Amortizing} {Adaptive} {Design} and {Bayesian} {Inference}},
	shorttitle = {{JADAI}},
	url = {http://arxiv.org/abs/2512.22999},
	doi = {10.48550/arXiv.2512.22999},
	urldate = {2026-02-26},
	publisher = {arXiv},
	author = {Bracher, Niels and Kühmichel, Lars and Ivanova, Desi R. and Intes, Xavier and Bürkner, Paul-Christian and Radev, Stefan T.},
	month = dec,
	year = {2025},
	note = {arXiv:2512.22999 [stat]},
}

@inproceedings{hedman_stepdad_2025,
	title = {Step-{DAD}: {Semi}-{Amortized} {Policy}-{Based} {Bayesian} {Experimental} {Design}},
	issn = {2640-3498},
	shorttitle = {Step-{DAD}},
	url = {https://proceedings.mlr.press/v267/hedman25a.html},
	language = {en},
	urldate = {2026-02-26},
	booktitle = {Proceedings of the 42nd {International} {Conference} on {Machine} {Learning}},
	publisher = {PMLR},
	author = {Hedman, Marcel and Ivanova, Desi R. and Guan, Cong and Rainforth, Tom},
	month = oct,
	year = {2025},
	pages = {22904--22923},
}

@misc{huang_lossdriven_2026,
	title = {Loss-{Driven} {Bayesian} {Active} {Learning}},
	url = {http://arxiv.org/abs/2604.11995},
	doi = {10.48550/arXiv.2604.11995},
	urldate = {2026-04-18},
	publisher = {arXiv},
	author = {Huang, Zhuoyue and Smith, Freddie Bickford and Rainforth, Tom},
	month = apr,
	year = {2026},
	note = {arXiv:2604.11995 [cs]},
}

@article{fisher1935design,
  title={The Design of Experiments.},
  author={Fisher, RA},
  year={1935},
  publisher={Oliver \& Boyd}
}

@book{lindley_1972,
	title = {Bayesian statistics},
	url = {https://epubs.siam.org/doi/abs/10.1137/1.9781611970654},
	doi = {10.1137/1.9781611970654},
	publisher = {Society for Industrial and Applied Mathematics},
	author = {Lindley, D. V.},
	year = {1972},
	note = {tex.eprint: https://epubs.siam.org/doi/pdf/10.1137/1.9781611970654},
}

@article{degroot1962uncertainty,
  title={Uncertainty, information, and sequential experiments},
  journal={The Annals of Mathematical Statistics},
  author= {DeGroot, MH},
  volume={33},
  number={2},
  pages={404--419},
  year={1962},
  publisher={Institute of Mathematical Statistics}
}

@misc{kingma_autoencoding_2013,
    title = {Auto-{Encoding} {Variational} {Bayes}},
    url = {http://arxiv.org/abs/1312.6114},
    doi = {10.48550/arXiv.1312.6114},
    urldate = {2025-03-30},
    publisher = {arXiv},
    author = {Kingma, Diederik P. and Welling, Max},
    month = dec,
    year = {2013},
    note = {arXiv:1312.6114 [stat]},
}

@article{shannon_mathematical_1948,
    title = {A {Mathematical} {Theory} of {Communication}},
    volume = {27},
    copyright = {© 1948 The Bell System Technical Journal},
    issn = {1538-7305},
    url = {https://onlinelibrary.wiley.com/doi/abs/10.1002/j.1538-7305.1948.tb01338.x},
    doi = {10.1002/j.1538-7305.1948.tb01338.x},
    language = {en},
    number = {3},
    urldate = {2025-04-01},
    journal = {Bell System Technical Journal},
    author = {Shannon, C. E.},
    year = {1948},
    note = {\_eprint: https://onlinelibrary.wiley.com/doi/pdf/10.1002/j.1538-7305.1948.tb01338.x},
    pages = {379--423},
}

@article{dawid1998coherent,
  title={Coherent measures of discrepancy, uncertainty and dependence, with applications to Bayesian predictive experimental design},
  author={Dawid, A Philip},
  journal={Department of Statistical Science, University College London. http://www. ucl. ac. uk/Stats/research/abs94. html, Tech. Rep},
  volume={139},
  year={1998}
}

@article{shen2025variational,
  title={Variational sequential optimal experimental design using reinforcement learning},
  author={Shen, Wanggang and Dong, Jiayuan and Huan, Xun},
  journal={Computer Methods in Applied Mechanics and Engineering},
  volume={444},
  pages={118068},
  year={2025},
  publisher={Elsevier}
}

@article{zhong_goaloriented_2026,
    title = {Goal-{Oriented} {Bayesian} {Optimal} {Experimental} {Design} for {Nonlinear} {Models} using {Markov} {Chain} {Monte} {Carlo}},
    volume = {14},
    issn = {2166-2525},
    url = {http://arxiv.org/abs/2403.18072},
    doi = {10.1137/24M1649344},
    number = {1},
    urldate = {2026-05-01},
    journal = {SIAM/ASA Journal on Uncertainty Quantification},
    author = {Zhong, Shijie and Shen, Wanggang and Catanach, Tommie and Huan, Xun},
    month = mar,
    year = {2026},
    note = {arXiv:2403.18072 [stat]},
    pages = {19--47},
}

@inproceedings{neiswanger_generalizing_2022,
    address = {Red Hook, NY, USA},
    series = {{NIPS} '22},
    title = {Generalizing {Bayesian} optimization with decision-theoretic entropies},
    isbn = {978-1-7138-7108-8},
    urldate = {2026-05-01},
    booktitle = {Proceedings of the 36th {International} {Conference} on {Neural} {Information} {Processing} {Systems}},
    publisher = {Curran Associates Inc.},
    author = {Neiswanger, Willie and Yu, Lantao and Zhao, Shengjia and Meng, Chenlin and Ermon, Stefano},
    month = nov,
    year = {2022},
    pages = {21016--21029},
}

@article{filstroff_targeted_2024,
    title = {Targeted {Active} {Learning} for {Bayesian} {Decision}-{Making}},
    issn = {2835-8856},
    url = {https://openreview.net/forum?id=KxPjuiMgmm},
    language = {en},
    urldate = {2026-05-01},
    journal = {Transactions on Machine Learning Research},
    author = {Filstroff, Louis and Sundin, Iiris and Mikkola, Petrus and Tiulpin, Aleksei and Kylmäoja, Juuso and Kaski, Samuel},
    month = feb,
    year = {2024},
}

@misc{kleinegesse_gradientbased_2021,
    title = {Gradient-based {Bayesian} {Experimental} {Design} for {Implicit} {Models} using {Mutual} {Information} {Lower} {Bounds}},
    url = {http://arxiv.org/abs/2105.04379},
    doi = {10.48550/arXiv.2105.04379},
    urldate = {2026-05-02},
    publisher = {arXiv},
    author = {Kleinegesse, Steven and Gutmann, Michael U.},
    month = may,
    year = {2021},
    note = {arXiv:2105.04379 [stat]},
}

@book{aliprantis2006infinite,
  title={Infinite Dimensional Analysis: A Hitchhiker's Guide},
  author={Aliprantis, Charalambos D. and Border, Kim C.},
  edition={3},
  publisher={Springer},
  year={2006}
}

@inproceedings{blau_cross-entropy_2023,
    title = {Cross-{Entropy} {Estimators} for {Sequential} {Experiment} {Design} with {Reinforcement} {Learning}},
    url = {https://openreview.net/forum?id=kPWO1v0slD},
    language = {en},
    urldate = {2026-05-06},
    author = {Blau, Tom and Chades, Iadine and Dezfouli, Amir and Steinberg, Daniel M. and Bonilla, Edwin V.},
    month = dec,
    year = {2023},
}

@inproceedings{barlas_performance_2025,
    address = {Philadelphia, USA},
    title = {Performance {Comparisons} of {Reinforcement} {Learning} {Algorithms} for {Sequential} {Experimental} {Design}},
    url = {https://aair-lab.github.io/genplan25/},
    language = {en},
    urldate = {2026-05-06},
    author = {Barlas, Y. and Salako, K.},
    month = jan,
    year = {2025},
}

@inproceedings{chakraborty_likelihood-free_2024,
    title = {A {Likelihood}-{Free} {Approach} to {Goal}-{Oriented} {Bayesian} {Optimal} {Experimental} {Design}},
    url = {https://www.semanticscholar.org/paper/A-Likelihood-Free-Approach-to-Goal-Oriented-Optimal-Chakraborty-Huan/a2663fd8456a00a25405b89e0f786741b768a3f4},
    urldate = {2026-05-06},
    author = {Chakraborty, Atlanta and Huan, Xun and Catanach, Tommie A.},
    month = aug,
    year = {2024},
}

@incollection{neal2011mcmc,
  author    = {Neal, Radford M.},
  title     = {MCMC Using Hamiltonian Dynamics},
  booktitle = {Handbook of Markov Chain Monte Carlo},
  editor    = {Brooks, Steve and Gelman, Andrew and Jones, Galin L. and Meng, Xiao-Li},
  publisher = {Chapman and Hall/CRC},
  year      = {2011},
  pages     = {113--162}
}

\newpage

\appendix


\newpage
\section{Lifting Bayes-Optimal Actions to Action Policies}
\label{app:action_policy_lifting}

In this section, we formalise the equivalence between the nested optimisation over Bayes-optimal downstream actions in Definition \ref{def:obed_policy}  and the joint optimisation over design and action policies used in \Cref{eqn:joint_opt_problem}. The key point is that, under standard measurability and integrability assumptions, a pointwise optimisation over actions after observing the experimental history can be lifted to an optimisation over measurable action policies.

\paragraph{Setup.} 
Let \((\Omega,\mathcal F,\mathbb P)\) be the underlying probability space, and \(\pi_d\in\Pi_d\) be a fixed design policy. We assume that \(h_T\) takes values in a measurable history space
\((\mathsf H,\mathcal H)\). We denote by $\ell$ the mapping from pairs $(a, \theta) \in \mathcal{A} \times \Theta$ to its associated loss, $\ell:\mathcal A\times\Theta\to\mathbb R$. For a fixed history \(h\in\mathsf H\), we formally define the conditional risk $\forall a \in \mathcal{A}$ as, 
\[
    L(a,h)
    :=
    \mathbb E\!\left[\ell(a,\theta)\mid h_T=h\right].
\]
The Bayes-optimal action after observing \(h\) is any solution of
\[
    a^B(h) \in \argmin_{a\in\mathcal A} L(a,h).
\]
The corresponding Bayes expected future loss for a design policy \(\pi_d\) is
\[
    \mathrm{EFL}_\ell^B(\pi_d)
    =
    \mathbb E_{p(h_T;\pi_d)}
    \left[
        \min_{a\in\mathcal A}
        L(a,h_T)
    \right].
\]
We now show that this nested formulation is equivalent to an optimisation over
measurable action policies.

\paragraph{Standing assumptions.}
We work under standard regularity assumptions ensuring that the pointwise Bayes-optimal actions can be selected measurably as functions of the observed history. The history space \((\mathsf H,\mathcal H)\) is standard Borel, and the action space \(\mathcal A\) is a compact metric space with Borel \(\sigma\)-algebra \(\mathcal B(\mathcal A)\). For every
\(\pi_d\in\Pi_d\), the conditional risk $L(a,h)$ admits a jointly measurable version on \(\mathcal A\times\mathsf H\), and
\(a\mapsto L(a,h)\) is lower semicontinuous for every \(h\in\mathsf H\).
We also assume the relevant expectations are well-defined, for example through
an integrable envelope \(m(h_T,\theta)\) satisfying
\[
    |\ell(a,\theta)|\leq m(h_T,\theta)
    \quad\text{for all }a\in\mathcal A,
    \qquad
    \mathbb E[m(h_T,\theta)]<\infty.
\]

Let
\begin{equation}
    \Pi_a^{\mathrm{meas}}
    :=
    \left\{
    \pi_a:\mathsf H\to\mathcal A
    \;:\;
    \pi_a
    \text{ is }
    \mathcal H/\mathcal B(\mathcal A)\text{-measurable}
    \right\}
\end{equation}
denote the set of admissible measurable action policies. The restriction to measurable action policies makes the optimisation
probabilistically well-defined. Under the regularity assumptions above, the Bayes-action correspondence admits a measurable selector, so the pointwise Bayes action belongs to  \(\Pi_a^{\mathrm{meas}}\). It yields the desired
equivalence, as stated in the following proposition.

\begin{proposition}[Lifting pointwise Bayes actions to action policies]
\label{prop:bayes_action_policy_lifting}
Under the previous assumptions, for every fixed design policy \(\pi_d\in\Pi_d\),
\begin{equation}
    \mathbb E_{p(h_T;\pi_d)}
    \left[
        \min_{a\in\mathcal A} L(a,h_T)
    \right]
    =
    \min_{\pi_a\in\Pi_a^{\mathrm{meas}}}
    \mathbb E_{p(h_T;\pi_d)}
    \left[
        L(\pi_a(h_T),h_T)
    \right].
\end{equation}
Equivalently,
\begin{equation}
\label{eq:min_outside_expectation}
    \mathrm{EFL}_\ell^B(\pi_d)
    =
    \min_{\pi_a\in\Pi_a^{\mathrm{meas}}}
    \mathbb E_{p(h_T;\pi_d)p(\theta\mid h_T)}
    \left[
        \ell(\pi_a(h_T),\theta)
    \right].
\end{equation}
\end{proposition}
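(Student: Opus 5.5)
We prove the two displayed identities by establishing the two inequalities separately. The only nontrivial ingredient is a measurable selection of minimisers. Write $V(h) := \min_{a\in\mathcal A} L(a,h)$. For each fixed $h$ the minimum is attained, since $a\mapsto L(a,h)$ is lower semicontinuous on the compact metric space $\mathcal A$. Moreover $|L(a,h)| \le \mathbb E[m(h_T,\theta)\mid h_T=h]$, and this bound is integrable, so $V$ is finite almost surely and its expectation is well defined once we know $V$ is measurable.

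\emph{Lower bound.} Fix any $\pi_a\in\Pi_a^{\mathrm{meas}}$. The map $h\mapsto (\pi_a(h),h)$ is measurable and $L$ is jointly measurable, so $h\mapsto L(\pi_a(h),h)$ is measurable. Pointwise we have $L(\pi_a(h),h)\ge V(h)$, and integrating gives $\mathbb E[L(\pi_a(h_T),h_T)]\ge \mathbb E[V(h_T)]$. This requires $V$ to be measurable, which follows from the next step. Taking the infimum over $\pi_a$ gives ``$\ge$''.

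\emph{Attainment via measurable selection.} This is the main obstacle. Consider the argmin correspondence $\Gamma(h) := \argmin_{a\in\mathcal A} L(a,h)$. It is nonempty and closed-valued, because it is the sublevel set $\{a: L(a,h)\le V(h)\}$ of an l.s.c.\ function. We must show $\Gamma$ is weakly measurable and then apply the Kuratowski--Ryll-Nardzewski selection theorem \citep{kuratowski1965general}. The key difficulty is that $L$ is only jointly measurable and l.s.c.\ in $a$, not Carathéodory, so the standard measurable maximum theorem does not apply verbatim. I would first try to invoke the measurable maximum theorem for normal integrands, as in \citet{aliprantis2006infinite}. Concretely, I would show that the epigraphical correspondence $h\mapsto\{(a,r): L(a,h)\le r\}$ has measurable graph, which holds because $L$ is jointly measurable. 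Since $\mathsf H$ is standard Borel and $\mathcal A$ is Polish, a jointly measurable, l.s.c.-in-$a$ integrand is then a normal integrand, possibly after completing $\mathcal H$ with respect to the law of $h_T$. It follows that $V$ is measurable and that $\Gamma$ is measurable with closed nonempty values, so it admits a measurable selector $a^B(\cdot)$. If only universal measurability is obtained, one modifies the selector on a $p(h_T;\pi_d)$-null set to get a Borel selector, which does not change any expectation. With $\pi_a^B:=a^B$ we have $L(\pi_a^B(h),h)=V(h)$ for every $h$. Hence $\pi_a^B\in\Pi_a^{\mathrm{meas}}$ attains equality, the infimum is a minimum, and the first identity follows.

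\emph{Equivalent form.} For the second identity, apply the tower property. For any measurable $\pi_a$, the quantity $\ell(\pi_a(h_T),\theta)$ is integrable because it is dominated by $m$. Conditioning on $h_T$ and using that $\pi_a(h_T)$ is $h_T$-measurable gives $\mathbb E[\ell(\pi_a(h_T),\theta)\mid h_T]=L(\pi_a(h_T),h_T)$ almost surely. This step uses the jointly measurable version of $L$, via the standard ``freezing'' lemma for regular conditional distributions, which exist because $\mathsf H$ is standard Borel. Taking expectations and substituting into the first identity yields \eqref{eq:min_outside_expectation}.
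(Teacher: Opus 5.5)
Your proof is correct and shares the paper's skeleton. The pointwise bound $V(h)\le L(\pi_a(h),h)$ gives one inequality, and a Kuratowski--Ryll-Nardzewski selector of the nonempty closed-valued correspondence $\Gamma(h)=\argmin_{a}L(a,h)$ gives the other.

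The difference is how measurability is justified. The paper invokes the measurable maximum theorem. In its standard form (Aliprantis--Border) that theorem requires a Carath\'eodory integrand, i.e.\ continuity in $a$, so it does not literally apply to a conditional risk that is only jointly measurable and l.s.c.\ in $a$. You flag this and use normal-integrand theory instead, at the cost of completing $\mathcal H$ and correcting the selector on a null set. That route is rigorous. However, after the correction you only get $L(\pi_a^B(h),h)=V(h)$ for $p(h_T;\pi_d)$-almost every $h$, not for every $h$ as you write; this is harmless for the expectations.

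Compactness of $\mathcal A$ in fact lets you avoid completion altogether. The sets $\{(h,a):L(a,h)\le r\}$ are Borel with compact sections, so by Novikov's theorem on Borel sets with compact sections their projections $\{h:V(h)\le r\}$ are Borel. Then the Borel set $\{(h,a):L(a,h)\le V(h)\}$, whose sections are the nonempty compact sets $\Gamma(h)$, has a Borel uniformisation. That is an everywhere-optimal Borel selector, which is what the paper's argument needs under its stated hypotheses.

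Your tower-property derivation of the second identity usefully fills in what the paper treats as a direct rewrite. Two points on it. First, the regular conditional law of $\theta$ given $h_T$ exists when $\Theta$ (not $\mathsf H$) is standard Borel. Second, the freezing step requires $L$ to be the integral of $\ell(a,\cdot)$ against that kernel, not an arbitrary jointly measurable version of the conditional risk.
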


\begin{proof}
Fix \(\pi_d\in\Pi_d\). For any measurable action policy
\(\pi_a\in\Pi_a^{\mathrm{meas}}\), we directly have pointwise
\[
    \min_{a\in\mathcal A} L(a,h)
    \leq
    L(\pi_a(h),h).
\]
Therefore,
\[
    \mathbb E_{p(h_T;\pi_d)}
    \left[
        \min_{a\in\mathcal A} L(a,h_T)
    \right]
    \leq
    \mathbb E_{p(h_T;\pi_d)}
    \left[
        L(\pi_a(h_T),h_T)
    \right].
\]
Taking the infimum over \(\pi_a\in\Pi_a^{\mathrm{meas}}\) gives
\[
    \mathbb E_{p(h_T;\pi_d)}
    \left[
        \min_{a\in\mathcal A} L(a,h_T)
    \right]
    \leq
    \inf_{\pi_a\in\Pi_a^{\mathrm{meas}}}
    \mathbb E_{p(h_T;\pi_d)}
    \left[
        L(\pi_a(h_T),h_T)
    \right].
\]

It remains to prove the reverse inequality. Since \(L(\cdot,h)\) is lower semicontinuous on the compact action space
\(\mathcal A\), the lower semicontinuous version of Weierstrass' theorem implies that \(\argmin_{a\in\mathcal A}L(a,h)\) is non-empty for every \(h\in\mathsf H\). Hence the argmin correspondence
\[
    \Gamma(h)
    :=
    \argmin_{a\in\mathcal A} L(a,h)
\]
is non-empty and compact-valued, as a closed subset of $\mathcal{A}$. Since \(L\) is jointly measurable and lower
semicontinuous in the action variable, the measurable maximum theorem from \citep{aliprantis2006infinite} implies that the value function
\[
    h\mapsto \min_{a\in\mathcal A} L(a,h)
\]
is measurable and that \(\Gamma\) is a measurable correspondence. By the
Kuratowski--Ryll-Nardzewski measurable selection theorem from \citep{kuratowski1965general}, there exists a
measurable selector \(\pi_a^B:\mathsf H\to\mathcal A\) such that
\[
    \pi_a^B(h)\in\Gamma(h)
    \qquad\text{for all }h\in\mathsf H.
\]
Therefore, $\forall h \in \mathsf H$ 
\[
    L(\pi_a^B(h),h)
    =
    \min_{a\in\mathcal A}L(a,h),
\]
and hence
\[
    \mathbb E_{p(h_T;\pi_d)}
    \left[
        L(\pi_a^B(h_T),h_T)
    \right]
    =
    \mathbb E_{p(h_T;\pi_d)}
    \left[
        \min_{a\in\mathcal A}L(a,h_T)
    \right].
\]
Thus the infimum over measurable policies is attained by \(\pi_a^B\), giving
the desired equality.
\end{proof}

\paragraph{From Bayes-action policies to joint design--action optimisation.}

Having shown the equality (\ref{eq:min_outside_expectation}), we can directly rewrite it as
\begin{equation}
     \mathrm{EFL}_\ell^B(\pi_d)
    =
    \min_{\pi_a\in\Pi_a^{\mathrm{meas}}}
    \mathbb E_{p(\theta)p(y_{1:T}\mid\theta;\pi_d)}
    \left[
        \ell(\pi_a(h_T),\theta)
    \right].
\end{equation}
Consequently, optimising the Bayes expected future loss over design policies can be written as 
\[
\begin{aligned}
    \pi_d^B
    &\in
    \argmin_{\pi_d\in\Pi_d}
    \mathrm{EFL}_\ell^B(\pi_d) \\
    &=
    \argmin_{\pi_d\in\Pi_d}
    \min_{\pi_a\in\Pi_a^{\mathrm{meas}}}
    \mathbb E_{p(\theta)p(y_{1:T}\mid\theta;\pi_d)}
    \left[
        \ell(\pi_a(h_T),\theta)
    \right], 
\end{aligned}
\]
which ultimately leads to the core objective used by \textsc{Action-BED} as described in equation (\ref{eqn:joint_opt_problem}). 

\paragraph{Remark. }
The compactness and lower semicontinuity assumptions above are standard
sufficient conditions ensuring that exact Bayes actions exist and admit
measurable selectors, which is enough for the equivalence used in this work.
When these assumptions fail, the result can be stated in a weaker but more
general form using infima and measurable \(\varepsilon\)-optimal selectors.

Specifically, if for every \(\varepsilon>0\) there exists a measurable policy
\(\pi_a^\varepsilon:\mathsf H\to\mathcal A\) satisfying
\[
    L(\pi_a^\varepsilon(h),h)
    \leq
    \inf_{a\in\mathcal A}L(a,h)+\varepsilon,
\]
then
\[
    \mathbb E_{p(h_T;\pi_d)}
    \left[
        \inf_{a\in\mathcal A}L(a,h_T)
    \right]
    =
    \inf_{\pi_a\in\Pi_a^{\mathrm{meas}}}
    \mathbb E_{p(h_T;\pi_d)}
    \left[
        L(\pi_a(h_T),h_T)
    \right].
\]
This extension is useful when the action space is non-compact or when the
conditional risk is not known to attain its pointwise infimum.

\newpage
\section{Equivalence Between BED Viewpoints}
\label{appendix:proper_scoring_rule_link}



In the following Section we expand on the discussion on the equivalence between EUR and EFL perspectives of BED by providing a proof of \Cref{thm:equiv_BED}. In \cref{app:variational_recovery} we additionally elaborate on the equivalence between \textsc{Action-BED} and existing variational approaches in a particular case.

\subsection{Proof of Theorem \ref{thm:equiv_BED}}
\label{appendix:equivalence_between_visions}

We restate the theorem for completeness.
\newtheorem*{theoremrep}{Theorem \ref{thm:equiv_BED}}
\begin{theoremrep}[Equivalence of EFL \& EPU]
    \label{thm_rep:equiv_BED}
    Let
    $s : \mathcal{P}_{\theta} \times \Theta \to \mathbb{R}$
    be a proper scoring rule on $\Theta$. 
    Then
    \begin{equation}
    \label{eq_rep:bayes_opt_psr}
    \pi_d^B
    =
    \arg\min_{\pi_d \in \Pi_d}
    \mathbb{E}_{p(\theta)p(\y \mid \theta;\pi_d)}
    \big[
    s\big(p_\theta(\cdot \mid h_T), \theta\big)
    \big]
    \end{equation}
    is a Bayes-optimal design policy for the belief model $p(\theta)p(\y | \theta;\pi_d)$ and for any loss $\ell \in \mathcal{L}$, where
    \begin{equation}
    \label{eq_rep:loss_set}
    \mathcal{L}
    =
    \Big\{
    \ell :
    \ell\big(\tilde{\pi}_a(q_\theta), \theta\big)
    =
    s(q_\theta, \theta),
    \;
    \forall q_\theta \in \mathcal{P}_{\theta},
    \theta \in \Theta
    \Big\}, \quad 
    \tilde{\pi}_a(q_\theta)
    =
    \arg\min_{a \in \mathcal{A}}
    \mathbb{E}_{\theta \sim q_\theta}
    \big[
    \ell(a, \theta)
    \big].
    \end{equation}
    Furthermore, the set $\mathcal{L}$ is never empty, and any lower-bounded loss function $\ell$ induces a corresponding scoring rule $s$ that is proper, though not necessarily strictly proper.
\end{theoremrep}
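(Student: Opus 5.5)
The plan is to prove the three claims in turn: EPU-optimal policies are EFL-optimal for every $\ell\in\mathcal{L}$; $\mathcal{L}$ is non-empty; and any lower-bounded loss induces a proper scoring rule. Each reduces to pointwise identities between $s$ and $\ell$, integrated against $p(h_T;\pi_d)p(\theta\mid h_T)=p(\theta)p(\y\mid\theta;\pi_d)$.

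\emph{Optimal policies coincide.} Fix $\ell\in\mathcal{L}$ and a design policy $\pi_d$. Since $\tilde{\pi}_a(q_\theta)$ is a Bayes action for belief $q_\theta$, applying it with $q_\theta=p_\theta(\cdot\mid h_T)$ gives, for each history $h_T$,
\[
\min_{a\in\mathcal{A}}\E_{p(\theta\mid h_T)}[\ell(a,\theta)]=\E_{p(\theta\mid h_T)}\big[\ell(\tilde{\pi}_a(p_\theta(\cdot\mid h_T)),\theta)\big]=\E_{p(\theta\mid h_T)}\big[s(p_\theta(\cdot\mid h_T),\theta)\big].
\]
The second equality is the defining property of $\mathcal{L}$. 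Taking the expectation over $p(h_T;\pi_d)$ and rewriting the joint as $p(\theta)p(\y\mid\theta;\pi_d)$ yields $\mathrm{EFL}^B_\ell(\pi_d)=\mathrm{EPU}_s(\pi_d)$ for every $\pi_d$. The two objectives are therefore identical functions of $\pi_d$, so their argmins coincide, and the minimiser in \eqref{eq_rep:bayes_opt_psr} is Bayes-optimal in the sense of \Cref{def:obed_policy}. Where $\tilde{\pi}_a$ is set-valued, I will fix a selection; measurability of the composed map $h_T\mapsto\tilde{\pi}_a(p_\theta(\cdot\mid h_T))$ can be handled as in Proposition~\ref{prop:bayes_action_policy_lifting}.

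\emph{$\mathcal{L}$ is non-empty.} I will construct the canonical loss with $\mathcal{A}=\mathcal{P}_\theta$ and $\ell(a,\theta):=s(a,\theta)$. Propriety of $s$ says $\E_{q_\theta}[s(a,\theta)]\ge\E_{q_\theta}[s(q_\theta,\theta)]$ for all $a$, so $q_\theta\in\argmin_a\E_{q_\theta}[\ell(a,\theta)]$. Choosing the selection $\tilde{\pi}_a(q_\theta)=q_\theta$ then gives $\ell(\tilde{\pi}_a(q_\theta),\theta)=s(q_\theta,\theta)$ identically, so $\ell\in\mathcal{L}$. If $s$ is strictly proper the argmin is unique and no choice is needed. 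For merely proper $s$, I must state explicitly that the $\arg\min$ in \eqref{eq_rep:loss_set} is read as some selection from the argmin set, since otherwise the identity need not hold.

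\emph{Induced scoring rule.} Given a lower-bounded $\ell$, I define $s_\ell(q,\theta):=\ell(\tilde{\pi}_a(q),\theta)$ with $\tilde{\pi}_a$ any Bayes-act selector. To show propriety, fix $p,q$. Because $\tilde{\pi}_a(p)$ minimises the $p$-expected loss,
\[
\E_{p}[s_\ell(q,\theta)]=\E_p[\ell(\tilde{\pi}_a(q),\theta)]\ge\E_p[\ell(\tilde{\pi}_a(p),\theta)]=\E_p[s_\ell(p,\theta)].
\]
Lower-boundedness guarantees these expectations are well defined, taking values in $(-\infty,\infty]$. This is where I expect the main obstacle. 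Bayes acts may not exist or be unique, so I will either invoke the standing compactness and lower-semicontinuity assumptions of Appendix~\ref{app:action_policy_lifting}, or fall back on $\varepsilon$-optimal acts and the resulting infimum form. Strictness fails whenever distinct beliefs share a Bayes act, and a constant loss gives an immediate example, which accounts for the ``not necessarily strictly proper'' clause. Finally, $s_\ell\in$ the hypothesis class and $\ell\in\mathcal{L}$ for $s_\ell$ by construction, closing the correspondence.
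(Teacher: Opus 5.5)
Your proposal is correct and follows essentially the same route as the paper's proof. Both identify $\mathrm{EPU}_s$ with $\mathrm{EFL}^B_\ell$ pointwise via the defining identity of $\mathcal{L}$, derive propriety of the induced rule from the Bayes-optimality of $\tilde{\pi}_a(p)$ under $p$, and show non-emptiness via the canonical choice $\mathcal{A}=\mathcal{P}_\theta$, $\ell=s$. Your explicit treatment of set-valued argmins, selecting $\tilde{\pi}_a(q_\theta)=q_\theta$ when $s$ is merely proper, is more careful than the paper, which infers $s\in\mathcal{L}$ only from the equality of the two design objectives.
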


\begin{proof}
We start by noting that, for any $\ell \in \mathcal{L}$,
\begin{align}
    \mathbb{E}_{p(\theta)p(y_{1:T}\mid \theta;\pi_d)}
    \left[
        s\!\left(
            p_\theta(\cdot \mid y_{1:T};\pi_d),
            \theta
        \right)
    \right]
    & =
    \mathbb{E}_{p(\theta)p(y_{1:T}\mid \theta;\pi_d)}
    \left[
        \ell\!\left(
            \tilde{\pi}_a
            \left(
                p_\theta(\cdot \mid y_{1:T};\pi_d)
            \right),
            \theta
        \right)
    \right]
    \\
    & =
    \mathbb{E}_{p(\theta)p(y_{1:T}\mid \theta;\pi_d)}
    \left[
        \ell\!\left(
            \pi_a^B(y_{1:T},\pi_d),
            \theta
        \right)
    \right]
    \\
    & =
    \mathbb{E}_{p(y_{1:T};\pi_d)}
    \left[
        \min_{a\in\mathcal{A}}
        \mathbb{E}_{p(\theta\mid y_{1:T};\pi_d)}
        \left[
            \ell(a,\theta)
        \right]
    \right].
\end{align}
The first equality follows from the definition of $\mathcal{L}$, while the second follows from the definition of the Bayes action induced by the posterior belief
$p_\theta(\cdot \mid y_{1:T};\pi_d)$. Therefore, for any $\ell\in\mathcal{L}$, the minimizers of Equation (\ref{eq_rep:bayes_opt_psr}) coincide with the Bayes-optimal experimental design policies associated with the downstream loss $\ell$, as defined in \Cref{def:obed_policy}. Hence, $\pi_d^B$ is a Bayes-optimal experimental design policy for $\ell$ under the belief model
$p(\theta)p(y_{1:T}\mid \theta;\pi_d)$.

Next, we show that for any $\ell$, the corresponding scoring rule induced by Equation~(\ref{eq_rep:loss_set}) is proper. 
By the definition of $s$, we have
\begin{equation*}
    \min_{q_\theta\in\mathcal{P}_\theta}
    \mathbb{E}_{p(\theta\mid y_{1:T};\pi_d)}
    \left[
        s(q_\theta,\theta)
    \right]
    =
    \min_{q_\theta\in\mathcal{P}_\theta}
    \mathbb{E}_{p(\theta\mid y_{1:T};\pi_d)}
    \left[
        \ell\!\left(
            \tilde{\pi}_a(q_\theta),
            \theta
        \right)
    \right].
\end{equation*}
Since $q_\theta$ influences the expected loss only through the choice of action
$\tilde{\pi}_a(q_\theta)$, it follows that
\begin{equation*}
    \min_{q_\theta\in\mathcal{P}_\theta}
    \mathbb{E}_{p(\theta\mid y_{1:T};\pi_d)}
    \left[
        \ell\!\left(
            \tilde{\pi}_a(q_\theta),
            \theta
        \right)
    \right]
    \geq
    \min_{a\in\mathcal{A}}
    \mathbb{E}_{p(\theta\mid y_{1:T};\pi_d)}
    \left[
        \ell(a,\theta)
    \right].
\end{equation*}
Finally, using the definition of $\tilde{\pi}_a$, the right-hand side is attained by choosing the true posterior belief
$q_\theta = p_\theta(\cdot \mid y_{1:T};\pi_d)$, so that
\begin{equation*}
    \min_{a\in\mathcal{A}}
    \mathbb{E}_{p(\theta\mid y_{1:T};\pi_d)}
    \left[
        \ell(a,\theta)
    \right]
    =
    \mathbb{E}_{p(\theta\mid y_{1:T};\pi_d)}
    \left[
        \ell\!\left(
            \tilde{\pi}_a
            \left(
                p_\theta(\cdot \mid y_{1:T};\pi_d)
            \right),
            \theta
        \right)
    \right]
\end{equation*}
and therefore
\begin{equation*}
    =
    \mathbb{E}_{p(\theta\mid y_{1:T};\pi_d)}
    \left[
        s\!\left(
            p_\theta(\cdot \mid y_{1:T};\pi_d),
            \theta
        \right)
    \right].
\end{equation*}
Thus,
$p_\theta(\cdot \mid y_{1:T};\pi_d)$ is a minimizer of
\begin{equation*}
    \mathbb{E}_{p(\theta\mid y_{1:T};\pi_d)}
    \left[
        s(q_\theta,\theta)
    \right],
\end{equation*}
and so $s$ satisfies the definition of a proper scoring rule on $\Theta$. It need not be strictly proper as the minimiser is not necessarily unique.

Finally, we show that $\mathcal{L}$ is never empty. Starting from Equation~(\ref{eq_rep:bayes_opt_psr}), we have
\begin{align}
    \pi_d^B
    &=
    \argmin_{\pi_d\in\Pi_d}
    \mathbb{E}_{p(\theta)p(y_{1:T}\mid \theta;\pi_d)}
    \left[
        s\!\left(
            p_\theta(\cdot \mid y_{1:T};\pi_d),
            \theta
        \right)
    \right]
    \\
    &=
    \argmin_{\pi_d\in\Pi_d}
    \mathbb{E}_{p(y_{1:T};\pi_d)}
    \left[
        \mathbb{E}_{p(\theta\mid y_{1:T};\pi_d)}
        \left[
            s\!\left(
                p_\theta(\cdot \mid y_{1:T};\pi_d),
                \theta
            \right)
        \right]
    \right].
\end{align}
Since $s$ is a proper scoring rule, this is equal to
\begin{align}
    \pi_d^B
    &=
    \argmin_{\pi_d\in\Pi_d}
    \mathbb{E}_{p(y_{1:T};\pi_d)}
    \left[
        \min_{q_\theta\in\mathcal{P}_\theta}
        \mathbb{E}_{p(\theta\mid y_{1:T};\pi_d)}
        \left[
            s(q_\theta,\theta)
        \right]
    \right].
\end{align}
This is the general form of a Bayes-optimal experimental design policy with action space
$\mathcal{A}=\mathcal{P}_\theta$ and downstream loss
\begin{equation*}
    \ell(q_\theta,\theta)=s(q_\theta,\theta).
\end{equation*}
Therefore, $s$ is itself an element of $\mathcal{L}$, and consequently $\mathcal{L}$ can never be empty. 
\end{proof}

\subsection{Recovering Variational Approaches}
\label{app:variational_recovery}

As noted in \Cref{sec:unification}, the EIG itself is equivalent, up to a constant offset, to an EPU objective with the log score $s(q_\theta(\cdot), \theta) = - \log q_\theta(\theta)$. That is, up to a constant offset, the EIG is equivalent to 

\begin{equation*}
    \mathbb{E}_{p(\theta)p(\y|\theta; \pi_d)} \left[- \log p(\theta | h_T)\right].
\end{equation*}

We can find an equivalent EFL formulation by choosing $\ell \in \mathcal{L}$ according to \Cref{thm:equiv_BED}. The canonical choice of loss function $\ell \in \mathcal{L}$ for the log score is the log-loss on actions $a(\cdot) \in \mathcal{P}_\theta$, that is $\ell(a(\cdot), \theta) = - \log a(\theta)$. The optimal action under this loss is a standard result:

\begin{equation*}
    \tilde{\pi}_a = \argmin_{a(\cdot) \in \mathcal{P}_\theta} \E_{\theta \sim p(\cdot | h_T)}\left[-\log a(\theta)\right] = \argmin_{a(\cdot) \in \mathcal{P}_\theta} \text{KL}(p(\theta | h_T) \| a(\theta)) = p(\theta | h_T),
\end{equation*}
i.e. true Bayesian posterior. Therefore \textsc{Action-BED} solves the following joint optimisation problem:
\begin{align}
\begin{split}
    \label{eqn:joint_opt_problem_EIG}
    (\pi_a, \pi_d)^B &= \argmin_{(\pi_a, \pi_d) \in \Pi_a^{\mathcal{P}_\theta} \times \Pi_d} \mathrm{EFL}_{\text{log-loss}}(\pi_d,\pi_a), \\
    \mathrm{where} \quad
    \mathrm{EFL}_{\text{log-loss}}(\pi_d,\pi_a) &= \E_{p(\theta)p(\y|\theta;\pi_d)} \left[-\log (\pi_a(h_T), \theta)\right],
    \end{split}
\end{align}
and $\Pi_a^{\mathcal{P}_\theta} = \{\pi \mid \pi : \mathcal{Y}^T \times \Xi^T \to \mathcal{P}_\theta\}$. In practice, we must parameterise $\Pi_a^{\mathcal{P}_\theta}$, or a subset thereof, in order to pose a tractable parametric optimisation problem. We do this by specifying an amortised posterior approximation \emph{policy} $\pi_a^\psi: \mathcal{Y}^T \times \Xi^T \to \mathcal{P}_\theta$, defined as:
\begin{equation*}
    \pi_a^\psi(h_T) = q_\psi(\theta | h_T),
\end{equation*}
where $\psi \in \Psi$ parametrises an amortised variational family. Thus, \textsc{Action-BED} solves the following parametric optimisation:

\begin{equation*}
    \min_{\phi, \psi} \mathbb{E}_{p(\theta)p(\y|\theta; \pi_d^\phi)} \left[ -\log \pi_a^\psi(h_T)\right] = \min_{\phi, \psi} \mathbb{E}_{p(\theta)p(\y|\theta; \pi_d^\phi)} \left[ -\log q_\psi(\theta | h_T)\right].
\end{equation*}

This objective is equivalent to the variational posterior objectives introduced by \citet{foster_unified_2019, foster_variational_2019}, who solve the following problem:

\begin{equation*}
    \max_{\phi, \psi} \mathbb{E}_{p(\theta)p(\y|\theta; \pi_d^\phi)} \left[\log q_\psi(\theta | h_T) - \log p(\theta)\right].
\end{equation*}

\newpage
\section{Reparameterisation of Simulator and Regularity Conditions}
\label{app:reparam}

In this Section, we provide details of the reparameterisation function used in our EFL gradient expression \Cref{eqn:EFL_reparam_gradient}. We further provide the precise regularity conditions required to differentiate under the integral sign in the reparameterisation trick. 

We recall the application of the reparameterisation trick as follows:
\begin{align}
    \nabla_{\phi, \psi} \text{EFL}(\phi,\psi)
    & = \nabla_{\phi, \psi} \mathbb{E}_{p(\theta)\,p(\y \mid \theta;\pi_d^\phi)}
    \left[
    \ell\big(\pi_a^\psi(h_T), \theta\big)
    \right] \\ 
    &=
    \mathbb{E}_{p(\theta)q(\varepsilon_{1:T})}
    \left[ \nabla_{\phi, \psi}
        \ell\big(
        \pi_a^\psi({h}_T(\theta,\varepsilon_{1:T};\pi_d^\phi)),
        \theta
        \big)
    \right], \label{eqn_rep:EFL_reparam_gradient}
\end{align}

where the simulator $p(\y \mid \theta; \pi_d^\phi)$ was reparameterised in order to write $h_T = h_T(\theta,\varepsilon_{1:T};\pi_d^\phi)$ for $\varepsilon_{1:T}\sim q(\varepsilon_{1:T})$. We firstly provide the exact form of the reparameterisation function in \cref{app:reparam_fn} before providing regularity conditions on the loss and reparameterisation functions in \cref{app:reg_conditions_loss} and consequently on the BED model in \cref{app:reg_conditions_BED}.

\subsection{Reparametrisation Functions}
\label{app:reparam_fn}

Recall the data generating process under a given design policy $\pi_d^\phi$ is given by $p(\y \mid \theta; \pi_d^\phi) = \prod_{t=1}^T p(y_t \mid \theta, \xi_t = \pi_d^\phi(h_{t-1}), h_{t-1})$. Furthermore recall the data $h_T$ is defined as $h_T = \{\y, \x\}$ and designs $\xi_t$ are recovered auto-regressively using the design policy and the previous data $\xi_t = \pi_d^\phi(h_{t-1})$. 

We can therefore build the reparameterisation of $h_T$ recursively as follows. At the first step, we have $h_1 = \{y_1, \xi_1\}$ where 
\begin{equation*}
    y_1\sim p(\cdot \mid \theta, \xi_1), \quad \xi_1 = \pi_d^\phi(\emptyset).
\end{equation*}
Introducing noise variable $\varepsilon_1 \sim q_1(\cdot)$, we can directly write the pair $(y_1, \xi_1)$ as a function of $(\theta, \epsilon_1; \pi_d^\phi)$, specifically:
\begin{equation*}
    (y_1, \xi_1) = \left(\tilde{g}(\theta, \epsilon_1; \pi_d^\phi), \pi_d^\phi(\emptyset)\right) =: h_1(\theta, \epsilon_1; \pi_\phi^d).
\end{equation*}
At the second time step, we now have
\begin{equation*}
    y_2\sim p(\cdot \mid \theta, \xi_2, h_1), \quad \xi_2 = \pi_d^\phi(h_1).
\end{equation*}
Again introducing noise variable $\varepsilon_2 \sim q_2(\cdot)$, we can express the pair $(y_2, \xi_2)$ as a function of $(\theta, \epsilon_2, h_1; \pi_d^\phi)$ as follows:
\begin{equation*}
    (y_2, \xi_2) = \left(\tilde{g}(\theta, \epsilon_2, h_1; \pi_d^\phi), \pi_d^\phi(h_1)\right) =: \tilde{h}_2(\theta, \varepsilon_2, h_1; \pi_d^\phi)
\end{equation*}
By substituting the reparameterisation of the previous step, i.e. $h_1 = h_1(\theta, \epsilon_1; \pi_\phi^d)$ for $\varepsilon_1 \sim q_1(\cdot)$ and stacking with $h_1$, we obtain:
\begin{equation*}
        h_2 = (y_1, \xi_1, y_2, \xi_2) = (h_1, y_2, \xi_2) = (h_1, \tilde{h}_2) =: h_2(\theta, \epsilon_{1:2}; \pi_d^\phi).
\end{equation*}
We continue this argument recursively, arriving at a reparamterisation function for the entire experimental history $h_T = h_T(\theta, \epsilon_{1:T}; \pi_d^\phi)$ for $\varepsilon_{1:T} \sim \prod_{t=1}^T q_t(\cdot)$. Note that we may sometimes abuse notation slightly and write this as $h_T = h_T(\theta, \varepsilon_{1:T}, \phi)$, with the dependence on the policy $\pi_d$ implicit.

\subsection{Conditions for Differentiation Under the Integral Sign}
\label{app:reg_conditions_loss}

Let $(\Omega,\mathcal{F},\mu)$ be a measure space on which the random variables $\theta$ and $\varepsilon$ are defined. Let $\Phi \times \Psi \subset \mathbb{R}^{d_\phi} \times \mathbb{R}^{d_\psi}$ be an open set. Equation~\ref{eqn_rep:EFL_reparam_gradient} is justified under the following conditions:
\begin{enumerate}
\item The distributions $p(\theta)$ and $p(\varepsilon_{1:T})$ are independent of $(\phi,\psi)$;

\item For $\mu$-almost every $(\theta,\varepsilon_{1:T}) \in \Omega$, the mapping
\[
(\phi,\psi) \mapsto
\ell\big(\pi_a^\psi(h_T(\theta, \varepsilon_{1:T}; \pi_d^\phi)),\theta\big)
\]
is $\mathcal{F}$-measurable and integrable:
\[
\int_\Omega
\left|
\ell\big(\pi_a^\psi(h_T(\theta, \varepsilon_{1:T}; \pi_d^\phi)),\theta\big)
\right|
\,\mathrm{d}\mu(\theta, \varepsilon_{1:T})
<
\infty.
\]

\item For $\mu$-almost every $(\theta,\varepsilon_{1:T}) \in \Omega$, the mapping is differentiable in $(\phi, \psi)$ and the partial derivatives
\[
\partial_\phi \ell\big(\pi_a^\psi(h_T(\theta, \varepsilon_{1:T}; \pi_d^\phi)),\theta\big),
\quad
\partial_\psi \ell\big(\pi_a^\psi(h_T(\theta, \varepsilon_{1:T}; \pi_d^\phi)),\theta\big)
\]
exist (which holds under standard smoothness assumptions on $h_T$, $\pi_a^\psi$, and $\ell$);

\item For each compact $K \subset \Phi \times \Psi$, there exist integrable functions
\[
g_\phi : \Omega \to [0,\infty),
\quad
g_\psi : \Omega \to [0,\infty)
\]
such that for all $(\phi, \psi) \in K$ and $\mu$-almost every $(\theta, \varepsilon_{1:T}) \in \Omega$,
\begin{equation}
\big|
\partial_\phi \
\ell\big(\pi_a^\psi(h_T(\theta, \varepsilon_{1:T}; \pi_d^\phi)),\theta\big)
\big|
\le g_\phi(\omega),
\quad\text{and}\quad
\int_\Omega g_\phi(\theta, \varepsilon_{1:T})\,\mathrm{d}\mu(\theta, \varepsilon_{1:T}) < \infty,
\end{equation}
and
\begin{equation}
\big|
\partial_\psi \
\ell\big(\pi_a^\psi(h_T(\theta, \varepsilon_{1:T}; \pi_d^\phi)),\theta\big)
\big|
\le g_\psi(\omega),
\quad\text{and}\quad
\int_\Omega g_\psi(\theta, \varepsilon_{1:T})\,\mathrm{d}\mu(\theta, \varepsilon_{1:T}) < \infty.
\end{equation}
\end{enumerate}

These conditions follow from the differentiation-under-the-integral theorem. The following section explains how these necessary conditions for differentiating under the integral sign can be derived from assumptions on the model.

\subsection{Conditions on the BED model}
\label{app:reg_conditions_BED}

Let $\Phi \subset \mathbb{R}^{d_\phi}$ and $\Psi \subset \mathbb{R}^{d_\psi}$ be open parameter sets. In order to have the map 
    \[
    (\phi,\psi) \mapsto \ell(\pi_a^\psi(h_T(\theta, \varepsilon_{1:T}; \pi_d^\phi)),\theta)
    \]
    differentiable, we have to satisfy the following conditions: 

\begin{enumerate}
    \item \textbf{Differentiable design network}: 
    The design-induced history
    \begin{align*}
        h_T &: \Theta \times \mathcal{E} \times \Phi \to \mathcal{Y}_T \times \Xi^T \\
        &(\theta,\varepsilon_{1:T}, \phi) \mapsto h_T(\theta,\varepsilon_{1:T}; \pi_d^\phi),
    \end{align*}
    is measurable in $(\theta,\varepsilon_{1:T})$ and differentiable in $\phi$ for almost all $(\theta,\varepsilon_{1:T})$, with
    \[
        \nabla_\phi h_T(\theta,\varepsilon_{1:T}, \phi)
    \]
    existing and being finite. In particular, in all deep learning architectures considered to model $h_T$, this is implemented as a composition of differentiable operations with respect to~$\phi$ (e.g.\ affine maps and smooth or piecewise-smooth activations), so that it is differentiable almost everywhere.

    \item \textbf{Differentiable prediction network}:
    The downstream prediction network
    \[
        \pi_a^\psi : \mathcal{H}_T \to \Theta
    \]
    is measurable in its input and differentiable in $(\psi,h_T)$, i.e.
    \[
        (\psi,h) \mapsto \pi_a^\psi(h)
    \]
    is differentiable for all $(\psi,h)$, or at least almost everywhere, with Jacobians
    \[
        \nabla_\psi \pi_a^\psi(h), \qquad \nabla_h \pi_a^\psi(h)
    \]
    existing and finite. This holds in particular when $\pi_a^\psi$ is a neural network with standard smooth (or piecewise-linear) activations such as $\mathrm{tanh}$, $\mathrm{softplus}$, $\mathrm{ReLU}$ or $\mathrm{leaky-ReLU}$, which are differentiable almost everywhere.

    \item \textbf{Differentiable loss}:
    The downstream loss
    \[
        \ell : \Theta \times \Theta \to \mathbb{R}, \quad (a,\theta) \mapsto \ell(a,\theta),
    \]
    is differentiable in its first argument $a$ for all $(a,\theta)$ (or almost everywhere), with gradient
    \[
        \nabla_a \ell(a,\theta)
    \]
    existing and finite (e.g.\ mean-squared error, cross-entropy, etc.).
\end{enumerate}
Under these conditions, for almost all $(\theta,\varepsilon_{1:T})$, the composite map
\[
(\phi,w) \longmapsto \ell\big(\pi_a^\psi(h_T(\theta,\varepsilon_{1:T},\phi)),\theta\big)
\]
is differentiable, and its gradient can be computed by the chain rule. 

\newpage
\section{Gradient Computations}

\subsection{Gradient Computations for Separated Networks}
\label{appendix:gradient_computations}

We denote, for a single draw $(\theta,\varepsilon_{1:T}) \sim p(\theta)q(\varepsilon_{1:T})$,
\begin{equation*}
L(\phi,\psi;\theta,\varepsilon_{1:T})
:=
\ell\big(\pi_a^\psi(h_T(\theta,\varepsilon_{1:T};\pi_d^\phi)),\,\theta\big),
\end{equation*}
so that
\begin{equation}
\widehat{\nabla_{\phi,\psi}}\,\mathrm{EFL}(\phi,\psi)
=
\nabla_{\phi,\psi} L(\phi,\psi;\theta,\varepsilon_{1:T}).
\end{equation}

\paragraph{Gradient with respect to $\psi$.}
Since $h_T$ does not depend on $\psi$, we have
\begin{align}
\nabla_\psi L(\phi,\psi;\theta,\varepsilon_{1:T})
&=
\nabla_\psi \,\ell\big(\pi_a^\psi(h_T(\theta,\varepsilon_{1:T};\pi_d^\phi)),\,\theta\big) \\
&=
\nabla_a \ell\big(a,\theta\big)\Big|_{a=\pi_a^\psi(h_T)}
\;\nabla_\psi \pi_a^\psi\big(h_T(\theta,\varepsilon_{1:T};\pi_d^\phi)\big),
\end{align}
where $\nabla_a \ell(a,\theta)$ denotes the gradient of the loss with respect to its first argument.

\paragraph{Gradient with respect to $\phi$.}
Here the dependence on $\phi$ is only through the history $h_T(\theta,\varepsilon_{1:T};\pi_d^\phi)$, so the chain rule yields
\begin{align}
\nabla_\phi L(\phi,\psi;\theta,\varepsilon_{1:T})
&=
\nabla_\phi \,\ell\big(\pi_a^\psi(h_T(\theta,\varepsilon_{1:T};\pi_d^\phi)),\,\theta\big) \\
&=
\nabla_a \ell\big(a,\theta\big)\Big|_{a=\pi_a^\psi(h_T)}
\;\nabla_{h_T} \pi_a^\psi\big(h_T(\theta,\varepsilon_{1:T};\pi_d^\phi)\big)
\;\nabla_\phi h_T(\theta,\varepsilon_{1:T};\pi_d^\phi).
\end{align}

\paragraph{Joint gradient estimator.}
Stacking the two components, a single-sample Monte Carlo estimator of the full gradient is thus
\[
\widehat{\nabla_{\phi,\psi}}\,\mathrm{EFL}(\phi,\psi)
=
\begin{pmatrix}
\nabla_\phi L(\phi,\psi;\theta,\varepsilon_{1:T}) \\
\nabla_\psi L(\phi,\psi;\theta,\varepsilon_{1:T})
\end{pmatrix}.
\]
In practice, automatic differentiation computes precisely these quantities by backpropagating through the computational graph
\[
(\phi,\psi,\theta,\varepsilon_{1:T})
\mapsto h_T(\theta,\varepsilon_{1:T};\pi_d^\phi)
\mapsto \pi_a^\psi(h_T)
\mapsto \ell(\pi_a^\psi(h_T),\theta).
\]

\subsection{Gradient Estimator in case of weight sharing}
\label{appendix:gradient_weight_sharing}

We now make explicit the form of the Monte Carlo gradient estimator in the case where both the acquisition and downstream policies share the same encoder network, with a sum-pooling aggregation step of the encodings and a specific prediction head for both. It is a common choice of architecture for policy-based methods. For a single draw $(\theta,\varepsilon_{1:T}) \sim p(\theta)q(\varepsilon_{1:T})$, we consider
the loss
\[
L(\phi_1,\phi_2,\psi;\theta,\varepsilon_{1:T})
:=
\ell\big(\pi_a^\psi(h_T(\theta,\varepsilon_{1:T};\pi_d^{\phi_1,\phi_2})),\,\theta\big),
\]
where the downstream prediction network is defined via a shared DeepSets-style encoder:
\begin{align}
e_t &= E_{\phi_1}(\xi_t,y_t) \in \mathbb{R}^d, \qquad t=1,\dots,T, \\
R(h_T) &= \sum_{t=1}^T e_t 
       = \sum_{t=1}^T E_{\phi_1}(\xi_t,y_t), \\
\pi_a^\psi(h_T) &= f_\psi\big(R(h_T)\big).
\end{align}
A single-sample Monte Carlo estimator of the gradient of the expected future loss is then
\[
\widehat{\nabla_{\phi_1,\phi_2,\psi}}\,\mathrm{EFL}(\phi_1,\phi_2,\psi)
=
\nabla_{\phi_1,\phi_2,\psi}
\,L(\phi_1,\phi_2,\psi;\theta,\varepsilon_{1:T}).
\]

\paragraph{Gradient with respect to the prediction head $\psi$.}
The dependence on $\psi$ is only through $a = f_\psi(R(h_T))$, so the chain rule yields
\begin{align}
\nabla_\psi L(\phi_1,\phi_2,\psi;\theta,\varepsilon_{1:T})
&=
\nabla_\psi\,\ell\big(f_\psi(R(h_T)),\theta\big) \\
&=
\nabla_a \ell\big(a,\theta\big)\Big|_{a=f_\psi(R(h_T))}
\;\nabla_\psi f_\psi\big(R(h_T)\big).
\end{align}

\paragraph{Gradient with respect to the shared encoder parameters $\phi_1$.}
Here, $\phi_1$ affects the loss both through the encoder outputs
$e_t = E_{\phi_1}(\xi_t,y_t)$ and, through the design policy, through the generated history
$h_T(\theta,\varepsilon_{1:T};\pi_d^{\phi_1,\phi_2})$.
Applying the chain rule gives
\begin{align}
\nabla_{\phi_1} L(\phi_1,\phi_2,\psi;\theta,\varepsilon_{1:T})
&=
\nabla_a \ell\big(a,\theta\big)\Big|_{a=f_\psi(R(h_T))}
\;\nabla_{R} f_\psi\big(R(h_T)\big)
\;\nabla_{\phi_1} R(h_T).
\end{align}
The term $\nabla_{\phi_1}R(h_T)$ includes both the direct dependence of the encoder
$E_{\phi_1}$ on $\phi_1$ and the indirect dependence through the history
$h_T(\theta,\varepsilon_{1:T};\pi_d^{\phi_1,\phi_2})$. In particular,
\begin{align}
\nabla_{\phi_1} R(h_T)
&=
\nabla_{\phi_1}
\left[
\sum_{t=1}^T E_{\phi_1}(\xi_t,y_t)
\right],
\end{align}
where $(\xi_t,y_t)_{t=1}^T$ are generated by the design policy
$\pi_d^{\phi_1,\phi_2}$.

\paragraph{Gradient with respect to the design head $\phi_2$.}
The parameters $\phi_2$ enter through the design policy that generates the history
$h_T(\theta,\varepsilon_{1:T};\pi_d^{\phi_1,\phi_2})$. Denoting this dependence explicitly,
the chain rule gives
\begin{align}
\nabla_{\phi_2} L(\phi_1,\phi_2,\psi;\theta,\varepsilon_{1:T})
&=
\nabla_{h_T}\,\ell\big(f_\psi(R(h_T)),\theta\big)
\;\nabla_{\phi_2} h_T(\theta,\varepsilon_{1:T};\pi_d^{\phi_1,\phi_2}),
\end{align}
where the term $\nabla_{h_T}\ell(\cdot)$ itself decomposes through the encoder and pooling:
\begin{align}
\nabla_{h_T}\,\ell\big(f_\psi(R(h_T)),\theta\big)
&=
\nabla_a \ell\big(a,\theta\big)\Big|_{a=f_\psi(R(h_T))}
\;\nabla_{R} f_\psi\big(R(h_T)\big)
\;\nabla_{h_T} R(h_T).
\end{align}
In practice, automatic differentiation computes exactly these gradients by backpropagating 
through the computational graph
\[
(\phi_1,\phi_2,\psi,\theta,\varepsilon_{1:T})
\mapsto h_T(\theta,\varepsilon_{1:T};\pi_d^{\phi_1,\phi_2})
\mapsto \{E_{\phi_1}(\xi_t,y_t)\}_{t=1}^T
\mapsto R(h_T)
\mapsto f_\psi(R(h_T))
\mapsto \ell(\cdot,\theta).
\]
Finally, a Monte Carlo estimator of the full gradient is obtained by averaging these
single-sample gradients over i.i.d.\ draws $(\theta^{(i)},\varepsilon_{1:T}^{(i)})$.

\newpage
\section{Network Architecture}
\label{appendix:architectures}

Our co-optimization framework, introduced in Section~\ref{meth}, is agnostic to the specific parameterization of the design policy $\pi_d^\phi$ and the downstream action policy $\pi_a^\psi$. Any differentiable architecture can be used, provided that the resulting computation graph supports pathwise optimization through the reparameterized simulator. Nevertheless, architectural choices play an important role in practice, since they determine how the experimental history is represented and which inductive biases are made available to the policy. In particular, the architecture can encode exchangeability, temporal ordering, Markovian structure, spatial symmetries, or more general dependencies between past designs and outcomes.

\paragraph{Design policy.}
The design policy $\pi_d^\phi$ is the central component of the acquisition mechanism. At each experimental step, it maps the current history
\[
    h_t = \{(\xi_s,y_s)\}_{s=1}^t
\]
to the next design, or to a distribution over the designs, 
\[
\begin{cases}
    \xi_{t+1} = \pi_d^\phi(h_t) \\
     \xi_{t+1} \sim p_\phi(\xi \mid h_t) =: \pi_d^\phi(h_t)
\end{cases}
\]
In deep adaptive design, this mapping is commonly parameterized through an encoder--aggregator--emitter architecture. First, each observed design--outcome pair $(\xi_s,y_s)$ is embedded by a local encoder,
\[
    r_s = E_{\phi_1}(\xi_s,y_s),
\]
or, more generally, by separate design and outcome encoders followed by a fusion module. The resulting sequence or set of pair embeddings $(r_1,\ldots,r_t)$ is then compressed into a global history representation,
\[
    z_t = A_{\phi_2}(r_1,\ldots,r_t),
\]
where $A_{\phi_2}$ is a history aggregation module. Finally, an emitter network maps this history state to the next design,
\[
    \xi_{t+1} = G_{\phi_3}(z_t),
\]
or to a distribution over candidate designs in stochastic acquisition policies.

The main decision-making capacity of the design policy is often concentrated in the aggregation module. The encoder produces local representations of individual experiments, whereas the aggregator determines how these pieces of evidence are combined to support the next acquisition decision. Different choices of $A_{\phi_2}$ encode different assumptions about the structure of the history. For exchangeable probabilistic models of histories, permutation-invariant architectures such as DeepSets~\citep{zaheer2017deepsets,foster_deep_2021} are natural: they aggregate pair embeddings through a symmetric pooling operation and therefore make the design decision independent of the ordering of observations. For temporally ordered or dynamically evolving systems, recurrent architectures such as LSTMs~\citep{hochreiter1997lstm,ivanova_implicit_2021} can encode the sequential dependence between past interventions and outcomes. Attention-based architectures, including Set Transformers \citep{lee2019settransformer} and Transformer Neural Process (TNP)-style encoders \citep{tnp_ashman24a}, provide a more flexible alternative by allowing the policy to learn interactions between all past design--outcome pairs and to condition the next design on the most informative parts of the history.

\paragraph{Downstream action network.}
The downstream action policy maps the final experimental history to a terminal action,
\[
    a_T = \pi_a^\psi(h_T).
\]
This action may represent a point estimate, a classifier output, a decision rule, or any task-specific prediction required after data acquisition. Since the downstream policy is only evaluated after the acquisition phase and needs to extract information contained in the history, $\pi_a^\psi$ can be implemented with a lighter architecture, or can follow the same encoder--aggregator--emitter structure as the design policy:
\[
    r_s = E_{\psi_1}(\xi_s,y_s), 
    \qquad
    z_T = A_{\psi_2}(r_1,\ldots,r_T),
    \qquad
    a_T = G_{\psi_3}(z_T).
\]
This shared architectural template is especially useful when the downstream action depends on subtle interactions across the entire history. In such cases, using a rich history encoder for $\pi_a^\psi$ can improve the quality of the terminal decision and provide stronger gradients to guide the design policy during co-optimization.

\paragraph{Enhancing coupling through weight sharing.}
When the design policy and downstream action policy are built from compatible encoder--aggregator architectures, one can further strengthen their coupling by sharing part of their representations. For example, both policies may use a common local encoder $E_\omega$ for design--outcome pairs, while retaining separate aggregation and output heads:
\[
    r_s = E_\omega(\xi_s,y_s),
\]
\[
    \xi_{t+1}
    =
    G_d^{\phi}
    \left(
        A_d^{\phi}(r_1,\ldots,r_t)
    \right),
    \qquad
    a_T
    =
    G_a^{\psi}
    \left(
        A_a^{\psi}(r_1,\ldots,r_T)
    \right).
\]
A stronger form of sharing can also be used, where the local encoder and part of the history aggregator are shared, while task-specific heads remain distinct:
\[
    z_t = A_\omega(E_\omega(\xi_1,y_1),\ldots,E_\omega(\xi_t,y_t)),
\]
\[
    \xi_{t+1}=G_d^\phi(z_t),
    \qquad
    a_T=G_a^\psi(z_T).
\]
This form of weight sharing has several advantages. First, it encourages both the design policy and the downstream predictor to build compatible representations of the experimental history. The design policy is then trained to collect data that are useful under the same representational geometry used by the downstream action network. Second, sharing parameters strengthens the coupling between the two objectives: gradients from the downstream loss directly shape the representation used for acquisition. Third, it reduces the number of free parameters, which can improve statistical efficiency and regularize training, especially when the number of simulated trajectories is limited.

\subsection{TNP-Like Policy Architecture}
\label{appendix:design_policy_architecture}

In the experiments, we use a modular neural architecture for the adaptive design policy $\pi_d^\phi$. The architecture follows the general encoder--aggregator--emitter decomposition described above, with a Transformer-based module used to summarize the experimental history. Its purpose is to map the information collected so far into the next design decision, while remaining flexible enough to handle different experimental tasks. The architecture can be viewed as a TNP-like design policy where past design--outcome pairs play the role of context observations, and a learned decision token queries this context in order to produce the next design. We detail below the different modules composing the architecture used throughout the experiments. Experiment-specific dimensions and hyperparameters are denoted by variables, whereas numerical values indicate settings shared across all experiments.

\paragraph{Encoding design--outcome pairs.}
Each past design $\xi_t$ and outcome $y_t$ is first mapped to a latent representation. In practice, the design and outcome are encoded separately,
\[
    e^\xi_t = f_\xi(\xi_t),
    \qquad
    e^y_t = f_y(y_t),
\]

\begin{table}[H]
\centering
\scriptsize
\begin{minipage}{0.46\textwidth}
\centering
\caption{Design encoder architecture.}
\label{tab:design_encoder_architecture}
\begin{tabular}{llll}
\toprule
\textbf{Layer} & \textbf{Description} & \textbf{Dimension} & \textbf{Activation} \\
\midrule
Input & $\xi$ & $\dim_\xi$ & -- \\
H1 & Fully connected & 64 & ReLU \\
Output & Fully connected & $d_{\mathrm{enc},\xi}$ & -- \\
\bottomrule
\end{tabular}
\end{minipage}
\hspace{0.02\textwidth}
\vrule width 0.4pt
\hspace{0.02\textwidth}
\begin{minipage}{0.46\textwidth}
\centering
\caption{Outcome encoder architecture.}
\label{tab:emitter_architecture}
\begin{tabular}{llll}
\toprule
\textbf{Layer} & \textbf{Description} & \textbf{Dimension} & \textbf{Activation} \\
\midrule
Input & $y$ & $\dim_y$ & -- \\
H1 & Fully connected & 64 & ReLU \\
Output & Fully connected & $d_{\mathrm{enc},y}$ & -- \\
\bottomrule
\end{tabular}
\end{minipage}
\end{table}

and then combined into a single representation of the experiment performed at step $t$:
\[
    r_t = f_{\mathrm{pair}}\left([e^\xi_t ; e^y_t]\right).
\]
\begin{table}[H]
\centering
\caption{Fusion module architecture.}
\label{tab:fusion_module_architecture}
\begin{tabular}{llll}
\toprule
\textbf{Layer} & \textbf{Description} & \textbf{Dimension} & \textbf{Activation} \\
\midrule
Input & $[e^\xi; e^{y}]$ & $d_{\mathrm{enc},y} + d_{\mathrm{enc},\xi}$ & -- \\
H1 & Fully connected & $64$ & GELU \\
Output & Fully connected & $d_{\mathrm{enc},r}$ & -- \\
\bottomrule
\end{tabular}
\end{table}
This pair representation captures the local information revealed by one design--outcome interaction. Applying the same construction to all previous steps produces a sequence of history tokens
\[
    R_t = (r_1,\ldots,r_t).
\]
When the task is temporally structured, these tokens can also be augmented with time information, so that the policy can distinguish early and late observations. When the task is exchangeable, the same representation can instead be interpreted as an unordered set of observed experiments.

\paragraph{Aggregating the experimental history.}
The sequence of pairwise representations $(r_1,\ldots,r_t)$ is processed by a Transformer-based history aggregator. Each representation is first projected to a common embedding dimension, then enriched with a learned type embedding indicating that it corresponds to a history token. When time embeddings are enabled, a learned temporal embedding is also added to preserve the order of the acquisition trajectory. A learned decision token is appended at the end of the sequence and is assigned its own type embedding, together with the time index corresponding to the next decision step. The resulting token sequence is normalized and passed through a stack of Transformer encoder layers.

The final representation of the decision token is used as the summary of the experimental history. This token acts as a learnable query that attends to all previous design--outcome pairs and extracts the information relevant for choosing the next design. Denoting this representation by $h_t$, the aggregation step can be written as
\[
    h_t
    =
    f_{\mathrm{hist}}(r_1,\ldots,r_t),
    \qquad
    h_t \in \mathbb{R}^{d_h}.
\]
This attention-based aggregation avoids imposing a fixed pooling rule over the experimental history. It allows the policy to model interactions between past observations, account for the temporal order of the acquisitions, and focus on the parts of the trajectory that are most informative for the next experimental decision.

\begin{table}[H]
\centering
\small
\caption{History aggregator architecture. The module maps the sequence of pair representations $(r_1,\ldots,r_t)$ to a single history representation $h_t$.}
\label{tab:history_aggregator_architecture}
\begin{tabular}{lll}
\toprule
\textbf{Component} & \textbf{Configuration} & \textbf{Output shape} \\
\midrule
Input sequence 
& Pair representations $(r_1,\ldots,r_t)$, $r_i \in \mathbb{R}^{d_{\mathrm{enc},r}}$ 
& $t \times d_{\mathrm{enc},r}$ \\

Input adapter 
& Linear layer $d_{\mathrm{enc},r} \rightarrow d_{\mathrm{enc},h}$ 
& $t \times d_{\mathrm{enc},h}$ \\

History token embeddings 
& Add learned history type embeddings 
& $t \times d_{\mathrm{enc},h}$ \\

Time embeddings 
& Add learned time embeddings, enabled 
& $t \times d_{\mathrm{enc},h}$ \\

Decision token 
& Append learned query token with decision type embedding 
& $(t+1) \times d_{\mathrm{enc},h}$ \\

Transformer encoder 
& $2$ layers, $4$ heads, feedforward dimension $128$ 
& $(t+1) \times d_{\mathrm{enc},h}$ \\

Readout 
& Final decision-token representation 
& $d_{\mathrm{enc},h}$ \\
\bottomrule
\end{tabular}
\end{table}

\paragraph{Emitting the next design.}
Finally, an emitter network maps the history representation to the next design:
\[
    \xi_{t+1} = f_{\mathrm{emit}}(h_t).
\]

\begin{table}[H]
\centering
\caption{Emitter network architecture.}
\label{tab:emitter_architecture}
\begin{tabular}{llll}
\toprule
\textbf{Layer} & \textbf{Description} & \textbf{Dimension} & \textbf{Activation} \\
\midrule
Input & History representation $h_t$ & $d_{\mathrm{enc},h}$ & -- \\
H1 & Fully connected & $64$ & GELU \\
Output & Fully connected & $\dim_\xi$ & -- \\
\bottomrule
\end{tabular}
\end{table}

\paragraph{Complete and modular design policy.}
The complete design policy is obtained by composing the modules. Given the experimental history $\mathcal{H}_t=\{(\xi_i,y_i)\}_{i=1}^t$, the policy can be written as
\[
    \pi_\phi(\mathcal{H}_t)
    =
    f_{\mathrm{emit}}
    \circ
    f_{\mathrm{hist}}
    \circ
    f_{\mathrm{pair}}
    \circ
    (f_\xi,f_y)
    (\{(\xi_i,y_i)\}_{i=1}^t).
\]
A key advantage of this architecture is its modularity. The task-specific encoders $f_\xi$ and $f_y$ can be adapted to the structure of each experiment, while the fusion, history aggregation, and emission modules are kept fixed. In low-dimensional settings, the encoders are simple multilayer perceptrons, whereas for structured or high-dimensional observations, $f_y$ can be replaced by a convolutional or patch-based encoder. This architecture therefore provides a common backbone across all experiments, both for our task-driven method and for the DAD/iDAD baselines. The same policy network can be trained end-to-end by backpropagating gradients either through the downstream task loss or through the sequential PCE objective.

\begin{figure}[t]
\centering
\resizebox{\textwidth}{!}{%
\begin{tikzpicture}[
    font=\small,
    >=Latex,
    box/.style={
        draw,
        rounded corners,
        align=center,
        minimum height=1.1cm,
        text width=2.8cm
    },
    io/.style={box, fill=gray!10},
    enc/.style={box, fill=blue!8},
    mod/.style={box, fill=green!8},
    outbox/.style={box, fill=red!8},
    group/.style={draw, dashed, rounded corners, inner sep=8pt},
    arr/.style={->, thick},
    line/.style={thick}
]

\node[io, text width=3.2cm, minimum height=1.35cm] (hist) at (0,0)
{
    \textbf{Experimental history}\\[1mm]
    $\mathcal{H}_t=\{(\xi_i,y_i)\}_{i=1}^t$
};

\node[enc, text width=3.0cm, minimum height=1.1cm] (xienc) at (4.4,0.9)
{
    \textbf{Design encoder}\\[1mm]
    $e_i^\xi=f_\xi(\xi_i)$
};

\node[enc, text width=3.0cm, minimum height=1.1cm] (yenc) at (4.4,-0.9)
{
    \textbf{Outcome encoder}\\[1mm]
    $e_i^y=f_y(y_i)$
};

\node[mod, text width=3.1cm, minimum height=1.25cm] (fusion) at (8.6,0)
{
    \textbf{Pair fusion}\\[1mm]
    $r_i=f_{\mathrm{pair}}(e_i^\xi,e_i^y)$
};

\node[mod, text width=3.7cm, minimum height=1.65cm] (agg) at (13.6,0)
{
    \textbf{History aggregation}\\[0.8mm]
    Decision-token Transformer\\[0.8mm]
    $h_t=f_{\mathrm{hist}}(r_1,\ldots,r_t)$
};

\node[mod, text width=2.9cm, minimum height=1.25cm] (emit) at (18.0,0)
{
    \textbf{Emitter}\\[1mm]
    $\xi_{t+1}=f_{\mathrm{emit}}(h_t)$
};

\node[outbox, text width=2.3cm, minimum height=1.15cm] (nextd) at (21.7,0)
{
    \textbf{Next design}\\[1mm]
    $\xi_{t+1}$
};

\draw[arr] (hist.20) to[out=0,in=180] (xienc.190);
\draw[arr] (hist.-20) to[out=0,in=180] (yenc.170);

\draw[arr] (xienc.east) to[out=0,in=180] (fusion.170);
\draw[arr] (yenc.east)  to[out=0,in=180] (fusion.190);

\draw[arr] (fusion.east) -- node[above] {$r_1,\ldots,r_t$} (agg.west);
\draw[arr] (agg.east) -- node[above] {$h_t$} (emit.west);
\draw[arr] (emit.east) -- (nextd.west);

\node[
    group,
    fit=(xienc)(yenc),
    label={[align=center]below:\textbf{Task-specific encoders}}
] {};

\node[
    group,
    fit=(fusion)(agg)(emit),
    label={[align=center]below:\textbf{Shared policy modules}}
] {};

\end{tikzpicture}
}
\caption{Modular architecture of the design policy. Past designs and outcomes are encoded separately, fused into pair representations, aggregated by a decision-token Transformer, and mapped by the emitter to the next design.}
\label{fig:design_policy_architecture}
\end{figure}

\subsection{Downstream warm-up}
\label{app:downstream_warmup}

In practice, before running the joint optimisation described in \cref{alg:main_alg}, we may initialise the action policy with a short downstream warm-up phase. During this phase, the design policy is kept fixed and is not used to generate designs. Instead, design trajectories are sampled from a task-relevant random distribution \(r(\xi_{1:T})\), and only the downstream-action policy \(\pi_a^\psi\) is trained to minimise the target downstream loss. Given samples \(\theta^b \sim p(\theta)\), simulator noise \(\varepsilon_{1:T}^b \sim q(\varepsilon_{1:T})\), and random designs \(\xi_{1:T}^b \sim r(\xi_{1:T})\), the warm-up gradient estimator is
\begin{equation}
\label{eqn:mc_gradient_est_warmup}
\widehat{\nabla_{\psi}}\,\mathcal{L}_{\mathrm{down}}(\psi)
=
\frac{1}{B} \sum_{b=1}^B
\nabla_{\psi}
\ell\big(
\pi_a^\psi(h_T(\theta^b,\varepsilon_{1:T}^b;\xi_{1:T}^b)),
\theta^b
\big).
\end{equation}
Although the resulting random designs are generally suboptimal, they provide an initial supervised signal for the action policy. Empirically, this warm-up can improve the subsequent joint optimisation: when the design policy starts being trained, the downstream network already maps histories to meaningful actions, yielding more informative gradients for the design policy and leading to a better coupling between acquisition and downstream prediction.

\begin{algorithm}[H]
\caption{\textsc{Action-BED} with downstream warm-up}
\label{alg:main_alg_warmup}
\begin{tcolorbox}
\begin{algorithmic}[1]
\vspace{-4pt}
\Require Policies $\pi_d^\phi, \pi_a^\psi$, initialisations $\phi_0, \psi_0$
\Require Warm-up steps $K_{\mathrm{warm}}$, joint steps $K$, random design distribution $r(\xi_{1:T})$

\Statex \textbf{Downstream warm-up}
\For{$k = 0,1,\dots,K_{\mathrm{warm}}-1$}
    \State $\theta^{1:B}, \varepsilon_{1:T}^{1:B}, \xi_{1:T}^{1:B}
    \sim p(\theta)q(\varepsilon_{1:T})r(\xi_{1:T})$
    \State $g_k^\psi \gets
    \widehat{\nabla_\psi}\,
    \mathcal{L}_{\mathrm{down}}(\psi_k)$ \ (eq. \ref{eqn:mc_gradient_est_warmup})
    \State $\psi_{k+1} \gets \textsc{Update}_a(\psi_k, g_k^\psi)$
\EndFor
\State $\psi_0 \gets \psi_{K_{\mathrm{warm}}}$

\Statex \textbf{Joint Action-BED optimisation}
\For{$k = 0,1,\dots,K-1$}
    \State $\theta^{1:B}, \varepsilon_{1:T}^{1:B} \sim p(\theta)q(\varepsilon_{1:T})$
    \State $g^\phi_k, g_k^\psi \gets
    \widehat{\nabla_{\phi,\psi}}\,
    \mathrm{EFL}(\phi_k, \psi_k)$
    \;(\cref{eqn:mc_gradient_est})
    \State $\phi_{k+1}, \psi_{k+1}
    \gets
    \textsc{Update}(\phi_k, g^\phi_k, \psi_k, g^\psi_k)$
\EndFor
\State \Return $\phi_K, \psi_K$
\end{algorithmic}
\end{tcolorbox}
\end{algorithm}

\newpage
\section{Bayes-optimal action under MSE and permutation-invariant MSE}
\label{app:posterior-mean-optimality}

We provide here a self-contained derivation of the Bayes-optimal action associated with the downstream decision problem considered in the source location finding experiment in Section~\ref{sec:slf_exp}. Since the general Bayes-action formulation is introduced in the main text, we only recall the minimal setup needed for the derivation. We then derive the posterior-mean optimality result for the standard squared error loss and extend it to the permutation-invariant MSE used in the experiment.

\subsection{Bayes-action setup}

Let $\theta \in \Theta$ denote the latent parameter and let $h_T \in \mathcal{H}_T$ denote the experimental trajectory induced by a fixed data-acquisition policy $\pi_d^\phi$. A downstream action policy is a measurable function
\[
\pi_a^\psi : \mathcal{H}_T \to \mathcal{A},
\]
implemented in practice by the downstream-action network.

For a loss function $\ell : \mathcal{A} \times \Theta \to \mathbb{R}_+$, and a fixed trajectory $h_T$, we call the Bayes-optimal action the minimizer of the posterior expected loss:
\begin{equation}
a^\star(h_T)
=
\arg\min_{a \in \mathcal{A}}
\mathbb{E}_{p(\theta \mid h_T)}
\left[
\ell(a,\theta)
\right].
\label{eq:bayes_action_general_appendix}
\end{equation}

\subsection{Standard MSE loss}

We first specialize to the standard squared error loss. We assume that $\Theta = \mathcal{A} \subseteq \mathbb{R}^d$, which is the case for the source location finding task, and we define
\[
\ell_{\mathrm{MSE}}(\hat{\theta},\theta)
=
\|\hat{\theta}-\theta\|_2^2.
\]
For a fixed trajectory $h_T$, the conditional risk is
\begin{equation}
\mathcal{R}_{\mathrm{MSE}}(\hat{\theta}\mid h_T)
=
\mathbb{E}_{p(\theta \mid h_T)}
\left[
\|\hat{\theta}-\theta\|_2^2
\right].
\end{equation}
Expanding the squared norm gives
\begin{align}
\mathcal{R}_{\mathrm{MSE}}(\hat{\theta}\mid h_T)
&=
\mathbb{E}
\left[
\|\hat{\theta}\|_2^2
-2\langle \hat{\theta},\theta\rangle
+\|\theta\|_2^2
\,\middle|\,
h_T
\right] \nonumber \\
&=
\|\hat{\theta}\|_2^2
-2
\left\langle
\hat{\theta},
\mathbb{E}[\theta \mid h_T]
\right\rangle
+
\mathbb{E}
\left[
\|\theta\|_2^2
\mid h_T
\right].
\end{align}
The last term does not depend on $\hat{\theta}$. Differentiating with respect to $\hat{\theta}$ and setting the gradient to zero yields
\[
\nabla_{\hat{\theta}}
\mathcal{R}_{\mathrm{MSE}}(\hat{\theta}\mid h_T)
=
2\hat{\theta}
-
2\mathbb{E}[\theta \mid h_T]
=
0.
\]
Hence,
\begin{equation}
\hat{\theta}^\star_{\mathrm{MSE}}(h_T)
=
\mathbb{E}[\theta \mid h_T].
\label{eq:posterior_mean_appendix}
\end{equation}
Thus, under the standard MSE loss, the Bayes-optimal action is the posterior mean.

\subsection{Permutation-invariant MSE loss}
\label{app:posterior-mean-optimality-invariant}

However, in our setting the latent parameter is not naturally ordered. For example, in the source location finding problem with $K$ exchangeable sources, the parameter can be written as
\[
\theta = (\theta_1,\dots,\theta_K) \in \mathbb{R}^{K \times d},
\]
but the ordering of the sources is arbitrary. In that case, penalizing an estimate $\hat{\theta}$ using the standard MSE would incorrectly distinguish between two estimates that differ only by a permutation of the sources. Let $\mathcal{S}_K$ denote the symmetric group over $K$ elements. For a permutation $\sigma \in \mathcal{S}_K$, write
\[
\sigma \cdot \theta
=
(\theta_{\sigma(1)},\dots,\theta_{\sigma(K)}).
\]
The permutation-invariant MSE is defined as
\begin{equation}
\ell_{\mathrm{PI\text{-}MSE}}(\hat{\theta},\theta)
=
\min_{\sigma \in \mathcal{S}_K}
\frac{1}{K}
\sum_{k=1}^K
\left\|
\hat{\theta}_k
-
\theta_{\sigma(k)}
\right\|_2^2.
\label{eq:pi_mse_loss_appendix}
\end{equation}
Equivalently, identifying permutations with permutation matrices $P \in \mathcal{P}_K$, this can be written as
\begin{equation}
\ell_{\mathrm{PI\text{-}MSE}}(\hat{\theta},\theta)
=
\frac{1}{K}
\min_{P \in \mathcal{P}_K}
\left\|
\hat{\theta}
-
P\theta
\right\|_F^2,
\label{eq:pi_mse_matrix_appendix}
\end{equation}
where $\|\cdot\|_F$ denotes the Frobenius norm.

For a fixed trajectory $h_T$, the Bayes-optimal action under this loss is therefore
\begin{equation}
\hat{\theta}^\star_{\mathrm{PI\text{-}MSE}}(h_T)
=
\arg\min_{\hat{\theta} \in \mathbb{R}^{K \times d}}
\mathbb{E}_{p(\theta \mid h_T)}
\left[
\frac{1}{K}
\min_{P \in \mathcal{P}_K}
\left\|
\hat{\theta}
-
P\theta
\right\|_F^2
\right].
\label{eq:pi_mse_bayes_action_appendix}
\end{equation}

This is the Fréchet mean of the posterior distribution on the quotient space $\mathbb{R}^{K \times d}/\mathcal{S}_K$, where configurations that differ only by a permutation are identified. To make the optimality condition explicit, define, for a given estimate $\hat{\theta}$ and latent parameter $\theta$, an optimal alignment
\begin{equation}
P_{\hat{\theta}}(\theta)
\in
\arg\min_{P \in \mathcal{P}_K}
\left\|
\hat{\theta}
-
P\theta
\right\|_F^2.
\label{eq:optimal_alignment_appendix}
\end{equation}
Assuming that this optimal alignment is unique almost surely under $p(\theta \mid h_T)$, the conditional risk can be written locally as
\begin{equation}
\mathcal{R}_{\mathrm{PI\text{-}MSE}}(\hat{\theta}\mid h_T)
=
\frac{1}{K}
\mathbb{E}_{p(\theta \mid h_T)}
\left[
\left\|
\hat{\theta}
-
P_{\hat{\theta}}(\theta)\theta
\right\|_F^2
\right].
\end{equation}
By the envelope theorem, differentiating the minimized objective with respect to $\hat{\theta}$ gives
\begin{align}
\nabla_{\hat{\theta}}
\mathcal{R}_{\mathrm{PI\text{-}MSE}}(\hat{\theta}\mid h_T)
&=
\frac{2}{K}
\mathbb{E}_{p(\theta \mid h_T)}
\left[
\hat{\theta}
-
P_{\hat{\theta}}(\theta)\theta
\right] \nonumber \\
&=
\frac{2}{K}
\left(
\hat{\theta}
-
\mathbb{E}_{p(\theta \mid h_T)}
\left[
P_{\hat{\theta}}(\theta)\theta
\right]
\right).
\end{align}
Therefore, any differentiable Bayes-optimal action satisfies the fixed-point condition
\begin{equation}
\hat{\theta}^\star_{\mathrm{PI\text{-}MSE}}(h_T)
=
\mathbb{E}_{p(\theta \mid h_T)}
\left[
P_{\hat{\theta}^\star_{\mathrm{PI\text{-}MSE}}(h_T)}(\theta)\theta
\right].
\label{eq:pi_mse_fixed_point_appendix}
\end{equation}

Equation~\eqref{eq:pi_mse_fixed_point_appendix} shows that the Bayes-optimal action under permutation-invariant MSE is the posterior mean after optimally aligning each posterior sample to the estimate itself. The Bayes-optimal action is also not unique in the usual Euclidean sense: if $\hat{\theta}^\star$ is optimal, then $P\hat{\theta}^\star$ is also optimal for any permutation matrix $P \in \mathcal{P}_K$. Hence, the estimator is identifiable only up to permutation, which is precisely the desired invariance property.

\subsection{Interpretation for the downstream network}

Under the standard MSE loss, minimizing the empirical Bayesian risk trains the downstream-action network to approximate the posterior mean mapping
\[
h_T
\longmapsto
\mathbb{E}[\theta \mid h_T].
\]
Under the permutation-invariant MSE, the downstream network instead learns a representative of the posterior Fréchet mean on the quotient space induced by the permutation symmetry:
\[
h_T
\longmapsto
\arg\min_{\hat{\theta}}
\mathbb{E}_{p(\theta \mid h_T)}
\left[
\min_{P \in \mathcal{P}_K}
\frac{1}{K}
\left\|
\hat{\theta}
-
P\theta
\right\|_F^2
\right].
\]
Equivalently, it learns an unordered set-valued estimate of the latent sources. During training and evaluation, the optimal matching in
Equation~\eqref{eq:pi_mse_loss_appendix} removes the arbitrary label ordering of the sources, ensuring that two estimates that differ only by a permutation incur the same loss.

\newpage
\section{Experiment Details}
\label{appendix:experiments}

The implementation was carried out in part using the PyTorch library \cite{Paszke2019PyTorch}, and the code is publicly available. Experiments were conducted on a server equipped with an NVIDIA Titan RTX GPU (24GB, Turing architecture) and 24 CPU cores. Each experiment was run on a single GPU.

\subsection{Source Location Finding}
\label{appendix:locationfinding}

The \emph{source location finding} problem is a canonical benchmark in BED, in which the objective is to infer the unknown spatial positions of $K$ signal-emitting sources $\theta_1,\ldots,\theta_K \in \mathbb{R}^2$ from a sequence of noisy measurements. Following the experimental setting of \citep{foster_deep_2021}, at each design step $t$, an agent selects a sensor location $\xi_t \in \mathbb{R}^2$, and observes a scalar response $y_t \in \mathbb{R}$ corresponding to the aggregation of the signals emitted by the $K$ sources, whose expected magnitudes decay with the squared distance to the sensor according to a known physical attenuation model. A commonly used formulation assumes
\begin{equation}
    y_t = b + \sum_{k=1}^{K} \frac{A}{m + \| \xi_t - \theta_k \|^2} + \sigma\,\varepsilon_t,
    \qquad \varepsilon_t \sim \mathcal{N}(0,1),
    \label{eq:lf-model}
\end{equation}
where $A>0$ denotes the signal amplitude, $b$ is a background offset, $m>0$ is a stabilising constant preventing singularities at zero distance, and $\sigma$ controls the measurement noise level. Equivalently, denoting the noiseless aggregated intensity by
\[
    \mu(\theta,\xi)
    =
    b + \sum_{k=1}^{K} \frac{A}{m + \|\xi-\theta_k\|^2},
\]
the observation model can be written in terms of noisy measurements of the log total intensity, yielding the prior--likelihood specification
\[
\theta_k \overset{\text{i.i.d.}}{\sim} \mathcal{N}(0_2, I_2),
\qquad
\log y \mid \theta, \xi \sim \mathcal{N}\!\big(\log \mu(\theta,\xi), \sigma\big),
\]
where $\theta=(\theta_1,\ldots,\theta_K)\in\mathbb{R}^{K\times 2}$. This formulation clearly shows that the model satisfies the reparameterization assumption~\ref{app:reparam_fn}. The goal is to localize the sources with the highest possible accuracy: we seek to collect data from which a downstream model can be trained to identify all sources as effectively as possible. In our experimental setting, we consider the case $K=2$.
\\ \\
The task is challenging because information is spatially structured and highly non-uniform: sensor placements far from the source yield almost uninformative responses, while overly greedy exploitation may lead to premature localisation around suboptimal regions. From an inference perspective, the posterior distribution $p(\theta \mid y_{1:T}, \xi_{1:T})$ is typically multi-modal at early stages and becomes sharply concentrated as informative measurements accumulate. Thus, in this setting, amortised methods such as DAD aim to query maximally informative sensors and should generate sequences of designs that are effective for the prediction task. 

\begin{table}[h]
    \centering
    \label{tab:lf-mlp-hparams}
    \begin{tabular}{l c}
        \toprule
        \textbf{Parameter} & \textbf{Value} \\
        \midrule
        Number of sources, $K$ & 2 \\
        $\dim_\xi$ & 2 \\
        $\dim_y$ & 1 \\
        $T$ & 30 \\
        Base signal, $b$ & $10^{-1}$ \\
        Max. signal, $m$ & $10^{-4}$ \\
        $A$ & $1$ \\
        Signal noise, $\sigma$ & 0.5 \\
        \bottomrule
    \end{tabular}
    \caption{Location Finding Parameters}
\end{table}

\paragraph{Downstream Objective.}
In this experimental setting, the downstream objective is to predict the source locations. Prediction quality is evaluated using two losses: the mean squared error (MSE) and a logarithmic variant, denoted Log-MSE.
For problems with multiple sources, the source labels become exchangeable, meaning that permuting their ordering leaves both the likelihood and the signal observed by the sensor unchanged. Hence, the downstream task is intrinsically permutation-invariant. What matters is the predicted set of source locations, not the arbitrary ordering assigned to its elements. Consequently, the standard MSE loss is not the most relevant evaluation criterion in this setting, since it imposes a fixed correspondence between predicted and true sources and may penalize predictions that are correct up to a permutation. Therefore, we use in this experiment the permutation invariant MSE over the losses, defined as
\begin{align}
\ell_{MSE}(\theta, \hat{\theta}) 
&= \min_{\sigma \in \mathcal{S}_K}
\sum_{k=1}^K
\|\hat{\theta}_{\sigma(k)} - \theta_k\|^2,
\end{align}
where $\mathcal{S}_K$ is the set of permutations of $\{1, \cdots, K\}$ and $\hat{\theta}$ the point prediction of source locations. 
This optimization problem is significantly more complex than in the single-source case. The expectation involves a minimization over permutations inside the loss, leading to a highly non-linear and generally non-convex objective. This phenomenon poses significant challenges during training: 
\begin{itemize}
    \item \textbf{Label switching and multimodality.}  
    The posterior distribution is invariant under permutations of the sources, which typically results in a multimodal posterior with $K!$ symmetric modes.
    
    \item \textbf{Failure of the posterior mean.}  
    Averaging over these modes leads to a posterior mean that does not correspond to any meaningful configuration of sources, often placing predicted sources between true locations or collapsing them toward the center of mass.
\end{itemize}
As a result, under this permutation-invariant loss, the Bayes estimator is not the usual posterior mean on the labelled parameter space, but rather the posterior Fréchet mean induced by the optimal-assignment metric. Equivalently, the Bayes-optimal action is a fixed point where each predicted source is equal to the posterior mean of the true sources optimally matched to it; see Appendix~\ref{app:posterior-mean-optimality-invariant}. This loss is adopted because it directly reflects the objective of the experiment, namely recovering the unordered set of source locations. For the Log-MSE loss, however, we avoid this additional complication by training the downstream network to solve a harder labelled prediction problem, with the loss defined as 
\begin{equation}
    \ell_{\text{Log-MSE}}(\mathcal{C}(\theta), \theta) = ||\mathcal{C}(\theta) - \hat{\theta}||^2 + \log\big(||\mathcal{C}(\theta) - \hat{\theta}||^2 + \epsilon\big)
\end{equation}
where $\epsilon = 10^{-6}$ is used for numerical stability, and $\mathcal{C}(\cdot)$ denotes a canonicalisation operator that enforces a deterministic ordering of the sources. Specifically, for a multi-source configuration $\theta = (\theta_1,\dots,\theta_K)$, we define
\[
\mathcal{C}(\theta)
=
\big(\theta_{\pi(1)}, \dots, \theta_{\pi(K)}\big),
\qquad
\pi = \operatorname*{argsort}_{k} \|\theta_k\|_2,
\]
so that sources are ordered by their Euclidean distance to the origin. This transformation maps each unordered set of sources to a canonical representative, thereby breaking the permutation symmetry and assigning a stable identity to each source. Consequently, the downstream network is trained to learn a well-defined mapping from trajectories $\{(\xi_i, y_i)\}_{i=1}^T$ to ordered source configurations.

\paragraph{Experimental Details.}
In this experiment, we consider three baselines methods: 
\begin{enumerate}
    \item \textbf{\textsc{Random}}, where designs are naively sampled at random from the marginal prior distribution of a single source location.
    \item \textbf{\textsc{DAD}}, the standard deep policy-based method introduced in \citep{foster_deep_2021}. It parameterises the adaptive design policy with a neural network and is trained using its native expected-information-gain objective. To ensure a fair comparison, we replace the architectural backbone of the original implementation with the same policy network used in our method.
    \item \textbf{\textsc{ALINE}}, a recent policy-based method that achieves among the strongest reported performance in the literature. Its effectiveness is driven by a TNP-like policy architecture, a dense trajectory-level reward objective, and an internal posterior-approximation mechanism that amortises posterior inference. We retain the architecture proposed in the original paper, which is already close in spirit to the transformer-based policy network used in our method.
\end{enumerate}
To compare methods on this task, each baseline is first trained using its native design objective, derived from an EIG-based criterion, thereby learning an acquisition policy independently of the downstream predictor. Once this policy is fixed, a downstream network is trained on trajectories generated by the learned policy to predict the source locations from the resulting design--observation history. All methods are then evaluated through the predictive accuracy of this downstream network, ensuring that the comparison reflects performance on the final source-localization task rather than on the surrogate design objective alone.

The objective of this experiment is threefold. First, we evaluate our method from the perspective of downstream predictive performance. In particular, we assess whether the learned design policy produces informative trajectories that enable the downstream network to accurately solve the source-location point-estimation problem. Second, we evaluate the quality of the learned design policy using the usual lower and upper bounds on the EIG, estimated respectively by sPCE and sNMC.
 Third, we compare methods in terms of computational cost, measured by the wall-clock training time on a single GPU. We distinguish between the cost of learning the design policy itself and the total training cost, which, for the baselines, also includes the additional training of the downstream prediction network.

\paragraph{Architectures. }
We use the same policy-network architecture for \textsc{Action-BED} and \textsc{DAD}, detailed in Appendix~\ref{appendix:design_policy_architecture}. At the start of each rollout, before any design--outcome observations are available, the initial history encoding is parameterised as a learnable vector. The only architectural hyperparameters varied across methods are the encoding dimensions, which are reported in Table~\ref{tab:architecture-dimensions}.

\begin{table}[!htbp]
\centering
\small
\setlength{\tabcolsep}{6.0pt}
\renewcommand{\arraystretch}{1.15}
\begin{threeparttable}
\caption{\small \textbf{Policy architecture encoding dimensions}.}
\label{tab:architecture-dimensions}
\begin{tabular}{lcccc}
\toprule
\textbf{Method} 
& $d_{\mathrm{enc}, \xi}$ 
& $d_{\mathrm{enc}, y}$
& $d_{\mathrm{enc}, r}$ 
& $d_{\mathrm{enc}, h}$ \\
\midrule
\textsc{DAD} 
& 32 & 32 & 32 & 64 \\
\textsc{Action-BED} 
& 32 & 32 & 32 & 64 \\
\bottomrule
\end{tabular}
\end{threeparttable}
\vspace{-0.7em}
\end{table}

In this task, weight sharing between the design policy and the downstream network did not yield a clear performance improvement without further optimisation. We therefore use a simple fully connected MLP for the downstream network, which maps the design--outcome history $\{(\xi_i, y_i)\}_{i=1}^T$ to a point estimate of the source locations. The architecture is detailed in Table~\ref{tab:downstream_net_architecture}.

\begin{table}[!htbp]
\centering
\caption{Downstream network architecture.}
\begin{tabular}{llll}
\toprule
\textbf{Layer} & \textbf{Description} & \textbf{Dimension} & \textbf{Activation} \\
\midrule
Input & $h_T$ & $T \times (\dim_\xi + \dim_y) = 90$ & -- \\
Hidden layer 1 & Fully connected & 512 & GELU \\
Hidden layer 2 & Fully connected & 256 & GELU \\
Hidden layer 3 & Fully connected & 128 & GELU \\
Output & $\hat{\theta}$ & $4$ & -- \\
\bottomrule
\end{tabular}
\label{tab:downstream_net_architecture}
\end{table}

\paragraph{Training procedure.}
During training, all policy networks are optimised from data generated online by rollouts of the corresponding design policy. At each iteration, latent parameters $\theta$ are sampled from the prior distribution $p(\theta)$, and the policy is rolled out autoregressively over the experimental horizon. 
The hyperparameters used for the policy networks are reported in Table~\ref{tab:lfmlphparams_baselines}. ALINE is implemented using its native architecture and trained with the hyperparameters reported in the original paper for the same source-location finding task. In particular, ALINE is trained for $100\,000$ iterations with a joint design policy and posterior approximation network, after an initial burning phase of $20\,000$ iterations during which only the posterior approximation network is optimised. DAD and Action-BED are implemented using the hyperparameters reported in the same table. For Action-BED, we additionally use a warm-up phase in which the downstream prediction network is trained in isolation to infer the source locations from randomly generated designs. After this warm-up, the design and downstream networks are optimised jointly using a shared optimizer. This pretraining stage provides a more informative learning signal to the design policy at the beginning of joint optimization, improves the coupling between the acquisition policy and the downstream predictor, and empirically leads to convergence toward better optima.

\begin{table}[t]
\centering
\scriptsize
\setlength{\tabcolsep}{3.0pt}
\renewcommand{\arraystretch}{1.12}
\caption{\small \textbf{Hyperparameters for source location finding.}
Training and optimization hyperparameters used for the policy networks and the downstream prediction networks. ExpLR denotes an exponential learning-rate scheduler. The \textsc{Random} downstream network uses the same hyperparameters as \textsc{DAD}.}
\label{tab:lfmlphparams_baselines}

\begin{threeparttable}
\begin{minipage}[t]{0.32\textwidth}
\centering
\textbf{DAD}\\[0.35em]
\begin{tabular}{lr}
\toprule
\textbf{Field} & \textbf{Value} \\
\midrule
\multicolumn{2}{l}{\textbf{Policy training}} \\
Batch size & $1024$ \\
Contrastive samples $L$ & $2000$ \\
Policy steps & $150\mathrm{K}$ \\
Policy seed & $42$ \\
\midrule
\multicolumn{2}{l}{\textbf{Policy optimization}} \\
Optimizer & Adam \\
Adam betas & $(0.8,0.998)$ \\
Policy LR & $10^{-3}$ \\
Scheduler & ExpLR \\
Decay $\gamma$ & $0.95$ \\
Decay period & $2000$ \\
Weight decay & $0$ \\
\midrule
\multicolumn{2}{l}{\textbf{Downstream training}} \\
Downstream steps & $150\mathrm{K}$ \\
Downstream seed & $42$ \\
Downstream LR & $5{\times}10^{-4}$ \\
Downstream sched. & ExpLR \\
\bottomrule
\end{tabular}
\end{minipage}
\hfill
\begin{minipage}[t]{0.32\textwidth}
\centering
\textbf{ALINE}\\[0.35em]
\begin{tabular}{lr}
\toprule
\textbf{Field} & \textbf{Value} \\
\midrule
\multicolumn{2}{l}{\textbf{Policy training}} \\
Batch size & $200$ \\
Policy steps & $100\mathrm{K}$ \\
Burn-in steps & $20\mathrm{K}$ \\
Burn-in training & Posterior only \\
Query pool, burn-in & $30$ \\
Query pool, train & $200$ \\
Query proposal & Uniform \\
Reward discount & $1$ \\
Policy weight $\alpha$ & $1$ \\
Policy seed & $123$ \\
\midrule
\multicolumn{2}{l}{\textbf{Policy optimization}} \\
Optimizer & Adam \\
Adam betas & $(0.9,0.999)$ \\
Policy LR & $10^{-3}$ \\
Policy sched. & Cosine \\
Grad. clipping & $\|\nabla\|_{\infty}\leq 1$ \\
\midrule
\multicolumn{2}{l}{\textbf{Downstream training}} \\
Downstream steps & $100\mathrm{K}$ \\
Downstream LR & $5{\times}10^{-4}$ \\
Downstream sched. & ExpLR \\
Downstream decay $\gamma$ & $0.95$ \\
Decay period & $2000$ \\
Downstream seed & 42 \\
\bottomrule
\end{tabular}
\end{minipage}
\hfill
\begin{minipage}[t]{0.32\textwidth}
\centering
\textbf{Action-BED}\\[0.35em]
\begin{tabular}{lr}
\toprule
\textbf{Field} & \textbf{Value} \\
\midrule
\multicolumn{2}{l}{\textbf{Joint training}} \\
Batch size & $2000$ \\
Joint steps & $150\mathrm{K}$ \\
Warm-up steps & $50\mathrm{K}$ \\
Warm-up training & Downstream only \\
Warm-up designs & Random \\
Seed & $42$ \\
\midrule
\multicolumn{2}{l}{\textbf{Optimization}} \\
Optimizer & Adam \\
Adam betas & $(0.8,0.998)$ \\
Policy LR & $5{\times}10^{-4}$ \\
Scheduler & ExpLR \\
Decay $\gamma$ & $0.95$ \\
Decay period & $2000$ \\
Weight decay & $0$ \\
\bottomrule
\end{tabular}
\end{minipage}

\end{threeparttable}
\vspace{-1em}
\end{table}

\paragraph{Evaluation procedure.}
After training, we evaluate the downstream action networks associated with each learned design policy, either jointly trained with the acquisition policy in Action-BED, or trained independently after the design policy has been learned for other baselines. We evaluate downstream predictors using the two downstream losses described above. Evaluation rollouts are generated by first sampling latent source parameters $\theta \sim p(\theta)$ from the prior and then rolling out the corresponding adaptive design policy autoregressively over the full experimental horizon. The downstream network receives the resulting terminal history $h_T$ and outputs a point prediction of the source configuration. All reported downstream losses are computed over $2048$ independent evaluation rollouts, and uncertainty is reported as $\pm 1$ standard error in Table~\ref{tab:location_table}.


\paragraph{EIG-bound evaluation.}
For each trained design policy, we additionally estimate lower and upper bounds on the EIG. This provides a standard, task-agnostic measure of design quality commonly used in the BED literature. It is not the downstream objective considered in our experiment, but DAD and ALINE are explicitly trained with information-theoretic objectives related to EIG maximisation. We recall the definition of these bounds: 
\begin{align}
    \mathcal L_{\mathrm{sPCE}}
    =
    \mathbb{E}
    \left[
        \log
        \frac{
            p(h_T \mid \theta_0;\pi_d^\phi)
        }{
            \frac{1}{L+1}
            \sum_{\ell=0}^{L}
            p(h_T \mid \theta_\ell;\pi_d^\phi)
        }
    \right], \\
    \mathcal U_{\mathrm{sNMC}}
    =
    \mathbb{E}
    \left[
        \log
        \frac{
            p(h_T \mid \theta_0;\pi_d^\phi)
        }{
            \frac{1}{L}
            \sum_{\ell=1}^{L}
            p(h_T \mid \tilde\theta_\ell;\pi_d^\phi)
        }
    \right],
\end{align}
with $\theta_0 \sim p(\theta)$, $\theta_{1:L}\sim p(\theta)$ are independent Monte Carlo samples used to approximate the marginal likelihood, and $L$ being the number of contrastive samples drawn. For all methods, we use a large number of contrastive samples in both estimators so that the resulting bounds are sufficiently tight for comparison, see Table \ref{tab:eig_bound_eval_protocol}. Reported uncertainties correspond to $\pm 1$ standard error over independent rollouts.

\begin{table}[t]
\centering
\scriptsize
\setlength{\tabcolsep}{4.0pt}
\renewcommand{\arraystretch}{1.15}
\begin{threeparttable}
\caption{\small \textbf{Source location finding.} Evaluation protocol for design policies.}
\label{tab:eig_bound_eval_protocol}

\begin{tabular}{llcccc}
\toprule
\textbf{Method} 
& \textbf{Training objective}
& \multicolumn{2}{c}{\textbf{sPCE / sNMC}}
& \multicolumn{2}{c}{\textbf{Downstream losses}} \\
\cmidrule(lr){3-4}
\cmidrule(lr){5-6}
&
& \textbf{$L$}
& \textbf{Batch size}
& \textbf{Batch size}
 \\
\midrule

Action-BED (MSE)
& Downstream loss
& $10^6$
& $2000$
& $2000$
 \\

Action-BED (Log)
& Downstream loss
& $10^8$
& $200$
& $2000$
 \\

DAD
& EIG-oriented objective
& $10^6$
& $2000$
& $2000$
 \\

ALINE
& EIG-oriented objective
& $5 \times 10^6$
& $500$
& $2000$
 \\

Random
& None
& $10^5$
& $2000$
& $2000$
 \\

\bottomrule
\end{tabular}

\begin{tablenotes}
\footnotesize
\item \textit{Notes.} $L$ denotes the number of contrastive samples used to estimate the sequential PCE and NMC bounds. The corresponding batch size is the number of rollout histories used for these EIG-bound estimates. Downstream losses are evaluated on independent rollout batches.
\end{tablenotes}
\end{threeparttable}
\vspace{-1em}
\end{table}

\subsubsection{Qualitative Results}
\label{appendix:qualitative_results_lf}

We complement the quantitative evaluation with qualitative trajectory plots of the learned design policies. We roll out each trained policy and visualize the sequence of selected designs over the experimental horizon. This comparison illustrates the characteristic behavior induced by each method: how quickly it explores the design space, whether it concentrates measurements around informative regions, and how its sampling strategy adapts as observations are collected. These trajectories can provide an interpretable diagnostic of the experimental strategies learned by the different policies.

\begin{figure}[t]
    \centering
    \small
    \begin{subfigure}[t]{0.23\textwidth}
        \centering
        \includegraphics[width=\linewidth]{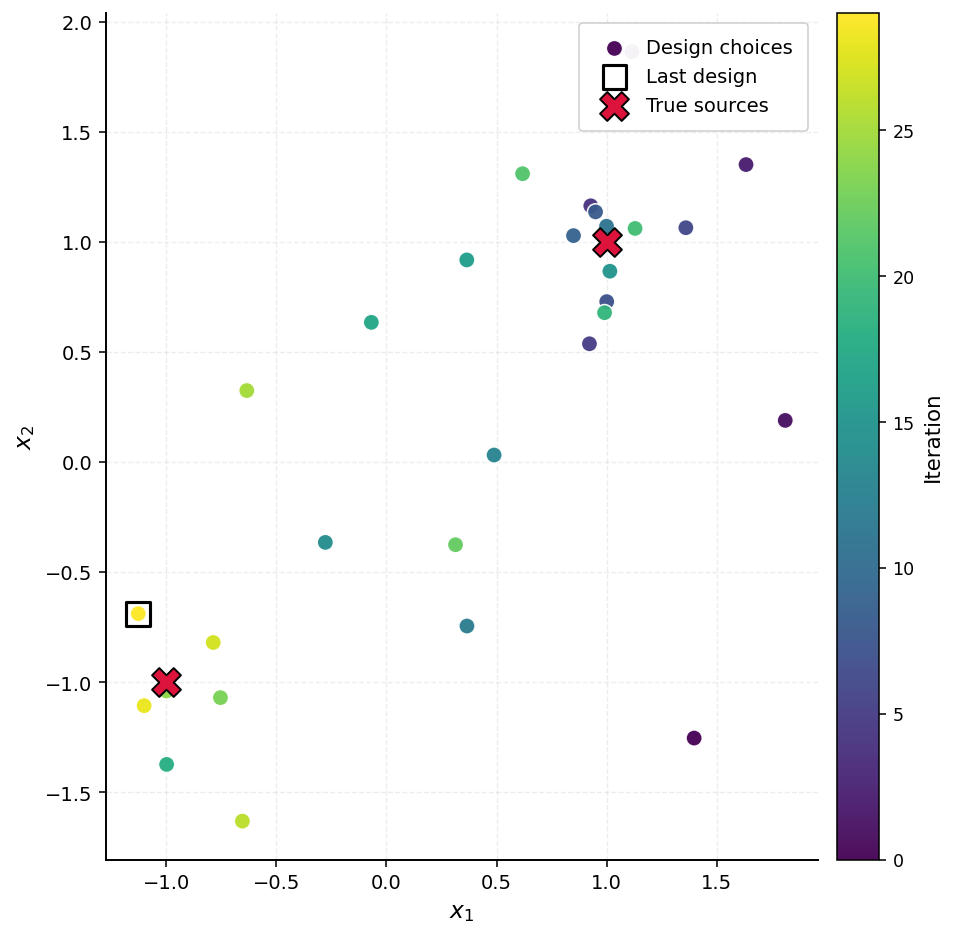}
        \caption{\textsc{DAD}}
    \end{subfigure}
    \hfill
    \begin{subfigure}[t]{0.25\textwidth}
        \centering
        \includegraphics[width=\linewidth]{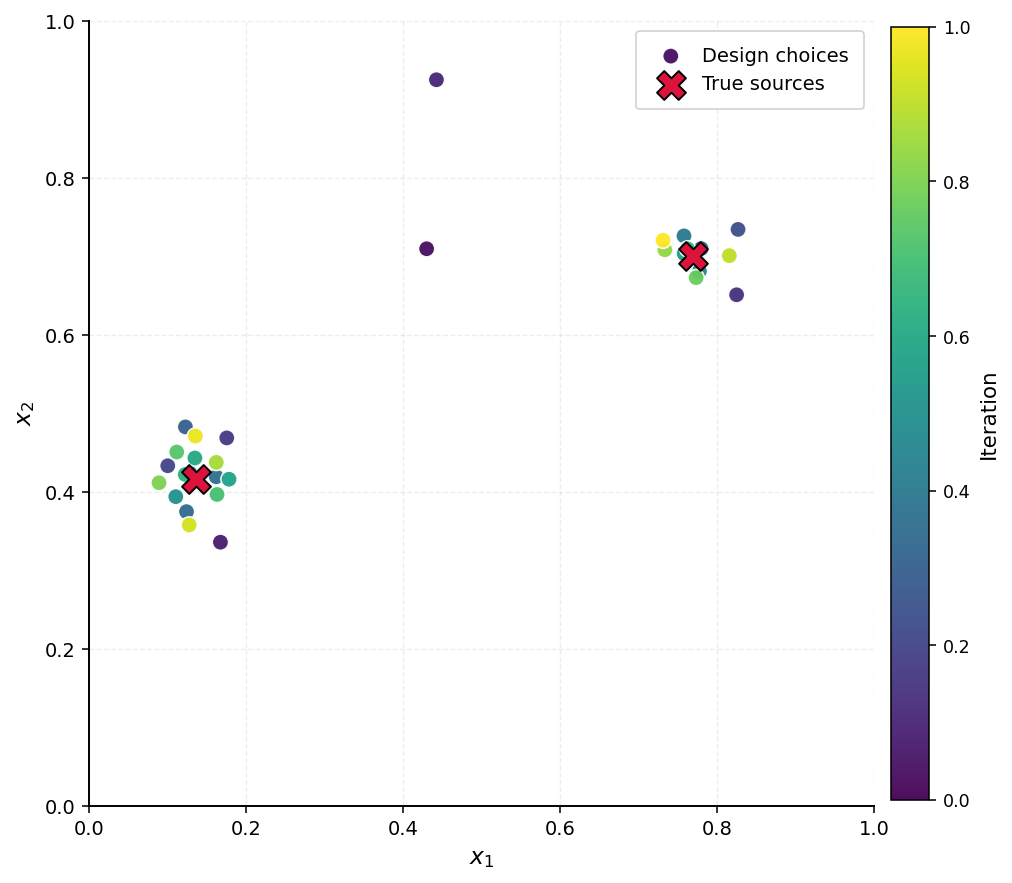}
        \caption{\textsc{ALINE}}
    \end{subfigure}
    \hfill
    \begin{subfigure}[t]{0.26\textwidth}
        \centering
        \includegraphics[width=\linewidth]{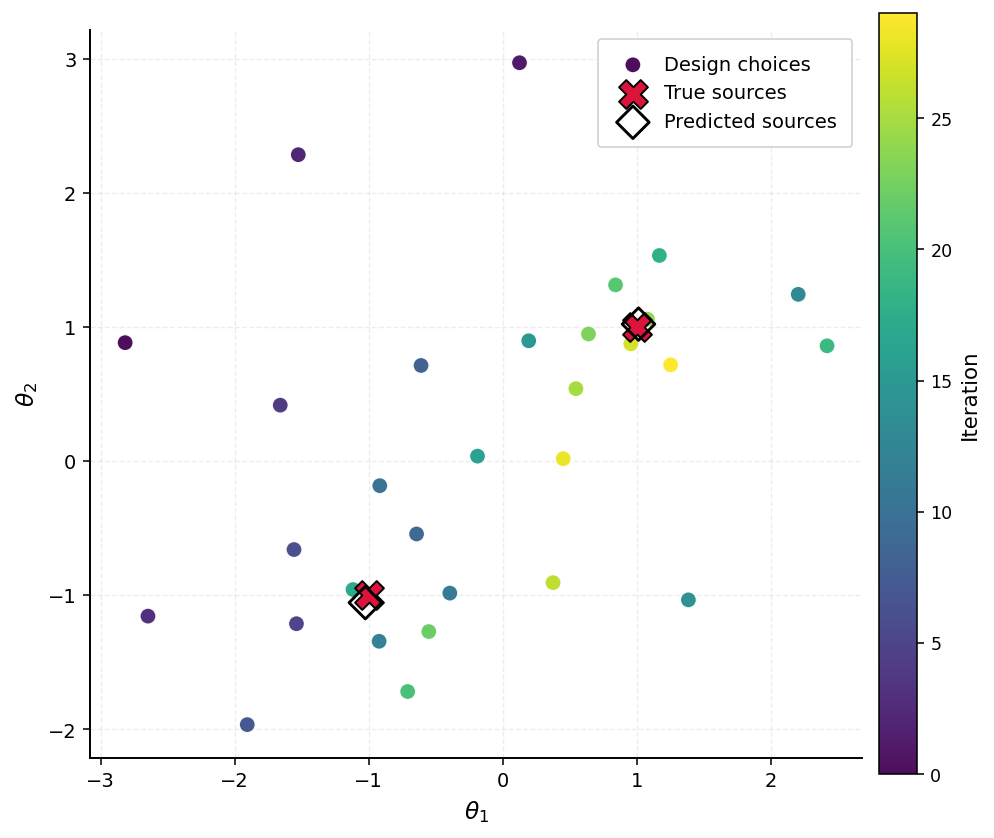}
        \caption{\textsc{Action-BED (MSE)}}
    \end{subfigure}
    \hfill
    \begin{subfigure}[t]{0.23\textwidth}
        \centering
        \includegraphics[width=\linewidth]{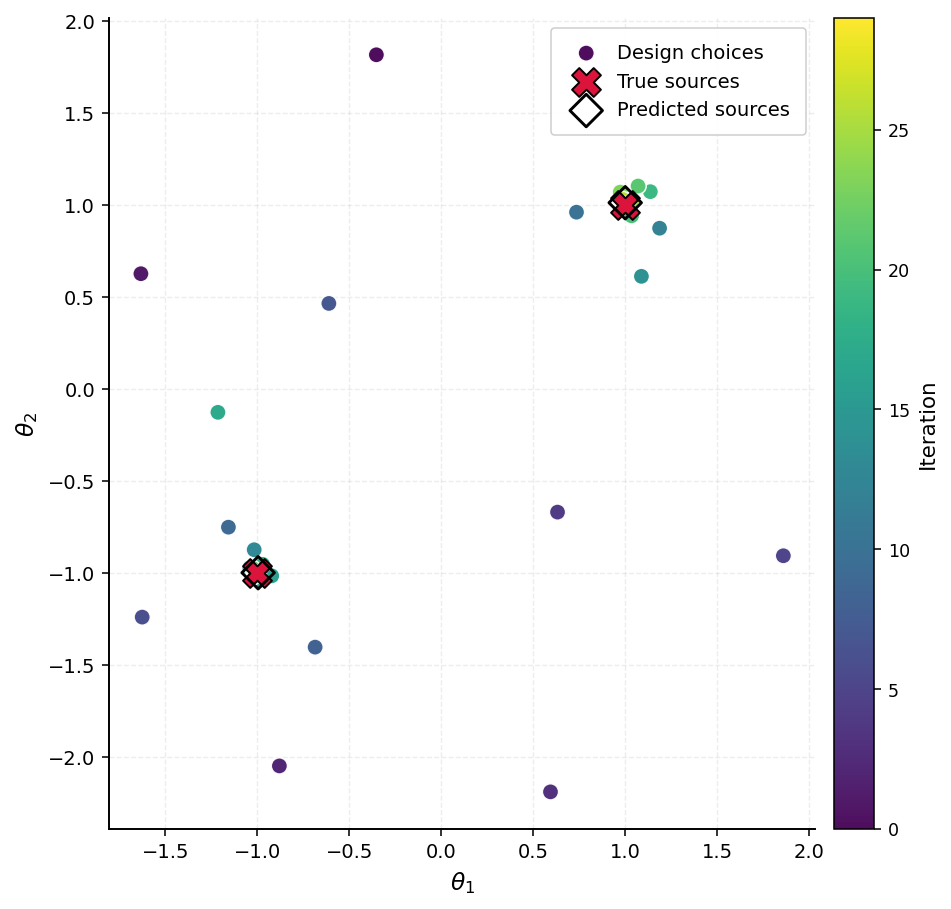}
        \caption{\textsc{Action-BED (Log)}}
    \end{subfigure}

    \caption{\small \textbf{Comparison of four policy rollouts.}
    Each panel shows one experimental design trajectory under the trained design policy.}
    \label{fig:four_policy_rollouts}
\end{figure}

\subsubsection{Amortising Downstream Bayes-action Selection}
\label{app:amortised_posterior_actions}

This section details the experimental protocol used to compare trained design policies on the downstream task, in terms of the quality of the Bayes-optimal action induced by the experimental trajectories they generate. This serves two purposes. First, it provides a direct way to compare design policies under the decision-theoretic objective of interest, namely the losses incurred by the action selected after observing the experimental history. Second, it illustrates a key practical advantage of \textsc{Action-BED}: since the design policy is trained jointly with an amortised downstream action policy, it directly learns an action network that predict a near-optimal decision, thereby amortising the otherwise costly posterior-inference step. Indeed, for design policies trained on the EIG, the downstream action-selection step is not directly available from training. Therefore, in order to evaluate such policies on downstream performance, one must proceed in two stages: first generate histories using the fixed trained design policy, and then infer or learn the corresponding downstream action from these histories. This is precisely the comparison performed here.

In this experiment, we evaluate the posterior Fréchet mean under both the MSE and Log-MSE losses. For the MSE objective, this estimator coincides with the Bayes action; see Appendix~\ref{app:posterior-mean-optimality}. This provides a useful explicit posterior-based decision rule against which amortised downstream prediction can be compared. For each trained design policy, we compute the posterior Fréchet mean from posterior samples conditioned on the observed trajectory, and report its empirical downstream performance over the evaluation batch under both loss functions.

\paragraph{Inference protocol.} To compute the explicit posterior Bayes action, we consider three posterior-mean estimators. The first one is based on importance sampling with the prior as proposal distribution. For each experimental history $h_T = (\xi_{1:T}, y_{1:T})$ generated by a fixed design policy, we sample particles $\theta^{(1)},\ldots,\theta^{(N)} \sim p(\theta)$ independently from the prior distribution. These particles are then reweighted according to the likelihood of the observed history,
\[
    w^{(i)}(h_T)
    \;\propto\;
    p(h_T \mid \theta^{(i)}),
    \qquad
    \sum_{i=1}^N w^{(i)}(h_T)=1.
\]
This yields a weighted empirical approximation of the posterior distribution,
\[
    p(\theta \mid h_T)
    \approx
    \sum_{i=1}^N w^{(i)}(h_T)\,\delta_{\theta^{(i)}}.
\]
However, because the latent sources are exchangeable, the direct posterior mean is not Bayes-optimal for the permutation-invariant MSE loss considered and may yield a degenerate estimate of the underlying sources by averaging across permutations. To account for this symmetry, we first optimally permuting the sources of each particle. Given a reference ordering $\bar{\theta}$, for instance the current empirical estimate or an initial point estimate, each particle $\theta^{(i)} = (\theta^{(i)}_1,\theta^{(i)}_2)$ is aligned by solving
\[
    \sigma_i^\star
    \in
    \arg\min_{\sigma \in \mathcal{S}_2}
    \sum_{k=1}^2
    \left\|
        \theta^{(i)}_{\sigma(k)} - \bar{\theta}_k
    \right\|^2,
\]
where $\mathcal{S}_K$ denotes the set of permutations of the latent sources. We then define the aligned particle
\[
    \widetilde{\theta}^{(i)}
    =
    \left(
        \theta^{(i)}_{\sigma_i^\star(1)},
        \theta^{(i)}_{\sigma_i^\star(2)}
    \right),
\]
and estimate the MSE Bayes action by the weighted posterior Fréchet mean
\[
    \widehat{a}_{\mathrm{IS}}(h_T)
    =
    \sum_{i=1}^N
    w^{(i)}(h_T)\,
    \widetilde{\theta}^{(i)}.
\]
This aligned posterior mean is then used as the downstream prediction, and its empirical loss is evaluated on the held-out batch. The numerical parameters used for this importance-sampling estimator are reported in Table~\ref{tab:posterior_mean_estimation_params}.

The second estimator is based on Hamiltonian Monte Carlo (HMC)~\citep{neal2011mcmc}. HMC constructs a Markov chain targeting the posterior distribution by simulating Hamiltonian dynamics in an augmented position--momentum space. For each history $h_T$, we use HMC to draw approximate samples from the posterior distribution. In practice, the HMC chain is initialised from a coarse approximation of the optimal action obtained by a lighter version of the prior-proposal importance-sampling procedure described above. This provides a reasonable starting point near a high-posterior-probability region and stabilises the subsequent sampling stage. After burn-in and thinning according to the parameters reported in Table~\ref{tab:posterior_mean_estimation_params}, we obtain posterior samples for each observed trajectory. The HMC-based Bayes action is then computed using the same post-processing procedure as for importance sampling. The empirical posterior mean is then computed over the aligned samples and used as the downstream MSE Bayes-action estimate. We finally evaluate the empirical downstream MSE incurred by this prediction over the evaluation batch.

The third estimator is based on a Laplace approximation of the posterior distribution. For each history $h_T$, we first compute a maximum-a-posteriori estimate of the latent parameters by optimising the log posterior,
\[
    \widehat{\theta}_{\mathrm{MAP}}(h_T)
    \in
    \arg\max_{\theta}
    \left\{
        \log p(h_T \mid \theta)
        +
        \log p(\theta)
    \right\}.
\]
In practice, the optimisation is initialised from a coarse particle-based approximation in order to start near a high-posterior-probability region, and multiple restarts are used to improve robustness. The posterior is then locally approximated by a Gaussian distribution centred at the MAP, with precision given by the negative Hessian of the log posterior at this point,
\[
    p(\theta \mid h_T)
    \approx
    \mathcal{N}
    \left(
        \widehat{\theta}_{\mathrm{MAP}}(h_T),
        \left(H(h_T)+\lambda_{\mathrm{ridge}} I\right)^{-1}
    \right),
\]
where $H(h_T)$ denotes the local posterior precision and $\lambda_{\mathrm{ridge}}$ is a small ridge regularisation parameter added for numerical stability. We then draw samples from this Gaussian approximation using a Cholesky decomposition whenever the regularised precision matrix is positive definite, as described in Table~\ref{tab:posterior_mean_estimation_params}. The Laplace-based Bayes action is finally computed using the same post-processing procedure as for importance sampling and HMC: the Gaussian samples are aligned to account for the exchangeability of the latent sources, and the empirical posterior mean over the aligned samples is used as the downstream MSE Bayes-action estimate. We finally evaluate the empirical downstream MSE incurred by this prediction over the evaluation batch.

\begin{table}[t]
\centering
\scriptsize
\setlength{\tabcolsep}{6.5pt}
\renewcommand{\arraystretch}{1.15}
\begin{threeparttable}
\label{tab:posterior_mean_estimation_params}

\begin{subtable}[t]{0.49\linewidth}
\centering
\caption{\small Hamiltonian Monte Carlo}
\label{tab:posterior_mean_hmc_params}
\begin{tabular}{lc}
\toprule
\textbf{Parameter} & \textbf{Value} \\
\midrule
Evaluation batch size & $2000$ \\
HMC samples & $5 \times 10^4$ \\
Burn-in steps & $10^4$ \\
Step size $\epsilon^\star$ & $[10^{-3},\,10^{-2}]$ \\
Leapfrog steps & $10$ \\
Thinning interval & $1$ \\
PM init. particles $N_{\mathrm{PM}}$ & $10^5$ \\
\bottomrule
\end{tabular}
\end{subtable}
\hfill
\begin{subtable}[t]{0.49\linewidth}
\centering
\caption{\small Laplace approximation}
\label{tab:posterior_mean_laplace_params}
\begin{tabular}{lc}
\toprule
\textbf{Parameter} & \textbf{Value} \\
\midrule
Evaluation batch size & $2000$ \\
MAP optimisation steps & $1000$ \\
Laplace samples & $5 \times 10^4$ \\
Ridge regularisation $\lambda_{\mathrm{ridge}}$ & $10^{-3}$ \\
Factorisation & Cholesky Decomposition \\
\bottomrule
\end{tabular}
\end{subtable}
\hfill
\begin{subtable}[t]{0.49\linewidth}
\centering
\caption{\small Prior-proposal IS}
\label{tab:posterior_mean_is_params}
\begin{tabular}{lc}
\toprule
\textbf{Parameter} & \textbf{Value} \\
\midrule
Particles $N$ & $5 \times 10^6$  \\
Evaluation batch size & $2000$ \\
\bottomrule
\end{tabular}
\end{subtable}

\caption{\small \textbf{Hyperparameters for posterior Bayes-action estimation.}
We report the parameters used for the two Bayes-action estimators: prior-proposal IS and HMC.
The HMC step size $\epsilon$ is adapted separately for each method so as to match a target acceptance rate, while remaining constrained to the interval reported in the table.}

\end{threeparttable}
\end{table}

\begin{table}[t]
\centering
\scriptsize
\setlength{\tabcolsep}{4.6pt}
\renewcommand{\arraystretch}{1.15}
\begin{threeparttable}
\caption{\small \textbf{Deployment-time Bayes-action estimation under fixed acquisition policies.}
For each pretrained design policy, we compare amortised action networks with posterior-based Bayes-action estimators. This evaluates how well each policy supports downstream decision-making after observing an experimental history. Uncertainties are reported as one standard error over $2048$ rollouts.}
\label{tab:deployment_fixed_policy_light}
\begin{tabular}{llcccc}
\toprule
\textbf{Acquisition} & \textbf{Decision Rule}
& \multicolumn{2}{c}{\textbf{Performance} ($\pm$ 1 s.e.)}
\\
\cmidrule(lr){3-4}
& & MSE $(10^{-2})$ $\downarrow$
& Log-MSE $\downarrow$ \\
\midrule

\textsc{Action-BED} (MSE)
& Native network (MSE)       & 0.55 $\pm$ 0.06  & -6.83 $\pm$ 0.03 \\
& Retrained network (MSE)       & 0.86 $\pm$ 0.09  & -6.93 $\pm$ 0.01  \\
& Bayes-act. (IS)               & 0.44 $\pm$ 0.03  & -7.25 $\pm$ 0.03 \\
& Bayes-act. (HMC)              & 0.61 $\pm$ 0.04  & -7.11 $\pm$ 0.02  \\
& Bayes-act. (Laplace)          & 0.46 $\pm$ 0.03      & -7.33 $\pm$ 0.04  \\
\addlinespace[1pt]
\midrule

\textsc{Action-BED} (Log)
& Native network (Log)       & 2.8 $\pm$ 0.4    & -10.10 $\pm$ 0.03  \\
& Retrained network (Log)       & 1.7 $\pm$ 0.2    & -9.97 $\pm$ 0.03 \\
& Bayes-act. (IS)               & 1.3 $\pm$ 0.6    & -9.37 $\pm$ 0.04 \\
& Bayes-act. (HMC)              & 2.00 $\pm$ 0.43  & -10.12 $\pm$ 0.04 \\
& Bayes-act. (Laplace)          & 2.31 $\pm$ 0.71      & \textbf{-10.32} $\pm$ \textbf{0.05} \\
\addlinespace[1pt]
\midrule

DAD
& Retrained network             & 5.6 $\pm$ 0.6    & -6.58 $\pm$ 0.05 \\
& Bayes-act. (IS)               & 3.9 $\pm$ 0.2    & -8.06 $\pm$ 0.05 \\
& Bayes-act. (HMC)              & 3.6 $\pm$ 0.5    & -8.11 $\pm$ 0.05 \\
& Bayes-act. (Laplace)          & 3.5 $\pm$ 0.8      & -8.05 $\pm$ 0.06 \\
\addlinespace[1pt]
\midrule

ALINE
& Retrained network             & 0.21 $\pm$ 0.03  & -8.11 $\pm$ 0.07 \\
& Bayes-act. (IS)               & 0.16 $\pm$ 0.05      & -9.22 $\pm$ 0.03 \\
& Bayes-act. (HMC)              & 0.35 $\pm$ 0.21  & -9.17 $\pm$ 0.03  \\
& Bayes-act. (Laplace)          & \textbf{0.13} $\pm$ \textbf{0.04}      & -9.74 $\pm$ 0.07  \\
\addlinespace[1pt]
\midrule

Random
& Retrained network             & 22.4 $\pm$ 0.8   & -3.16 $\pm$ 0.04 \\
& Bayes-act. (IS)               & 6.1 $\pm$ 0.7    & -4.36 $\pm$ 0.05 \\
& Bayes-act. (HMC)              & 15.7 $\pm$ 2.5   & -3.45 $\pm$ 0.08 \\
& Bayes-act. (Laplace)          & 32.4 $\pm$ 3.1      & -3.28 $\pm$ 0.09 \\
\bottomrule
\end{tabular}
\begin{tablenotes}
\footnotesize
\item \textit{Notes.} Retrained networks are trained post-hoc on histories generated by the fixed design policy, whereas native networks for \textsc{Action-BED} are jointly trained on the downstream training loss. 
Bayes-action estimators use the fixed acquisition policy and compute the deployment-time action from the posterior induced by the observed history. 
\end{tablenotes}
\end{threeparttable}
\vspace{-1em}
\end{table}

\begin{table}[t]
\centering
\scriptsize
\setlength{\tabcolsep}{6.0pt}
\renewcommand{\arraystretch}{1.15}
\begin{threeparttable}
\caption{\small \textbf{Indicative test-time cost of posterior-action inference.}
We report the approximate wall-clock time required to infer actions and evaluate the downstream loss on a batch of $2000$ trajectories.}
\label{tab:posterior_action_runtime}
\begin{tabular}{lc}
\toprule
\textbf{Action Estimator} & \textbf{Approx. time (seconds)} $\downarrow$ \\
\midrule
Amortised downstream network & $\boldsymbol{\approx 6}$ \\
Importance-sampling posterior mean & $\approx 7600$ \\
Laplace posterior approximation & $\approx 250$ \\
HMC posterior approximation & $\approx 3100$ \\
\bottomrule
\end{tabular}
\begin{tablenotes}
\footnotesize
\item \textit{Notes.} Timings include action inference and loss evaluation. They are measured on a separate evaluation machine and are intended only to indicate the approximate order of magnitude of the deployment cost.
\end{tablenotes}
\end{threeparttable}
\vspace{-1em}
\end{table}

\paragraph{Indicative inference cost.}
Since posterior-action estimators require test-time inference for each observed trajectory, we additionally report, in Table~\ref{tab:posterior_action_runtime}, the wall-clock time required to infer actions and evaluate the downstream loss on a batch of $2000$ trajectories. These timings were obtained on the evaluation machine used for this posterior-inference study, which differs from the hardware used for training. They should therefore be interpreted as indicative deployment costs rather than hardware-normalised runtime benchmarks. Nevertheless, they highlight the qualitative distinction between amortised action prediction, which only requires a forward pass through the downstream network, and posterior-inference baselines, which require iterative or sampling-based computation at test time.

\paragraph{Discussion.}
The results reported in Table~\ref{tab:deployment_fixed_policy_light} highlight an important distinction between the two downstream losses considered in this work. Although both losses evaluate the quality of a final action inferred from the experimental history, they encode different downstream objectives. Consequently, the corresponding Bayes actions are not the same, and neither are the design policies learned by \textsc{Action-BED}. This distinction is crucial when interpreting the posterior-action evaluations of fixed design policies. The experimental trajectories produced by the policy are shaped by the final decision criterion they are meant to serve. \textsc{Action-BED} trained with the Log-MSE loss, as shown in Table~\ref{tab:location_table}, achieves strong Log-MSE performance, and its learned designs often concentrate tightly around the latent sources. One might therefore expect that, under these highly localised designs, the MSE Bayes action obtained by posterior inference would also be especially accurate, perhaps even outperforming the policies trained with other objectives. However, this is not what we observe in practice. 

As illustrated in Figure~\ref{fig:sorted_rollout_losses}, the distribution of rollout-wise MSE losses differs substantially between \textsc{Action-BED (MSE)} and \textsc{Action-BED (Log-MSE)}. Under the same evaluation conditions, \textsc{Action-BED (Log-MSE)} achieves markedly lower losses for a large fraction of rollouts: in particular, its median MSE is substantially smaller, indicating that most of its downstream predictions are highly accurate. However, this improvement over the bulk of the distribution comes at the cost of a much heavier upper tail. For the largest quantiles, the MSE losses of \textsc{Action-BED (Log-MSE)} deteriorate sharply and become much larger than those obtained by \textsc{Action-BED (MSE)}. Thus, although the Log-MSE-trained policy produces very accurate predictions on the majority of trajectories, it also induces rare rollouts on which the downstream action can fail severely. These high-loss trajectories dominate the average MSE and explain why the overall MSE performance of the Log-MSE-trained policy is worse despite its stronger median behaviour.

\textsc{Action-BED (Log-MSE)} learns to generate efficient experimental trajectories and an associated amortised action rule that are well adapted to the geometry of the Log-MSE objective: most predictions are brought extremely close to the true sources, but robustness to rare high-error trajectories is not prioritised in the same way as under the MSE objective. By contrast, the MSE-trained policy learns designs that may be less sharply accurate on the typical rollout but better control the upper tail of the squared error distribution. This also explains why \textsc{Action-BED} trained with Log-MSE remains inferior under the MSE criterion, even when replacing its amortised action network by posterior-inference-based Bayes actions.

\begin{figure}[t]
    \centering
    \begin{subfigure}[t]{0.49\linewidth}
        \centering
        \includegraphics[width=\linewidth]{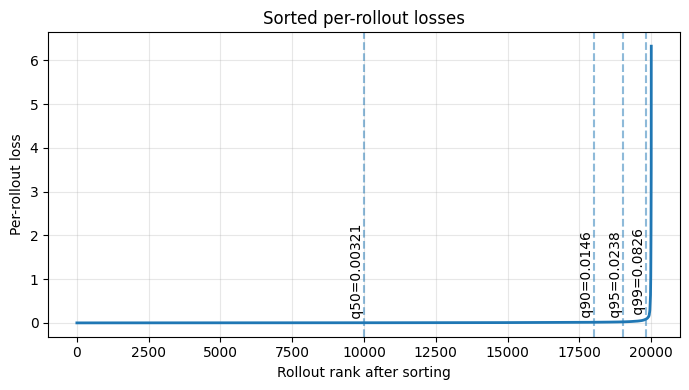}
        \caption{\small \textsc{Action-BED (MSE)}}
    \end{subfigure}
    \hfill
    \begin{subfigure}[t]{0.49\linewidth}
        \centering
        \includegraphics[width=\linewidth]{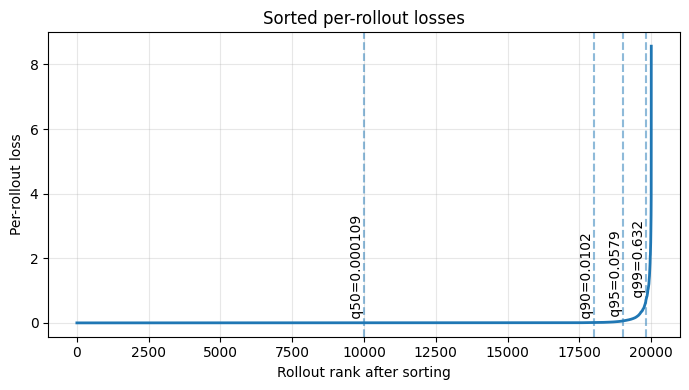}
        \caption{\small \textsc{Action-BED (Log-MSE)}}
    \end{subfigure}
    \caption{ \small \textbf{Distribution of rollout-wise MSE losses for loss-specialised \textsc{Action-BED} policies.} We report per-rollout downstream MSE losses sorted from smallest to largest for \textsc{Action-BED} trained with the MSE objective and with the Log-MSE objective.}
    \label{fig:sorted_rollout_losses}
\end{figure}

\subsubsection{Additional Results}
\label{appendix:additional_results_lf}

\paragraph{Training time decomposition.} All training-time comparisons are performed on the same GPU, so that reported differences reflect the computational cost of the methods rather than hardware variability. In addition to the main performance metrics, we report in Table \ref{tab:compute_time} a decomposition of the training time into policy and downstream action-network training. This decomposition shows that DAD has a slightly larger total training time than Action-BED, although its policy network alone is faster to train than the jointly trained Action-BED model. This is likely due to the batch sizes used in this experiment and to the relatively large neural architectures, for which backpropagation represents a substantial fraction of the total cost. Consequently, the computational advantage of the singly intractable objective is partly diluted by the cost of the network updates. ALINE is markedly more expensive: policy training alone already exceeds the full joint-training time of Action-BED, and its downstream stage remains computationally demanding because adaptive rollouts are costly to generate. Due to its simplicity, the random-design baseline is comparatively inexpensive to train.

\paragraph{Evolution of downstream performance.}
Figure~\ref{fig:losses_against_time} reports an additional time-resolved comparison of downstream performance during policy training. For each method, we periodically freeze the design-policy weights and evaluate the downstream losses using the same evaluation protocol as above. For DAD and ALINE, this requires training a downstream action network from scratch at each frozen policy checkpoint, using the same architecture and training hyperparameters as reported in Tables~\ref{tab:downstream_net_architecture} and~\ref{tab:lfmlphparams_baselines}. For Action-BED, we instead evaluate the downstream action network associated with the joint model at the corresponding checkpoint. Importantly, the horizontal axis only accounts for the elapsed policy-training time, or joint-training time for Action-BED, including the potential warm-up time. Thus, it excludes the additional cost of training independent downstream action networks for the baselines.  Checkpoints are shown up to convergence. For the joint Action-BED models, both downstream losses are evaluated at every checkpoint, even when the downstream predictor was not explicitly optimised for the loss under consideration.

This experiment illustrates how quickly each design policy produces histories that are useful for downstream prediction. Under the permutation-invariant MSE, Action-BED rapidly reaches a regime in which its action network performs well, achieving the best performance during approximately the first six hours of training. At later checkpoints, however, downstream networks trained on ALINE policies achieve stronger performance. Under the log-MSE loss, Action-BED trained with the log-MSE objective yields substantial gains very early in training, outperforming the other methods by a large margin.

\begin{figure}[t]
    \centering
    \includegraphics[width=0.85\linewidth]{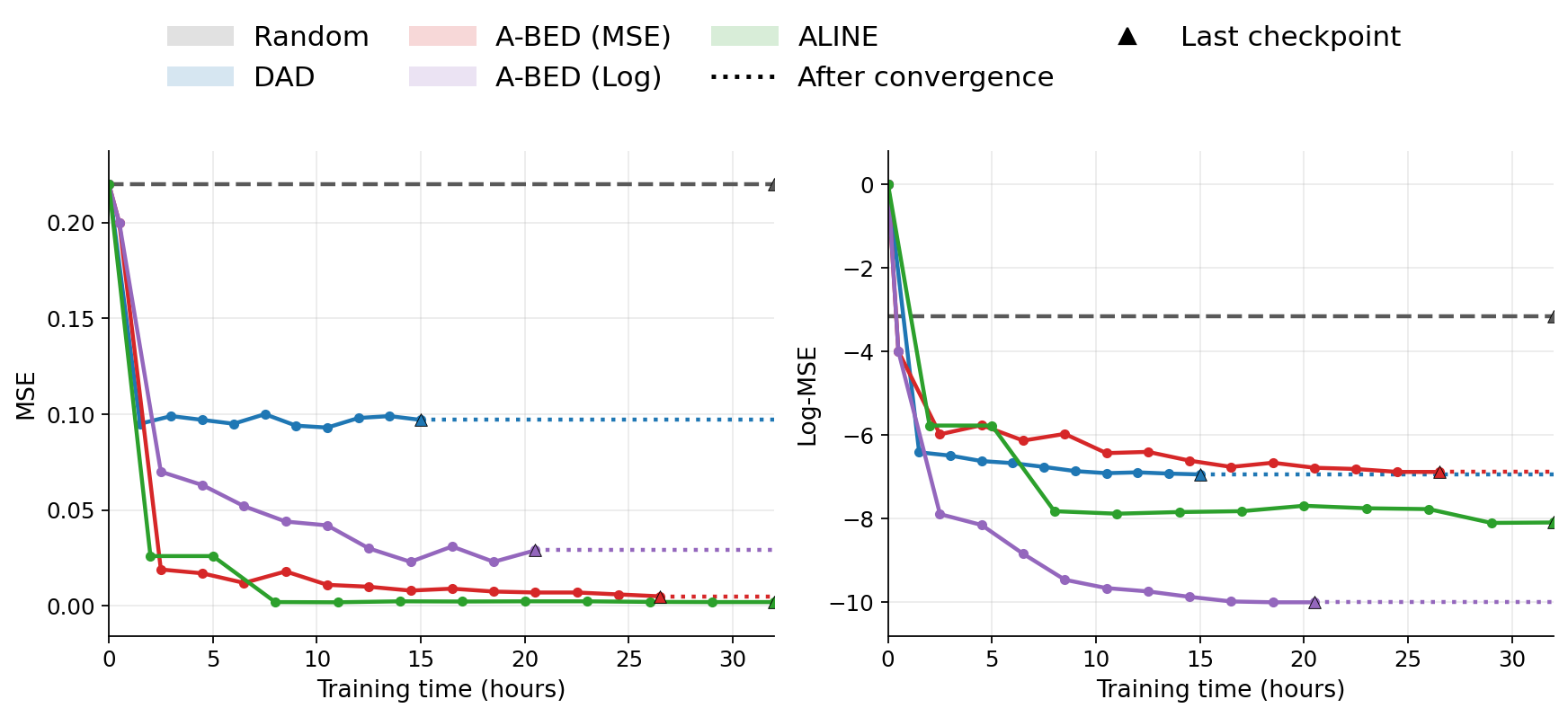}
    \caption{Comparison of downstream losses as a function of design policy training time, until convergence.}
    \label{fig:losses_against_time}
\end{figure}

\begin{table}[t]
\centering
\scriptsize
\setlength{\tabcolsep}{5.0pt}
\renewcommand{\arraystretch}{1.15}

\begin{threeparttable}
\caption{
\small \textbf{Training Time Decomposition}.
}
\label{tab:compute_time}

\begin{tabular}{lccc}
\toprule
\textbf{Method}
& \textbf{Design}
& \textbf{Downstream}
& \textbf{Total} \\
\midrule

\textsc{Action-BED} (MSE)
& --
& --
& 1171 \\

\textsc{Action-BED} (Log)
& --
& --
& 1184 \\

\addlinespace[1pt]
\midrule

DAD (MSE)
& 879
& 335
& 1214 \\

DAD (Log)
& 879
& 337
& 1216 \\

\addlinespace[1pt]
\midrule

ALINE (MSE)
& 1780
& 1182
& 2962 \\

ALINE (Log)
& 1780
& 1190
& 2970 \\

\addlinespace[1pt]
\midrule

Random
& 0
& 31
& 31 \\

\bottomrule
\end{tabular}

\begin{tablenotes}
\footnotesize
\item \textit{Notes.} Wall-clock GPU times are reported in minutes.
For \textsc{Action-BED}, design and downstream action networks are trained jointly, so only total training time is reported.
Parentheses indicate the loss used to train the downstream action network.
\end{tablenotes}
\end{threeparttable}
\vspace{-1em}
\end{table}

\paragraph{Effect of Joint Training on Policy Co-Adaptation.}
We further examine whether joint training induces a useful co-adaptation between the design policy network and the downstream action policy network. To isolate this effect, we take a fully trained \textsc{Action-BED} design policy, keep it fixed, and retrain a new downstream action network under the same optimisation budget and hyperparameters as in the original joint training procedure; see Tables~\ref{tab:downstream_net_architecture} and~\ref{tab:lfmlphparams_baselines}. We perform this retraining separately with the MSE and Log-MSE objectives.

The relevant comparison is the same-loss setting: for each \textsc{Action-BED} model, we compare the native jointly trained action network to a post-hoc action network retrained with the same loss used to learn the design policy. As shown in Table~\ref{tab:native_vs_retrained_abed}, this retrained network performs slightly worse than the native joint network, despite being trained on histories generated by an already fully optimised design policy. This suggests that the gain of joint training is due to the optimisation path through which the design and action networks become mutually adapted. In particular, the design policy learns to generate histories that are informative in a form the action network can exploit, while the action network simultaneously learns to decode both the observed outcomes and the information implicit in the adaptive sequence of designs.

\begin{table}[H]
\centering
\small
\setlength{\tabcolsep}{5.2pt}
\renewcommand{\arraystretch}{1.15}
\begin{threeparttable}
\caption{\small \textbf{Native versus retrained downstream networks for A-BED.} Uncertainties are reported as one standard error.}
\label{tab:native_vs_retrained_abed}
\begin{tabular}{llcccc}
\toprule
\textbf{Acquisition} & \textbf{Action Network}
& \multicolumn{3}{c}{\textbf{Performance} ($\pm$ 1 s.e.)} \\
\cmidrule(lr){3-4}
\cmidrule(lr){5-5}
& & MSE $(10^{-2})$ $\downarrow$
& Log-MSE $\downarrow$ \\
\midrule

\textsc{Action-BED} (MSE)
& Native network    & $\mathbf{0.55 \pm 0.06}$ & -6.83 $\pm$ 0.03\\
& Retrained network       & 0.86 $\pm$ 0.09 & -6.93 $\pm$ 0.01 \\
\addlinespace[1pt]
\midrule
\textsc{Action-BED} (Log)
& Native network    & $2.8 \pm 0.4$ & $\mathbf{-10.10 \pm 0.03}$ \\
& Retrained network       & 1.7 $\pm$ 0.2 & -9.97 $\pm$ 0.03  \\

\bottomrule
\end{tabular}
\begin{tablenotes}
\footnotesize
\item \textit{Notes.} Native networks are obtained directly from joint Action-BED training.
Retrained networks are trained post-hoc after freezing the learned acquisition policy, using the same downstream loss.
MSE is reported in units of $10^{-2}$.
\end{tablenotes}
\end{threeparttable}
\vspace{-1em}
\end{table}

\paragraph{Cross-Evaluation of Design and Action Policies.}
We further study the coupling between acquisition and decision-making by cross-evaluating all trained downstream action networks across all design policies in the benchmark. For each trained design policy, we generate histories under that policy and evaluate the predictions produced by action networks trained with every other method with the MSE loss. For \textsc{Action-BED}, the action network is the one learned jointly with the design policy; for the other baselines, it is the network trained subsequently on trajectories from the fixed design policy. Table~\ref{tab:cross_policy_coupling_normalized} reports normalised performance ratios, measuring, for each fixed design policy, the degradation incurred when replacing its associated action network by an action network trained on another history distribution.

This experiment tests whether action policies learn transferable Bayes-optimal action, or instead become specialised to the distribution of histories induced by their own design policy. Design policies that are more accurate and concentrate their measurements tightly around the latent sources induce history distributions on which action networks trained under other acquisition strategies perform substantially worse. In this regime, the downstream predictor must exploit fine-grained, policy-specific patterns in the adaptive design sequence, rather than only aggregate generic information from the observations. Conversely, the action networks paired with these more focused design policies achieve strong performance on their own histories but transfer poorly to histories generated by other policies. This suggests that high-performing methods do not merely collect more information; they also shape the information in a form that is specifically matched to their associated downstream decision rule.

\begin{table}[H]
\centering
\small
\setlength{\tabcolsep}{5.2pt}
\renewcommand{\arraystretch}{1.15}
\begin{threeparttable}
\caption{\small \textbf{Normalized cross-policy coupling between acquisition and action networks.}
Rows correspond to the acquisition policy used at test time, while columns correspond to the downstream action network used for decision-making.}
\label{tab:cross_policy_coupling_normalized}
\begin{tabular}{lccccc}
\toprule
\textbf{Test Acquisition}
& \multicolumn{5}{c}{\textbf{Downstream Action Network}} \\
\cmidrule(lr){2-6}
& A-BED (MSE)
& A-BED (Log)
& DAD
& ALINE
& Random \\
\midrule

\textsc{Action-BED} (MSE)
& $\mathbf{1.00}$ & 304.55 & 114.73 & 175.64 & 13.64 \\

\textsc{Action-BED} (Log)
& 17.11 & $\mathbf{1.00}$ & 16.07 & 34.75 & 9.21 \\

DAD
& 19.48 & 1079.98 & $\mathbf{1.00}$ & 16.89 & 6.14 \\

ALINE
& 32.86 & 18.10 & 20.03 & $\mathbf{1.00}$ & 131.91 \\

Random
& 11.83 & 1135.97 & 4.16 & 10.82 & $\mathbf{1.00}$ \\

\bottomrule
\end{tabular}
\begin{tablenotes}
\footnotesize
\item \textit{Notes.} Each entry reports a normalized degradation factor. For a test acquisition policy $\pi_d^i$ and an action network $\pi_a^j$, we define
$\widetilde R_{ij} = R(\pi_d^i,\pi_a^j) / \min_k R(\pi_d^i,\pi_a^k)$.
\end{tablenotes}
\end{threeparttable}
\vspace{-1em}
\end{table}

\paragraph{Impact of Warmstarting in Joint Training.}
We assess whether warmstarting the downstream action network improves joint \textsc{Action-BED} training. Table~\ref{tab:warmstart_abed} shows that, for both the MSE and Log-MSE objectives, warmstarting leads to better final performance. This suggests that the benefit is not limited to a faster initial optimisation of the action network. Rather, by providing a more accurate downstream predictor early in training, warmstarting yields a more informative learning signal for the design policy. This improved signal changes the subsequent joint optimisation dynamics and guides the coupled design--action system toward a better final solution.

\begin{table}[H]
\centering
\scriptsize
\setlength{\tabcolsep}{5.0pt}
\renewcommand{\arraystretch}{1.15}
\begin{threeparttable}
\caption{\small \textbf{Impact of warmstarting in joint Action-BED training.} Uncertainties are reported as one standard error.}
\label{tab:warmstart_abed}
\begin{tabular}{llccccc}
\toprule
\textbf{Acquisition} & \textbf{Warmstart}
& \multicolumn{4}{c}{\textbf{Performance} ($\pm$ 1 s.e.)}
& \textbf{Compute Time} \\
\cmidrule(lr){3-6}
\cmidrule(lr){7-7}
& & MSE $(10^{-2})$ $\downarrow$
& Log-MSE $\downarrow$
& sPCE $\uparrow$
& sNMC $\uparrow$
& GPU (min) $\downarrow$ \\
\midrule

\textsc{Action-BED} (MSE)
& Yes      & $\mathbf{0.55 \pm 0.06}$ & $-6.83 \pm 0.03$ & 11.63 $\pm$ 0.04  & 12.84 $\pm$ 0.08  & 1171 \\
& No    & 0.87 $\pm$ 0.1 & -6.82 $\pm$ 0.01 & 11.62 $\pm$ 0.08 & 12.67 $\pm$ 0.15 & \textbf{1159} \\
\addlinespace[1pt]
\midrule

\textsc{Action-BED} (Log)
& Yes      & $2.8 \pm 0.4$ & $\mathbf{-10.10 \pm 0.03}$ & $\mathbf{17.14 \pm 0.11}$ & $\mathbf{21.07 \pm 0.45}$ & 1184 \\
& No    & 3.0 $\pm$ 0.3 & -9.33 $\pm$ 0.03 & 16.48 $\pm$ 0.18 & 18.88 $\pm$ 0.49 & 1172 \\

\bottomrule
\end{tabular}
\end{threeparttable}
\vspace{-1em}
\end{table}

\subsection{Dynamical Systems}

We consider sequential design problems in which the underlying simulator is a dynamical system. Unlike exchangeable observation models, each outcome depends on the current system state, so the order in which observations are collected matters. This setting is adapted from \cite{iqbal_nesting_2024} and covers both the simple- and double-pendulum experiments considered in this work.

More precisely, we view these experiments as an abstraction of dynamical systems governed by stochastic differential equations, detailed below, whose observation model is Markovian:
\[
    p(y_{t+1} \mid y_t, \xi_{t+1}, \theta),
\]
so that the next observation is generated from the previous state or observation~$y_t$, the current design~$\xi_{t+1}$, and the physical parameters~$\theta$. Hence, the likelihood is sequential and cannot be factorized as with conditionally exchangeable observations. Under these assumptions, the joint distribution of a trajectory can be written as
\begin{align*}
    p(h_T \mid \theta; \pi_d^\phi)
    &= \prod_{t=1}^T p( y_t \mid h_{t-1}, \theta) \\
    &= \prod_{t=1}^T
       p(y_t \mid y_{t-1}, \xi_t, \theta).
\end{align*}

\paragraph{Architectures.}
For both pendulum tasks, we use the same TNP-like policy architecture for \textsc{Action-BED} and \textsc{DAD}, as described in Appendix~\ref{appendix:design_policy_architecture} and with the same encoding dimensions as in Table \ref{tab:architecture-dimensions}. Since the likelihood is non-exchangeable, we add a time embedding to each history element $r_t$ before aggregation. At the beginning of a rollout, when no design--outcome pair has yet been observed, the initial history representation is parameterized as a learnable vector.

For both \textsc{Action-BED} and \textsc{DAD}, the emitter output is additionally mapped through an affine--$\tanh$ transformation,
\[
    \xi_t = a \tanh(\tilde \xi_t) + b,
\]
which ensures that the selected designs lie in the admissible control domain. The same architectural template is used for the stochastic and double pendulum experiments; only the input and output dimensions can vary, depending on the dimension of the designs and observations. 

\textsc{ALINE} is permutation-invariant by construction and is therefore not directly tailored to non-exchangeable likelihoods. To make the sequential structure explicit, we augment each context element with a time encoding and include the most recent observation in the context before encoding. This provides the model with the information needed to condition each candidate design on the current dynamical state. \textsc{Random} samples designs uniformly at random over the valid constrained space. 

In all cases, the downstream predictor is a fully connected MLP mapping the complete design--outcome history to a point estimate of~$\theta$ in the relevant space (see Table \ref{tab:downstream_net_architecture_pendulum}). No parameter sharing is used between the design policy and the downstream network.

\begin{table}[!htbp]
\centering
\caption{Downstream network architecture.}
\begin{tabular}{llll}
\toprule
\textbf{Layer} & \textbf{Description} & \textbf{Dimension} & \textbf{Activation} \\
\midrule
Input & $h_T$ & $T \times (\dim_\xi + \dim_y)$ & -- \\
Hidden layer 1 & Fully connected & 512 & GELU \\
Hidden layer 2 & Fully connected & 256 & GELU \\
Hidden layer 3 & Fully connected & 128 & GELU \\
Output & $\hat{\theta}$ & $d_\theta$ & -- \\
\bottomrule
\end{tabular}
\label{tab:downstream_net_architecture_pendulum}
\end{table}

\paragraph{Training procedure.}
For the dynamical-system experiments, we follow a training protocol closely mirroring that of the source location finding task. All policies are trained from trajectories generated by the corresponding simulator. At each iteration, latent parameters $\theta$ are sampled from the prior, the design policy is rolled out autoregressively over the experimental horizon, and each method is trained on its own objective. We compare several downstream objectives in order to assess the accuracy and flexibility of the learned predictive action network. We consider the standard mean-squared error on~$\theta$,
\[
    \mathcal L_{\mathrm{MSE}}(\hat\theta,\theta)
    =
    \|\hat\theta-\theta\|_2^2,
\]
the logarithmic MSE,
\[
    \mathcal L_{\log\mathrm{-MSE}}(\hat\theta,\theta)
    =
    \log\!\left(\|\hat\theta-\theta\|_2^2+\varepsilon\right),
\]
and a weighted MSE,
\[
    \mathcal L_{\mathrm{wMSE}}(\hat\theta,\theta)
    =
    \sum_{j=1}^{d_\theta}
    w_j(\hat\theta_j-\theta_j)^2.
\]
For the stochastic pendulum, we use weights
\[
    w=(0.1,1.0,2.0),
\]
whereas for the double pendulum we use
\[
    w=(1.0,1.0,2.0,3.0).
\]
The weighted objective allows different components of~$\theta$ to be emphasized, illustrating that the downstream target can be adapted to the scientific or operational quantity of interest.

\paragraph{Evaluation procedure.}
In addition to downstream prediction performance, we evaluate each trained policy using sequential PCE and sequential NMC bounds on the EIG, which provide a common information-theoretic measure of design quality across methods. For both dynamical systems and for all downstream losses, prediction metrics are estimated from $2048$ rollout batches and reported with $\pm 1$ standard error. For the single-pendulum experiment, the sequential PCE and NMC estimates are computed with $L=50$K contrastive samples and a batch size of $2000$. For the double-pendulum experiment, we use a larger contrastive budget, with $L=200$K and a batch size of $1000$. We also report the total GPU compute time required for training.

\subsubsection{Stochastic Pendulum}
\label{appendix:stopendulum}

The first dynamical system considered is a stochastic single pendulum. The simulator describes the trajectory of a pendulum whose state evolves over time according to noisy dynamics. At each step, the design variable corresponds to an external control applied to the system, allowing the policy to influence the subsequent motion of the pendulum and thereby collect informative observations about the latent physical parameters.

\paragraph{Task.}
We consider the problem of actively identifying the physical parameters of a stochastic controlled pendulum. The state is $x_t = (q_t,\dot q_t)$, where $q_t$ denotes the angle from the vertical and $\dot q_t$ the angular velocity. The parameters of interest are the mass and length of the pendulum, $(m,l)$, while the gravitational acceleration and damping coefficient are fixed to $g = 9.81, \ d = 0.1$.
The initial state is fixed as $x_0 = (0,0)^\top$. At each step of the discretized dynamics, with time step $\Delta t = 0.05$, the policy selects a bounded control input $\xi_t \in [-1,1]$, corresponding to a torque applied to the pendulum. The observation is the state of the system at the next discretization step. The objective is to choose controls that generate trajectories informative about the latent physical parameters.

\paragraph{Stochastic Dynamics.}
In continuous time, the controlled dynamics follow the stochastic differential equation
\[
    dx_t = h(x_t,\xi_t)^\top \theta\,dt + L\,d\beta_t,
\]
where $\beta_t$ is a Brownian motion,
\[
    h(x_t,\xi_t)=(-\sin q_t,\;-\dot q_t,\;\xi_t),
    \qquad
    L = (0,0.1)^\top.
\]
Equivalently,
\begin{align*}
    dq_t &= \dot q_t\,dt, \\
    d\dot q_t &= h(x_t,\xi_t)^\top \theta\,dt + 0.1\,d\beta_t.
\end{align*}
The observations used in the experiment are the discretized states
\[
    y_t = x_t = (q_t,\dot q_t),
\]
recorded every $\Delta t=0.05$. The resulting observation process is sequential: each new observation depends on the previous state, the applied control, and the latent parameter~$\theta$.

Following the conditionally linear formulation, the unknown parameter is
\[
    \theta =
    \left(
        \frac{3g}{2l},
        \frac{3d}{m l^2},
        \frac{3}{m l^2}
    \right)^\top,
\]
which is bijectively related to the physical parameters $(m,l)$. We place the Gaussian prior
\[
    p(\theta)
    =
    \mathcal N\!\left(
    \begin{bmatrix}
        14.7 \\
        0 \\
        3.0
    \end{bmatrix},
    \begin{bmatrix}
        0.1 & 0 & 0 \\
        0 & 0.01 & 0 \\
        0 & 0 & 0.1
    \end{bmatrix}
    \right).
\]
The design variable $\xi_t$ represents the external torque applied to the pendulum.

\paragraph{Hyperparameters.}
The main training hyperparameters are reported in Table~\ref{tab:dynamical_systems_hyperparameters}. \textsc{Action-BED} and \textsc{DAD} are trained for $50$K gradient steps. For \textsc{Action-BED}, we additionally use a $10$K-step warm-up phase in which the downstream network is trained on trajectories generated from random designs. \textsc{ALINE} is trained with a total budget of $100$K steps, including a $20$K-step burning phase during which only the posterior inference network is optimised.

\begin{table}[t]
\centering
\scriptsize
\setlength{\tabcolsep}{3.0pt}
\renewcommand{\arraystretch}{1.12}
\caption{\small \textbf{Hyperparameters for stochastic single pendulum.}
Training and optimization hyperparameters used for the policy networks and the downstream prediction networks. 
ExpLR denotes an exponential learning-rate scheduler. The \textsc{Random} downstream network uses the same hyperparameters as \textsc{DAD}.}
\label{tab:dynamical_systems_hyperparameters}

\begin{threeparttable}
\begin{minipage}[t]{0.32\textwidth}
\centering
\textbf{DAD}\\[0.35em]
\begin{tabular}{lr}
\toprule
\textbf{Field} & \textbf{Value} \\
\midrule
\multicolumn{2}{l}{\textbf{Policy training}} \\
Batch size & $512$ \\
Contrastive samples $L$ & $2000$ \\
Policy steps & $50\mathrm{K}$ \\
Policy seed & $42$ \\
\midrule
\multicolumn{2}{l}{\textbf{Policy optimization}} \\
Optimizer & Adam \\
Adam betas & $(0.8,0.998)$ \\
Policy LR & $10^{-4}$ \\
Scheduler & ExpLR \\
Decay $\gamma$ & $0.96$ \\
Decay period & $400$ \\
Weight decay & $0$ \\
\midrule
\multicolumn{2}{l}{\textbf{Downstream training}} \\
Downstream steps & $50\mathrm{K}$ \\
Downstream seed & $42$ \\
Downstream LR & $10^{-4}$ \\
Downstream sched. & ExpLR \\
\bottomrule
\end{tabular}
\end{minipage}
\hfill
\begin{minipage}[t]{0.32\textwidth}
\centering
\textbf{ALINE}\\[0.35em]
\begin{tabular}{lr}
\toprule
\textbf{Field} & \textbf{Value} \\
\midrule
\multicolumn{2}{l}{\textbf{Policy training}} \\
Batch size & $200$ \\
Policy steps & $80\mathrm{K}$ \\
Burn-in steps & $20\mathrm{K}$ \\
Burn-in training & Posterior only \\
Query pool, burn-in & $50$ \\
Query pool, train & $200$ \\
Query proposal & Uniform \\
Reward discount & $1$ \\
Policy weight $\alpha$ & $1$ \\
Policy seed & $123$ \\
\midrule
\multicolumn{2}{l}{\textbf{Policy optimization}} \\
Optimizer & Adam \\
Adam betas & $(0.9,0.999)$ \\
Policy LR & $10^{-3}$ \\
Policy sched. & Cosine \\
Grad. clipping & $\|\nabla\|_{\infty}\leq 1$ \\
\midrule
\multicolumn{2}{l}{\textbf{Downstream training}} \\
Downstream steps & $50\mathrm{K}$ \\
Downstream LR & $10^{-3}$ \\
Downstream sched. & ExpLR \\
Downstream decay $\gamma$ & $0.95$ \\
Decay period & $1000$ \\
Downstream seed & 42 \\
\bottomrule
\end{tabular}
\end{minipage}
\hfill
\begin{minipage}[t]{0.32\textwidth}
\centering
\textbf{Action-BED}\\[0.35em]
\begin{tabular}{lr}
\toprule
\textbf{Field} & \textbf{Value} \\
\midrule
\multicolumn{2}{l}{\textbf{Joint training}} \\
Batch size & $512$ \\
Joint steps & $50\mathrm{K}$ \\
Warm-up steps & $10\mathrm{K}$ \\
Warm-up training & Downstream only \\
Warm-up designs & Random \\
Seed & $42$ \\
\midrule
\multicolumn{2}{l}{\textbf{Optimization}} \\
Optimizer & Adam \\
Adam betas & $(0.8,0.998)$ \\
Policy LR & $10^{-4}$ \\
Scheduler & ExpLR \\
Decay $\gamma$ & $0.96$ \\
Decay period & $400$ \\
Weight decay & $0$ \\
\bottomrule
\end{tabular}
\end{minipage}

\end{threeparttable}
\vspace{-1em}
\end{table}

\subsubsection{Stochastic Double Pendulum}
\label{appendix:doublependulum}

We next consider a controlled stochastic double-pendulum system. Compared with the single-pendulum experiment, this benchmark involves higher-dimensional, nonlinear, and coupled dynamics, making the generated trajectories substantially more sensitive to both the applied controls and the latent physical parameters. The state is
\[
    x_t =
    \begin{bmatrix}
        q_t \\
        \dot q_t
    \end{bmatrix}
    =
    (q_{1,t},q_{2,t},\dot q_{1,t},\dot q_{2,t})^\top
    \in \mathbb R^4,
\]
where $q_1,q_2$ are the joint angles and $\dot q_1,\dot q_2$ their angular velocities. The unknown parameter is
\[
    \theta = (m_1,m_2,l_1,l_2)^\top,
\]
corresponding to the masses and lengths of the two links. As in the single-pendulum setting, observations are the $T=50$ discretized states of the system, collected along a trajectory. The design variable is a pair of bounded torques
\[
    \xi_t = (\xi_{1,t},\xi_{2,t})^\top,
    \qquad
    \xi_{1,t}\in[-4,4],
    \qquad
    \xi_{2,t}\in[-2,2],
\]
applied to the two joints. The downstream task is to infer~$\theta$ from the resulting controlled trajectory. 

\paragraph{Stochastic Dynamics.}
The double-pendulum dynamics follow the standard manipulator form
\[
    M(q)\ddot q + C(q,\dot q)\dot q + \tau_g(q) = \xi,
\]
where $M(q)$ is the inertia matrix, $C(q,\dot q)$ contains the Coriolis and centrifugal terms, and $\tau_g(q)$ is the gravitational torque. We use $g=9.81$ and add process noise to the acceleration dynamics. The resulting stochastic differential equation is
\[
    dq_t = \dot q_t\,dt,
    \qquad
    d\dot q_t
    =
    M(q_t)^{-1}
    \big(
        \tau_g(q_t) + \xi_t - C(q_t,\dot q_t)\dot q_t
    \big)\,dt
    + L\,d\beta_t,
\]
where $\beta_t=(\beta_{1,t},\beta_{2,t})^\top$ is a two-dimensional Brownian motion with independent components and
\[
    L =
    \begin{bmatrix}
        0.1 & 0 \\
        0 & 0.1
    \end{bmatrix}.
\]
The SDE is discretized with the same Euler--Maruyama scheme as in the single-pendulum experiment, with $\Delta t = 0.05$. The observation process is therefore Markovian and non-exchangeable: each new state depends on the previous state, the applied joint torques, and the latent physical parameters.

We place a log-normal prior on the physical parameters,
\[
    p(\theta)
    =
    \operatorname{LogNormal}\!\left(
    \begin{bmatrix}
        0 \\
        0 \\
        0 \\
        0
    \end{bmatrix},
    0.01\,I_4
    \right),
\]
so that masses and lengths remain positive. This task therefore tests whether adaptive design policies can select torque sequences that generate trajectories informative about coupled inertial, gravitational, and velocity-dependent effects.

\paragraph{Hyperparameters.}
The main training hyperparameters are reported in Table~\ref{tab:dynamical_systems_hyperparameters_double}. For this task, \textsc{Action-BED} is trained for $100$K gradient steps, preceded by a $20$K-step warm-up phase in which only the downstream action network is optimised. \textsc{ALINE} is trained under the same total budget of $100$K steps, including a $20$K-step burn-in phase for the posterior inference network. Finally, \textsc{DAD} and its associated downstream networks are trained for $50$K steps, as no further improvement was observed beyond this point. In all cases, downstream action networks are trained until their predictive performance saturates.

\begin{table}[t]
\centering
\scriptsize
\setlength{\tabcolsep}{3.0pt}
\renewcommand{\arraystretch}{1.12}
\caption{\small \textbf{Hyperparameters for stochastic double pendulum.}
ExpLR denotes an exponential learning-rate scheduler. The \textsc{Random} downstream network uses the same hyperparameters as \textsc{DAD}.}
\label{tab:dynamical_systems_hyperparameters_double}

\begin{threeparttable}
\begin{minipage}[t]{0.32\textwidth}
\centering
\textbf{DAD}\\[0.35em]
\begin{tabular}{lr}
\toprule
\textbf{Field} & \textbf{Value} \\
\midrule
\multicolumn{2}{l}{\textbf{Policy training}} \\
Batch size & $512$ \\
Contrastive samples $L$ & $2000$ \\
Policy steps & $50\mathrm{K}$ \\
Policy seed & $42$ \\
\midrule
\multicolumn{2}{l}{\textbf{Policy optimization}} \\
Optimizer & Adam \\
Adam betas & $(0.8,0.998)$ \\
Policy LR & $10^{-4}$ \\
Scheduler & ExpLR \\
Decay $\gamma$ & $0.96$ \\
Decay period & $400$ \\
Weight decay & $0$ \\
\midrule
\multicolumn{2}{l}{\textbf{Downstream training}} \\
Downstream steps & $50\mathrm{K}$ \\
Downstream seed & $42$ \\
Downstream LR & $10^{-4}$ \\
Downstream sched. & ExpLR \\
\bottomrule
\end{tabular}
\end{minipage}
\hfill
\begin{minipage}[t]{0.32\textwidth}
\centering
\textbf{ALINE}\\[0.35em]
\begin{tabular}{lr}
\toprule
\textbf{Field} & \textbf{Value} \\
\midrule
\multicolumn{2}{l}{\textbf{Policy training}} \\
Batch size & $200$ \\
Policy steps & $80\mathrm{K}$ \\
Burn-in steps & $20\mathrm{K}$ \\
Burn-in training & Posterior only \\
Query pool, burn-in & $50$ \\
Query pool, train & $150$ \\
Query proposal & Uniform \\
Reward discount & $1$ \\
Policy weight $\alpha$ & $1$ \\
Policy seed & $123$ \\
\midrule
\multicolumn{2}{l}{\textbf{Policy optimization}} \\
Optimizer & Adam \\
Adam betas & $(0.9,0.999)$ \\
Policy LR & $10^{-3}$ \\
Policy sched. & Cosine \\
Grad. clipping & $\|\nabla\|_{\infty}\leq 1$ \\
\midrule
\multicolumn{2}{l}{\textbf{Downstream training}} \\
Downstream steps & $100\mathrm{K}$ \\
Downstream LR & $10^{-3}$ \\
Downstream sched. & ExpLR \\
Downstream decay $\gamma$ & $0.95$ \\
Decay period & $1000$ \\
Downstream seed & 42 \\
\bottomrule
\end{tabular}
\end{minipage}
\hfill
\begin{minipage}[t]{0.32\textwidth}
\centering
\textbf{Action-BED}\\[0.35em]
\begin{tabular}{lr}
\toprule
\textbf{Field} & \textbf{Value} \\
\midrule
\multicolumn{2}{l}{\textbf{Joint training}} \\
Batch size & $512$ \\
Joint steps & $100\mathrm{K}$ \\
Warm-up steps & $20\mathrm{K}$ \\
Warm-up training & Downstream only \\
Warm-up designs & Random \\
Seed & $42$ \\
\midrule
\multicolumn{2}{l}{\textbf{Optimization}} \\
Optimizer & Adam \\
Adam betas & $(0.8,0.998)$ \\
Policy LR & $5\times 10^{-4}$ \\
Scheduler & ExpLR \\
Decay $\gamma$ & $0.96$ \\
Decay period & $400$ \\
Weight decay & $0$ \\
\bottomrule
\end{tabular}
\end{minipage}

\end{threeparttable}
\vspace{-1em}
\end{table}

\subsubsection{Additional Results}
\label{appendix:additional_results_pendulum}

We report additional compute-time results for both dynamical-system tasks. Tables~\ref{tab:compute_time_single_pendulum} and~\ref{tab:compute_time_double_pendulum} decompose the total wall-clock GPU time into design-policy training and downstream-network training whenever these stages are trained separately.

\begin{table}[t]
\centering
\scriptsize
\setlength{\tabcolsep}{5.0pt}
\renewcommand{\arraystretch}{1.15}
\begin{threeparttable}
\caption{\small \textbf{Single Pendulum.} Training time decomposition.}
\label{tab:compute_time_single_pendulum}
\begin{tabular}{lccc}
\toprule
\textbf{Method}
& \textbf{Design}
& \textbf{Downstream}
& \textbf{Total} \\
\midrule
\textsc{Action-BED} (MSE)
& -- & -- & 240 \\
\textsc{Action-BED} (Log)
& -- & -- & 287 \\
\textsc{Action-BED} (W-MSE)
& -- & -- & 290 \\
\addlinespace[1pt]
\midrule
\textsc{DAD} (MSE)
& 463 & 207 & 670 \\
\textsc{DAD} (Log)
& 463 & 246 & 709 \\
\textsc{DAD} (W-MSE)
& 463 & 208 & 671 \\
\addlinespace[1pt]
\midrule
\textsc{ALINE} (MSE)
& 1520 & 600 & 2120 \\
\textsc{ALINE} (Log)
& 1520 & 621 & 2141 \\
\textsc{ALINE} (W-MSE)
& 1520 & 602 & 2122 \\
\addlinespace[1pt]
\midrule
\textsc{Random}
& 0 & 19 & 19 \\
\bottomrule
\end{tabular}
\end{threeparttable}
\vspace{-1em}
\end{table}

\begin{table}[t]
\centering
\scriptsize
\setlength{\tabcolsep}{5.0pt}
\renewcommand{\arraystretch}{1.15}
\begin{threeparttable}
\caption{\small \textbf{Double Pendulum.} Training time decomposition.}
\label{tab:compute_time_double_pendulum}
\begin{tabular}{lccc}
\toprule
\textbf{Method}
& \textbf{Design}
& \textbf{Downstream}
& \textbf{Total} \\
\midrule
\textsc{Action-BED} (MSE)
& -- & -- & 501 \\
\textsc{Action-BED} (Log)
& -- & -- & 504 \\
\textsc{Action-BED} (W-MSE)
& -- & -- & 504 \\
\addlinespace[1pt]
\midrule
\textsc{DAD} (MSE)
& 862 & 72 & 934 \\
\textsc{DAD} (Log)
& 862 & 78 & 940 \\
\textsc{DAD} (W-MSE)
& 862 & 71 & 933 \\
\addlinespace[1pt]
\midrule
\textsc{ALINE} (MSE)
& 1938 & 1075 & 3013 \\
\textsc{ALINE} (Log)
& 1938 & 1124 & 3062 \\
\textsc{ALINE} (W-MSE)
& 1938 & 1069 & 3007 \\
\addlinespace[1pt]
\midrule
\textsc{Random}
& 0 & 58 & 58 \\
\bottomrule
\end{tabular}
\begin{tablenotes}
\footnotesize
\item \textit{Notes.} Wall-clock GPU times are reported in minutes.
For \textsc{Action-BED}, the design and downstream action networks are trained jointly, so only total training time is reported.
Parentheses indicate the loss used to train the downstream action network.
\end{tablenotes}
\end{threeparttable}
\vspace{-1em}
\end{table}




\subsection{Masked MNIST Classification Experiment}
\label{appendix:mnist_classif_exp}

We consider a sequential masked-classification task inspired by \citet{iollo_bayesian_2025} and based on the MNIST dataset \cite{lecun1998gradient}. The latent variable is an image $\theta \in \mathbb{R}^{28 \times 28}$, and the target of interest is its class label $z \in \{0,\dots,9\}$. The goal is to identify the digit class from a small number of partial and noisy observations of the image.

At each step, the design $\xi \in [1,28]^2$ specifies a continuous spatial location in pixel coordinates, interpreted as the top-left corner of a local patch. The observation is a noisy $5\times 5$ patch extracted around this location:
\[
    y = A_{\xi}\theta + \eta,
    \qquad
    \eta \sim \mathcal N(0,\sigma^2 I_d),
\]
where $d=5^2$, $A_{\xi}\theta$ denotes the extracted patch, and zero padding is used whenever the patch extends outside the image domain. This illustrates that the observations are reparametrizable, which allows gradients to be propagated through the observation process. Conditionally on the image, the observation model is therefore explicit:
\[
    p(y \mid \theta,\xi)
    =
    \mathcal N(A_{\xi}\theta,\sigma^2 I_d).
\]

A direct implementation of $A_\xi$ as a discrete masking operator would be non-differentiable with respect to $\xi$: small changes in the queried location would not affect the selected pixels until crossing an integer pixel boundary. To obtain a differentiable simulator, we instead use a continuous counterpart of the patch-extraction operator. In practice, the image is queried by grid sampling at the $5\times 5$ patch coordinates induced by continuous~$\xi$. When a sampling coordinate is not integer-valued, its intensity is obtained by bilinear interpolation from the four neighboring pixels. Thus, moving the design location smoothly changes the observed patch, making the simulator differentiable with respect to~$\xi$ while retaining the interpretation of localized pixel measurements.

Over a horizon $T=5$, the policy adaptively selects patch locations $\xi_{1:T}$ based on the previously observed patches, with the objective of producing a final history informative for classification. This means that the task has a hierarchical structure. Observations are generated from the latent image~$\theta$, but the target of interest is the discrete label~$z$. Consequently, the relevant class-conditional observation model is obtained by marginalizing over images within each class,
\[
    p(y \mid z,\xi)
    =
    \int_\Theta p(y \mid \theta,\xi)\,p(\theta \mid z)\,d\theta.
\]
Although $p(y \mid \theta,\xi)$ is available in closed form, this class-conditional likelihood is not. In practice, the joint prior over images and labels is only available through the empirical dataset,
\[
    p(\theta,z)
    =
    \frac{1}{N}\sum_{i=1}^N
    \delta_{(\theta_i,z_i)}(\theta,z).
\]
The experiment therefore provides a benchmark for sequential design in a high-dimensional, hierarchical, and partially implicit setting. The design problem is challenging because only a small number of localized measurements are allowed, while the latent image is high-dimensional. Informative policies must therefore learn where to observe the image so as to reveal discriminative regions for the downstream classifier. This directly tests whether an acquisition strategy can align its designs with the final prediction task, rather than merely collecting information about the full latent image.

\subsubsection{Baselines, architectures, and downstream objective.}

\paragraph{Protocol.} We follow the same experimental protocol as in the previous experiments, with the main difference that the label-level observation model is implicit. As a result, EIG bounds are not reported for this task. Acquisition policies are trained using only images from the training set, while final performance is evaluated and reported on both the training and test sets. All downstream classifiers share the architecture reported in Table~\ref{tab:downstream_net_architecture_mnist} and are trained with the cross-entropy loss
\[
    \mathcal L_{\mathrm{CE}}(w)
    =
    - \mathbb E_{(z,\theta),h_T}
    \left[
        \log \pi_a^\psi(z \mid h_T)
    \right],
\]
where $z$ denotes the class label and $h_T$ the final acquisition history. For each method, training is monitored on a validation split and stopped by early stopping once validation performance no longer improves. The resulting training budgets and selected hyperparameters are summarised in Tables~\ref{tab:hyperparameters_mnist_1} and~\ref{tab:hyperparameters_mnist_2}. To avoid redundancy, these tables report only the hyperparameters that differ from those in Table~\ref{tab:lfmlphparams_baselines}; all omitted entries are kept unchanged. We report the final classification loss and accuracy on both the training and test sets.

\begin{table}[H]
\centering
\scriptsize
\setlength{\tabcolsep}{5.0pt}
\renewcommand{\arraystretch}{1.15}

\begin{threeparttable}
\caption{\small
\textbf{MNIST masked classification downstream architecture}.
}
\label{tab:downstream_net_architecture_mnist}

\begin{tabular}{llll}
\toprule
\textbf{Module} & \textbf{Layer} & \textbf{Description} & \textbf{Dimension / Activation} \\
\midrule

\multirow{2}{*}{Pair encoder}
& Input
& $(\xi_t,y_t)$
& $\dim(\xi)+\dim(y)$ / -- \\
& H1
& Fully connected
& $512$ / ReLU \\
& H2
& Fully connected
& $256$ / ReLU \\
& H3
& Fully connected
& $128$ / ReLU \\
& Output
& Fully connected
& $32$ / -- \\

\addlinespace[1pt]
\midrule

\multirow{2}{*}{History aggregation}
& Input
& $\{R(\xi_s,y_s)\}_{s=1}^{T}$
& $T \times 32$ / -- \\
& Pooling
& Sum or mean pooling
& $32$ / -- \\

\addlinespace[1pt]
\midrule

\multirow{2}{*}{Prediction head}
& Input
& Aggregated history
& $32$ / -- \\
& H1
& Fully connected
& $128$ / ReLU \\
& H2
& Fully connected
& $64$ / ReLU \\
& Output
& Class logits
& $K$ / -- \\

\bottomrule
\end{tabular}

\begin{tablenotes}
\footnotesize
\item \textit{Notes.} The downstream classifier uses the same encoder--aggregation structure as the acquisition network, with a larger pair encoder to handle the higher-dimensional masked-image observations. The output dimension $K=10$ is the number of classes.
\end{tablenotes}
\end{threeparttable}
\vspace{-1em}
\end{table}

\begin{table}[t]
\centering
\scriptsize
\setlength{\tabcolsep}{5.0pt}
\renewcommand{\arraystretch}{1.12}
\caption{\small \textbf{MNIST masked classification hyperparameters: \textsc{DAD} and \textsc{iDAD}.}}
\label{tab:hyperparameters_mnist_1}

\begin{threeparttable}
\begin{tabular}{lcc}
\toprule
\textbf{Field} & \textbf{DAD} & \textbf{iDAD} \\
\midrule
Batch size & $512$ & $512$ \\
Learning rate & $5 \times 10^{-4}$ & $10^{-3}$ \\
Policy steps & $50\mathrm{K}$ & $10\mathrm{K}$ \\
Contrastive samples & $L=500$ & Exact over labels \\
Downstream steps & $50\mathrm{K}$ & -- \\
Downstream LR & $10^{-4}$ & -- \\
\bottomrule
\end{tabular}

\begin{tablenotes}
\footnotesize
\item \textit{Notes.} For \textsc{iDAD}, no additional downstream classifier is trained: predictions are obtained directly from the critic logits. The contrastive denominator is computed exactly over the discrete label space. The \textsc{Random} downstream network uses the same hyperparameters as \textsc{DAD}.
\end{tablenotes}
\end{threeparttable}
\vspace{-1em}
\end{table}

\begin{table}[t]
\centering
\scriptsize
\setlength{\tabcolsep}{5.0pt}
\renewcommand{\arraystretch}{1.12}
\caption{\small \textbf{MNIST masked classification hyperparameters: \textsc{Action-BED} and \textsc{ALINE}.}}
\label{tab:hyperparameters_mnist_2}

\begin{threeparttable}
\begin{tabular}{lcc}
\toprule
\textbf{Field} & \textbf{ALINE} & \textbf{Action-BED} \\
\midrule
Batch size & $200$ & $512$ \\
Number of queries & $128$ & -- \\
Number of queries, burn-in & $128$ & -- \\
Learning rate & $10^{-3}$ & $5\times 10^{-4}$ \\
Total steps & $90\mathrm{K}$ & $10\mathrm{K}$ \\
Burn-in steps & $10\mathrm{K}$ & -- \\
Warm-up steps & -- & $0$ \\
Joint steps & -- & $10\mathrm{K}$ \\
\bottomrule
\end{tabular}

\begin{tablenotes}
\footnotesize
\item \textit{Notes.} For \textsc{ALINE}, predictions are obtained directly from the classification head added to the posterior inference network. 
\end{tablenotes}
\end{threeparttable}
\vspace{-1em}
\end{table}

\paragraph{Architectures.} The acquisition policies used by \textsc{DAD}, \textsc{iDAD}, and \textsc{Action-BED} rely on the same architectural backbone as in Appendix~\ref{appendix:architectures}. Apart from the task-specific dimensions of the design--observation pairs, the encoding dimensions are kept unchanged with respect to Table~\ref{tab:architecture-dimensions}. The only additional modification is a sigmoid activation at the output of the emitter network, so that the proposed designs lie in the normalized domain $[0,1]^2$ before being reprojected onto the pixel grid.

\textsc{Action-BED} jointly trains the acquisition policy and downstream classifier for $10$K gradient steps using the downstream cross-entropy objective, with no warm-up phase. \textsc{DAD} is trained on sPCE objective, since it requires an explicit likelihood. This objective is defined with respect to the latent image $\theta$ rather than the class label $z$, therefore,  its policy seeks designs that discriminate between images, rather than directly between classes. For both \textsc{DAD} and \textsc{Random}, a downstream classifier with the same architecture as in \textsc{Action-BED} is then trained separately for $50$K gradient steps on trajectories generated by the corresponding policy.

We also compare against \textsc{iDAD}, which adapts the contrastive InfoNCE objective to the implicit class-conditional setting. It jointly learns a design policy $\pi_d^{\mathrm{iDAD}}$ and a critic $U_\psi(z,h_T)$ measuring the compatibility between a candidate label and the observed history. Since the latent parameter space is discrete, the nested expectation appearing in the denominator of the InfoNCE objective can be accurately approximated by summing over all possible labels rather than relying on Monte Carlo negative samples. In practice, this amounts to replacing the usual contrastive denominator by a categorical normalization over the label space,
\[
    \mathcal L_{\mathrm{iDAD}}(\phi,\psi)
    =
    -\mathbb E_{z,h_T}
    \left[
    \log
    \frac{\exp\{U_\psi(z,h_T)\}}
    {\sum_{z'=1}^{9} \exp\{U_\psi(z',h_T)\}}
    \right].
\]

For \textsc{ALINE}, we keep the native encoder--decoder architecture and modify only the target prediction head. Whereas the original implementation parameterises the posterior by a Gaussian mixture, the masked classification task requires a discrete posterior over labels. We therefore replace the Gaussian-mixture head by a classification head producing categorical logits. The target decoder outputs embeddings $z_{\mathrm{target}}$, with one target token per class. Each token embedding is passed through a small MLP to produce one scalar logit,
\[
    \ell_k = f_{\mathrm{cls}}(z_{\mathrm{target},k}),
    \qquad k=1,\dots,10,
\]
and hence a categorical posterior through a softmax. The posterior inference network is trained with the standard cross-entropy loss. This preserves ALINE's native token-based architecture while adapting its output layer to the discrete label space.

Similarly, \textsc{iDAD} already includes a critic network which, in this setting, directly scores class labels as a function of the acquisition history. Its outputs can therefore be interpreted as class logits. Consequently, for both \textsc{iDAD} and \textsc{ALINE}, we do not train an additional downstream classifier: predictions are obtained directly from the critic logits for \textsc{iDAD} and from the classification head logits for \textsc{ALINE}. 

\paragraph{Evaluation.} For MNIST masked classification, loss and accuracy are evaluated after training on the full training and test sets. Standard errors are computed across examples: for the cross-entropy loss, we report the empirical standard error of the per-example losses; for accuracy, we use the Bernoulli standard error associated with the empirical success rate. These uncertainty estimates therefore reflect finite-sample evaluation variability. Additional Figure~\ref{fig:four_policy_rollouts_mnist} shows representative design trajectories for the different methods.

\begin{figure}[t]
    \centering
    \small
    \begin{subfigure}[t]{0.24\textwidth}
        \centering
        \includegraphics[width=\linewidth]{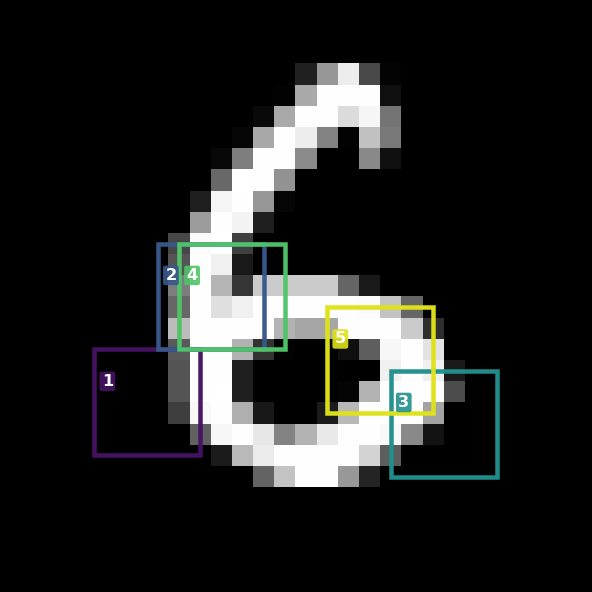}
        \caption{\textsc{DAD}}
    \end{subfigure}
    \hfill
    \begin{subfigure}[t]{0.24\textwidth}
        \centering
        \includegraphics[width=\linewidth]{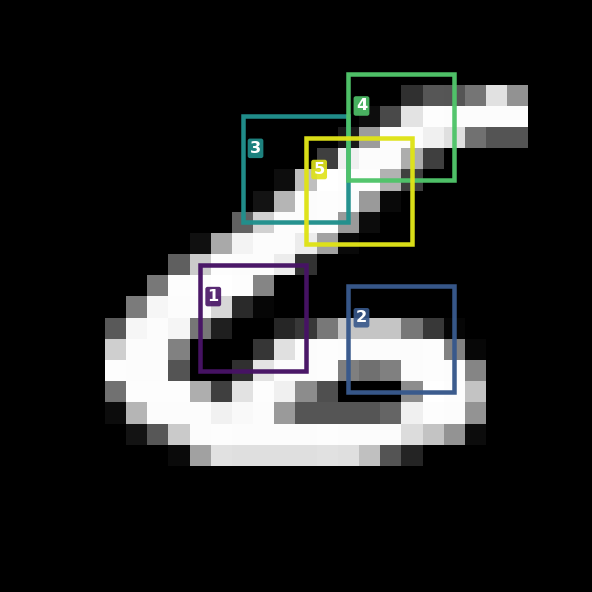}
        \caption{\textsc{iDAD}}
    \end{subfigure}
    \hfill
    \begin{subfigure}[t]{0.24\textwidth}
        \centering
        \includegraphics[width=\linewidth]{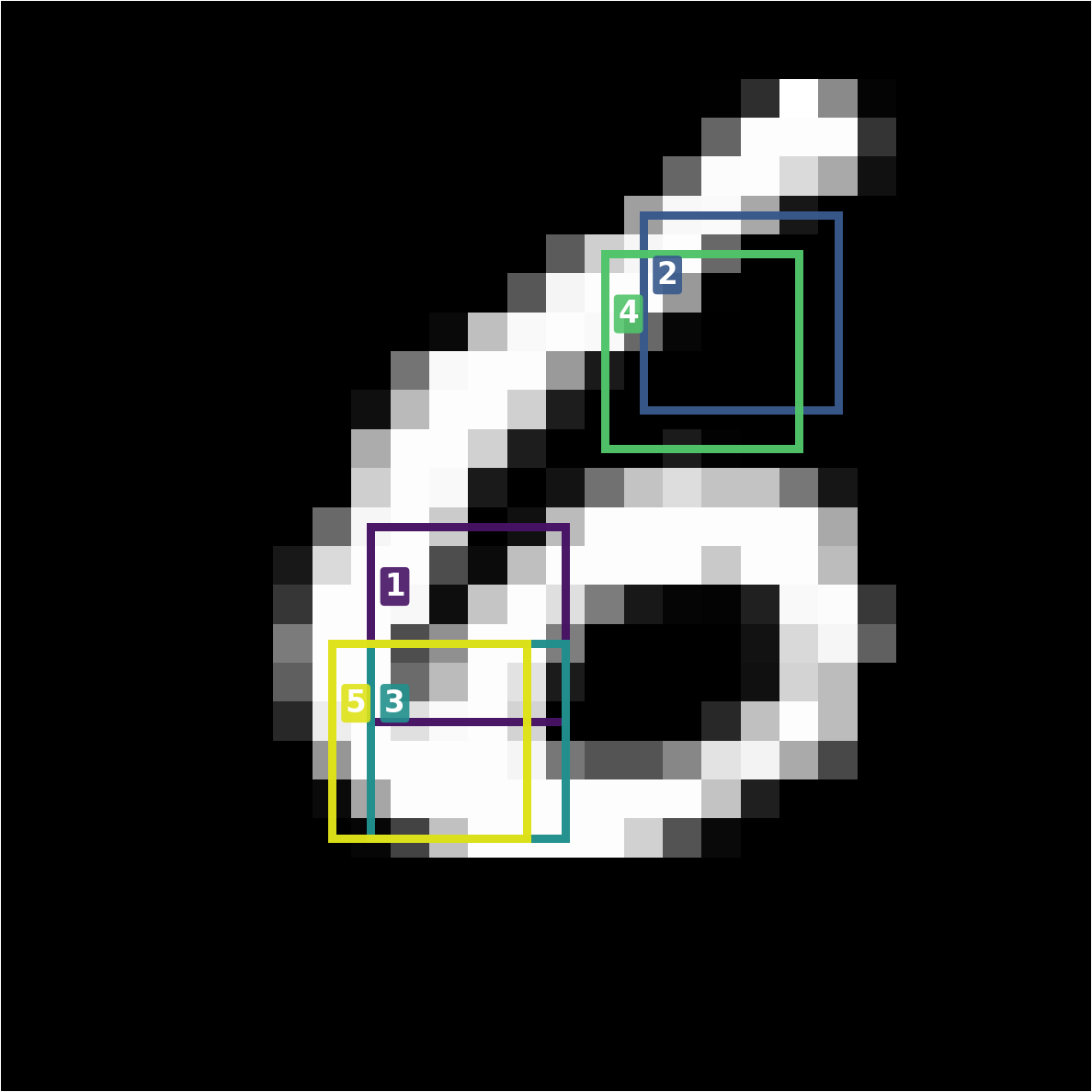}
        \caption{\textsc{ALINE}}
    \end{subfigure}
    \hfill
    \begin{subfigure}[t]{0.24\textwidth}
        \centering
        \includegraphics[width=\linewidth]{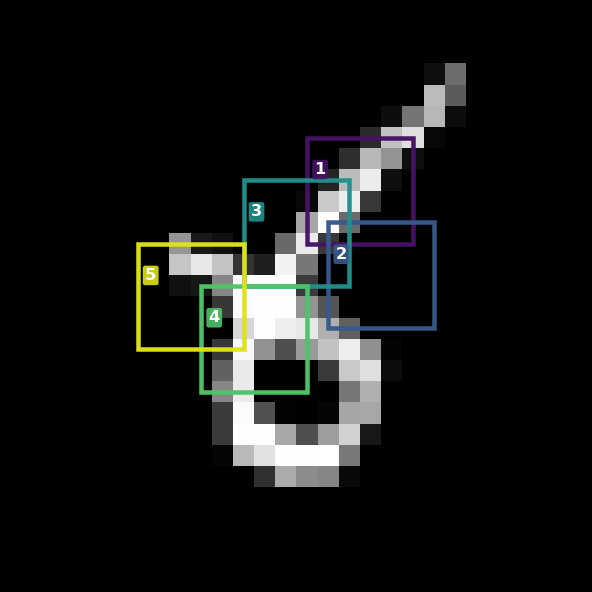}
        \caption{\textsc{Action-BED}}
    \end{subfigure}

    \caption{\small \textbf{Comparison of four policy rollouts.}
    Each panel shows one experimental design trajectory under the trained design policy.}
    \label{fig:four_policy_rollouts_mnist}
\end{figure}

\begin{table}[t]
\centering
\scriptsize
\setlength{\tabcolsep}{5.0pt}
\renewcommand{\arraystretch}{1.15}

\begin{threeparttable}
\caption{
\small \textbf{Pair Encoder and Emitter Architectures}.
}
\label{tab:pair_encoder_emitter_architecture}

\begin{tabular}{llll}
\toprule
\textbf{Module} & \textbf{Layer} & \textbf{Description} & \textbf{Dimension / Activation} \\
\midrule

\multirow{2}{*}{Pair encoder}
& Input
& $\xi, y$
& $3$ / -- \\
& H1
& Fully connected
& $256$ / ReLU \\
& Output
& Fully connected
& $16$ / -- \\

\addlinespace[1pt]
\midrule

\multirow{2}{*}{Emitter}
& Input
& $R(h_t)$
& $16$ / -- \\
& H1
& Fully connected
& $2$ / -- \\
& Output
& $\xi$
& $2$ / -- \\

\bottomrule
\end{tabular}

\end{threeparttable}
\end{table}

\newpage
\subsection{Robustness to Network Architecture}
\label{appendix:light_archi}

This section reports additional results obtained under the same experimental protocols as in the main experiments, but using lighter neural architectures for both acquisition and downstream inference, and smaller computational budgets. The goal is not to improve upon the main quantitative results, but rather to assess whether the qualitative conclusions are robust to reduced model capacity. In particular, we investigate whether the relative behaviour of \textsc{Action-BED} and the information-theoretic baselines \textsc{DAD} and \textsc{iDAD} remains consistent. Across tasks, the experimental setting, simulators, evaluation metrics, and train/test protocols are kept identical to those used in the corresponding main experiments. As expected, this generally leads to weaker absolute performance compared with the main results. 

The precise architecture depends on the structure of the task. For exchangeable observation models, we use a DeepSets-style architecture in which each design--observation pair $(\xi_s,y_s)$ is encoded independently and the resulting embeddings are aggregated by sum or mean pooling. For sequential and non-exchangeable simulators, such as the stochastic pendulum, the history is instead processed by a recurrent network, since the likelihood depends explicitly on the previous state. For each task, the downstream action policy $\pi_a^\psi$ is kept identical to that used in the corresponding main experiment. It is implemented as an MLP taking the final history $h_T$ as input and returning the estimate or action associated with the chosen downstream loss.

\begin{table}[t]
\centering
\scriptsize
\setlength{\tabcolsep}{5.0pt}
\renewcommand{\arraystretch}{1.15}

\begin{threeparttable}
\caption{\small
\textbf{Source Location Finding hyperparameters}.
}
\label{tab:lfmlphparams}

\begin{tabular}{lcc}
\toprule
\textbf{Hyperparameter} & \textbf{DAD} & \textbf{Action-BED} \\
\midrule

Batch size
& $2000$
& $2000$ \\

Contrastive samples $L$
& $2000$
& -- \\

Policy steps
& $50\mathrm{K}$
& $150\mathrm{K}$ \\

Downstream steps
& $150\mathrm{K}$
& -- \\

Warm-up steps
& --
& $50K$ \\

Optimizer
& Adam
& Adam \\

Adam betas
& $(0.8,0.998)$
& $(0.8,0.998)$ \\

Learning rate
& $5\times 10^{-4}$
& $7\times 10^{-4}$ \\

LR decay $\gamma$
& $0.95$
& $0.95$ \\

LR decay period
& $2000$
& $2000$ \\

Weight decay
& $0$
& $0$ \\

Random seed
& $42$
& $42$ \\

\bottomrule
\end{tabular}

\begin{tablenotes}
\footnotesize
\item \textit{Notes.} Contrastive samples are used only for the sPCE-based \textsc{DAD} objective. For \textsc{Action-BED}, the acquisition policy and downstream action network are optimised with the downstream loss; warm-up steps are also reported.
\end{tablenotes}
\end{threeparttable}
\vspace{-1em}
\end{table}

\subsubsection{Source Location Finding}

\paragraph{Training.}
For the source-location experiment, \textsc{DAD} and \textsc{Action-BED} use the same data-acquisition architecture, reported in Table~\ref{tab:pair_encoder_emitter_architecture}. This architecture is the one used in \citet{foster_deep_2021} for the corresponding DAD experiment, ensuring that differences in performance are not driven by changes in policy-network capacity. Each design--observation pair $(\xi,y)$ is first processed by a pair encoder, which maps it to a fixed-dimensional embedding. The embeddings associated with the history $h_t$ are then aggregated by a permutation-invariant pooling operation, either sum or mean pooling, and passed to an emitter network that outputs the next design $\xi_{t+1}$. Both the pair encoder and the emitter are fully connected MLPs.

The \textsc{Random} baseline does not use a learned acquisition network: designs are sampled independently from the initial prior distribution $p(\theta)$. The corresponding training hyperparameters for the learned policies and downstream action networks are reported in Table~\ref{tab:lfmlphparams}.

\paragraph{Results.} The results are reported in Table~\ref{tab:downstream_comparison}. After training, all policies are evaluated using $L=100{,}000$ contrastive samples and a batch size of $2000$ rollouts for the sequential PCE and NMC estimates. Downstream losses are computed on independent batches of $2000$ rollouts. 
Uncertainties are reported as $\pm 1$ standard error across rollout histories. We also report the total training time for each method, including both acquisition-policy and downstream-action-network training when applicable.

\begin{table}[H]
\centering
\scriptsize
\setlength{\tabcolsep}{5.0pt}
\renewcommand{\arraystretch}{1.15}

\begin{threeparttable}
\caption{\small
\textbf{Source Location Finding, lightweight architectures}.
}
\label{tab:downstream_comparison}

\begin{tabular}{lccccc}
\toprule
\multirow{2}{*}{\textbf{Method}}
& \multicolumn{4}{c}{\textbf{Performance ($\pm$ 1 s.e.)}}
& \multicolumn{1}{c}{\textbf{Compute}} \\
\cmidrule(lr){2-5}
\cmidrule(lr){6-6}
& MSE $(10^{-2})$ $\downarrow$
& Log-MSE $\downarrow$
& sPCE $\uparrow$
& sNMC $\uparrow$
& Time (min) $\downarrow$ \\
\midrule

Action-BED (MSE)
& $\mathbf{2.6 \pm 0.2}$
& $-5.59 \pm 0.03$
& $9.14 \pm 0.04$
& $9.95 \pm 0.07$
& 97 \\

Action-BED (Log)
& $4.2 \pm 0.4$
& $\mathbf{-6.39 \pm 0.04}$
& $10.36 \pm 0.04$
& $12.72 \pm 0.03$
& 102 \\

\addlinespace[1pt]
\midrule

DAD (MSE)
& $13.6 \pm 0.7$
& $-3.72 \pm 0.04$
& $\mathbf{10.38 \pm 0.03}$
& $\mathbf{12.91 \pm 0.03}$
& 283 \\

DAD (Log)
& $15.5 \pm 1.2$
& $-4.02 \pm 0.04$
& $\mathbf{10.38 \pm 0.03}$
& $\mathbf{12.91 \pm 0.03}$
& 285 \\

\addlinespace[1pt]
\midrule

Random
& $22.4 \pm 0.8$
& $-3.16 \pm 0.04$
& $8.27 \pm 0.04$
& $8.49 \pm 0.05$
& \textbf{31} \\

\bottomrule
\end{tabular}

\begin{tablenotes}
\footnotesize
\item \textit{Notes.} MSE is reported in units of $10^{-2}$. Uncertainties are reported as $ \pm 1 \ \mathrm{s.e.}$ over $2048$ evaluation rollouts.
Compute time includes training both the design and downstream action networks.
\end{tablenotes}
\end{threeparttable}
\end{table}

\subsubsection{Training Stability over Training Seeds}
\label{app:training_stability_seeds}

To assess the stability of \textsc{Action-BED}, we repeated training across multiple random seeds and evaluated the resulting policies under the same protocol. We found that performance remained consistent across runs, with only moderate variation in downstream loss and EIG estimates. Notably, the standard errors across independently trained policies are of a similar order of magnitude to those obtained from rollout-level uncertainty estimates, suggesting that training-seed variability is not substantially larger than evaluation noise. This suggests that the joint optimisation of the design and action policies is not overly sensitive to random initialisation or stochastic training trajectories. In particular, the observed gains over EIG-based baselines in terms of downstream losses are not driven by a single favourable run, but persist across independently trained policies.

\begin{table}[H]
\centering
\scriptsize
\setlength{\tabcolsep}{6.0pt}
\renewcommand{\arraystretch}{1.15}
\begin{threeparttable}
\caption{\small \textbf{Training stability over random seeds on Source Location Finding.}}
\label{tab:seed_stability_source_location}
\begin{tabular}{lcccc}
\toprule
\textbf{Method} 
& \textbf{MSE} $(10^{-2})$ $\downarrow$
& \textbf{Log-MSE} $\downarrow$
& \textbf{sPCE} $\uparrow$
& \textbf{sNMC} $\uparrow$ \\
\midrule
\textsc{Action-BED} (MSE)
& $2.7 \pm 0.1$
& $-5.54 \pm 0.02 $
& $9.04 \pm 0.04 $
& $9.79 \pm 0.06$ \\

\textsc{Action-BED} (Log)
& $3.9 \pm 0.3$
& $-6.27 \pm 0.02$
& $10.21 \pm 0.07$
& $12.67 \pm 0.09$ \\
\midrule
\addlinespace[1pt]
DAD (MSE)
& $14.1 \pm 0.3$
& $-3.78 \pm 0.05$
& $10.36 \pm 0.07$
& $12.79 \pm 0.05$ \\

DAD (Log)
& $15.7 \pm 0.9$
& $-3.96 \pm 0.06$
& $10.36 \pm 0.07$
& $12.79 \pm 0.05$ \\
\midrule
\addlinespace[1pt]
Random
& $21.2 \pm 0.4$
& $-3.19 \pm 0.08$
& $8.21 \pm 0.02$
& $8.41 \pm 0.02$ \\
\bottomrule
\end{tabular}
\begin{tablenotes}
\footnotesize
\item \textit{Notes.} Uncertainties are reported as one standard error over independently trained policies with 15 different random seeds. For each trained policy, metrics are estimated using the same evaluation protocol.
\end{tablenotes}
\end{threeparttable}
\vspace{-1em}
\end{table}

\newpage
\subsubsection{Dynamical Systems}

\paragraph{Architecture.}
For both pendulum tasks, the acquisition network must account for the temporal dependence of the simulator. We therefore use the same sequential architecture for the stochastic single and double pendulum, inspired by \citet{ivanova_implicit_2021}. Each design--state pair $(\xi_t,x_t)$ is first embedded by an MLP encoder, and the resulting sequence of embeddings is processed by a two-layer LSTM. The final recurrent state is then passed through an emitter MLP, whose output dimension matches the design dimension of the task. The full acquisition architecture is summarised in Table~\ref{tab:pendulum_acquisition_architecture}.

The downstream action network is kept identical to that used in the corresponding main experiments. It is implemented as an MLP taking the full trajectory $h_T$ as input and returning the parameter estimate associated with the chosen downstream loss. Unless otherwise stated, all optimisation hyperparameters are also inherited from the main experiments. The only change is the training budget: for the stochastic single pendulum, all policy, downstream, and joint networks are trained for $30$K gradient steps; for the double pendulum, all corresponding networks are trained for $50$K gradient steps. No downstream-only warm-up phase is used in either task.

The results are reported in Tables~\ref{tab:pendulum_downstream_losses_small_network} and~\ref{tab:double_pendulum_downstream_losses_small_network}. They are obtained using the same evaluation protocol as in the main experiments, with downstream losses computed on independent rollout batches and information-theoretic metrics estimated using sequential PCE and NMC bounds.

\begin{table}[t]
\centering
\scriptsize
\setlength{\tabcolsep}{5.0pt}
\renewcommand{\arraystretch}{1.15}

\begin{threeparttable}
\caption{\small
\textbf{Acquisition architecture for the pendulum tasks}.
}
\label{tab:pendulum_acquisition_architecture}

\begin{tabular}{llll}
\toprule
\textbf{Module} & \textbf{Layer} & \textbf{Description} & \textbf{Dimension / Activation} \\
\midrule

\multirow{2}{*}{Pair encoder}
& Input
& $(\xi_t,x_t)$
& $\dim(\xi)+\dim(x)$ / -- \\
& H1
& Fully connected
& $256$ / ReLU \\
& H2
& Fully connected
& $256$ / ReLU \\
& Output
& Fully connected
& $64$ / -- \\

\addlinespace[1pt]
\midrule

\multirow{2}{*}{History aggregator}
& Input
& $\{R(\xi_s,x_s)\}_{s=0}^{t}$
& $(t+1)\times 64$ / -- \\
& H1
& LSTM
& $64$ / -- \\
& H2
& LSTM
& $64$ / -- \\

\addlinespace[1pt]
\midrule

\multirow{2}{*}{Emitter}
& Input
& Final recurrent state
& $64$ / -- \\
& H1
& Fully connected
& $256$ / ReLU \\
& H2
& Fully connected
& $256$ / ReLU \\
& Output
& Design $\xi_{t+1}$
& $\dim(\xi)$ / $\tanh$ \\

\bottomrule
\end{tabular}

\begin{tablenotes}
\footnotesize
\item \textit{Notes.} The design dimension is $\dim(\xi)=1$ for the stochastic single pendulum and $\dim(\xi)=2$ for the double pendulum. The $\tanh$ output is subsequently rescaled to the task-specific control domain.
\end{tablenotes}
\end{threeparttable}
\vspace{-1em}
\end{table}

\begin{table}[H]
\centering
\scriptsize
\setlength{\tabcolsep}{5.0pt}
\renewcommand{\arraystretch}{1.15}

\begin{threeparttable}
\caption{\small
\textbf{Stochastic Single Pendulum}.
}
\label{tab:pendulum_downstream_losses_small_network}

\begin{tabular}{lccccc}
\toprule
\multirow{2}{*}{\textbf{Method}}
& \multicolumn{4}{c}{\textbf{Performance ($\pm$ 1 s.e.)}}
& \multicolumn{1}{c}{\textbf{Compute}} \\
\cmidrule(lr){2-5}
\cmidrule(lr){6-6}
& MSE $(10^{-2})$ $\downarrow$
& Log-MSE $\downarrow$
& sPCE $\uparrow$
& sNMC $\uparrow$
& Time (min) $\downarrow$ \\
\midrule

Action-BED (MSE)
& $\mathbf{1.0 \pm 0.09}$
& $-5.30 \pm 0.03$
& $\mathbf{3.35 \pm 0.02}$
& $\mathbf{3.40 \pm 0.02}$
& 45 \\

Action-BED (Log)
& $1.1 \pm 0.11$
& $\mathbf{-5.85 \pm 0.04}$
& $2.55 \pm 0.01$
& $2.66 \pm 0.01$
& 47 \\

\addlinespace[1pt]
\midrule

DAD
& $1.7 \pm 0.16$
& $-5.34 \pm 0.05$
& $2.63 \pm 0.12$
& $2.65 \pm 0.11$
& 242 \\

\addlinespace[1pt]
\midrule

Random
& $2.90 \pm 0.13$
& $-4.15 \pm 0.08$
& $1.39 \pm 0.07$
& $1.42 \pm 0.08$
& $\mathbf{5}$ \\

\bottomrule
\end{tabular}

\begin{tablenotes}
\footnotesize
\item \textit{Notes.} For \textsc{DAD}, downstream performance is reported only for the downstream network trained with the corresponding loss. Uncertainties are reported as $\pm 1  \ \mathrm{s.e}$ over $2048$ rollout histories. Compute time includes training both the design and downstream action networks when applicable. 
\end{tablenotes}
\end{threeparttable}
\vspace{-2em}
\end{table}

\begin{table}[t]
\centering
\scriptsize
\setlength{\tabcolsep}{5.0pt}
\renewcommand{\arraystretch}{1.15}

\begin{threeparttable}
\caption{\small
\textbf{Double Pendulum}.
}
\label{tab:double_pendulum_downstream_losses_small_network}

\begin{tabular}{lccccc}
\toprule
\multirow{2}{*}{\textbf{Method}}
& \multicolumn{4}{c}{\textbf{Performance ($\pm$ 1 s.e.)}}
& \multicolumn{1}{c}{\textbf{Compute}} \\
\cmidrule(lr){2-5}
\cmidrule(lr){6-6}
& MSE $(10^{-3})$ $\downarrow$
& Log-MSE $\downarrow$
& sPCE $\uparrow$
& sNMC $\uparrow$
& Time (min) $\downarrow$ \\
\midrule

Action-BED (MSE)
& $\mathbf{0.7 \pm 0.05}$
& $-9.25 \pm 0.02$
& $10.40 \pm 0.02$
& $14.50 \pm 0.13$
& 276 \\

Action-BED (Log)
& $1.4 \pm 0.08$
& $\mathbf{-9.95 \pm 0.04}$
& $9.90 \pm 0.03$
& $12.50 \pm 0.07$
& 274 \\

\addlinespace[1pt]
\midrule

DAD (MSE)
& $1.0 \pm 0.04$
& $-9.32 \pm 0.03$
& $\mathbf{10.75 \pm 0.03}$
& $\mathbf{14.10 \pm 0.14}$
& 582 \\

\addlinespace[1pt]
\midrule

Random
& $2.0 \pm 0.09$
& $-6.80 \pm 0.05$
& $8.26 \pm 0.06$
& $8.41 \pm 0.08$
& $\mathbf{58}$ \\

\bottomrule
\end{tabular}

\begin{tablenotes}
\footnotesize
\item \textit{Notes.} For \textsc{DAD}, downstream performance is reported only for the downstream network trained with the corresponding loss. 
\end{tablenotes}
\end{threeparttable}
\end{table}

\subsubsection{MNIST Masked Classification}

For the MNIST masked-classification task, we use the same class of acquisition architectures as in the source-location experiment. In particular, the design policies are implemented with a DeepSets-style backbone: each design--observation pair is encoded independently, the resulting embeddings are aggregated over the acquisition history, and an emitter network outputs the next design. The precise architecture is reported in Table~\ref{tab:mnist_policy_architecture}.

The downstream classifier follows the same encoder--aggregation as summarised in Table~\ref{tab:downstream_net_architecture_mnist}. For \textsc{iDAD}, we use the same backbone for the design policy but the critic encoder is used as a downstream classifier as in the main experiment. 

Training hyperparameters for all methods are kept identical to those used in the corresponding main experiment. The corresponding classification results, including cross-entropy loss and accuracy on both training and test sets, are presented in Table~\ref{tab:mnist_classification_task}.

\begin{table}[t]
\centering
\scriptsize
\setlength{\tabcolsep}{5.0pt}
\renewcommand{\arraystretch}{1.15}

\begin{threeparttable}
\caption{\small
\textbf{MNIST masked classification acquisition-policy architecture}.
}
\label{tab:mnist_policy_architecture}

\begin{tabular}{llll}
\toprule
\textbf{Module} & \textbf{Layer} & \textbf{Description} & \textbf{Dimension / Activation} \\
\midrule

\multirow{2}{*}{Pair encoder}
& Input
& $(\xi_t,y_t)$
& $\dim(\xi)+\dim(y)$ / -- \\
& H1
& Fully connected
& $256$ / ReLU \\
& H2
& Fully connected
& $128$ / ReLU \\
& H3
& Fully connected
& $64$ / ReLU \\
& Output
& Fully connected
& $16$ / -- \\

\addlinespace[1pt]
\midrule

\multirow{2}{*}{History aggregation}
& Input
& $\{R(\xi_s,y_s)\}_{s=1}^{t}$
& $t \times 16$ / -- \\
& Pooling
& Sum pooling
& $16$ / -- \\

\addlinespace[1pt]
\midrule

\multirow{2}{*}{Emitter}
& Input
& Aggregated history
& $16$ / -- \\
& Output
& Design $\xi_{t+1}$
& $2$ / Sigmoid \\

\bottomrule
\end{tabular}

\begin{tablenotes}
\footnotesize
\item \textit{Notes.} The design-policy backbone is a DeepSets-style net. A sigmoid activation is used to constrain designs to $[0,1]^2$ before reprojection onto the pixel grid.
\end{tablenotes}
\end{threeparttable}
\vspace{-1em}
\end{table}

\begin{table}[H]
\centering
\scriptsize
\setlength{\tabcolsep}{5.0pt}
\renewcommand{\arraystretch}{1.15}

\begin{threeparttable}
\caption{\small
\textbf{MNIST masked classification}.
}
\label{tab:mnist_classification_task_small_network}

\begin{tabular}{lccccc}
\toprule
\multirow{2}{*}{\textbf{Method}}
& \multicolumn{4}{c}{\textbf{Performance ($\pm$ 1 s.e.)}}
& \multicolumn{1}{c}{\textbf{Compute}} \\
\cmidrule(lr){2-5}
\cmidrule(lr){6-6}
& Train CE $\downarrow$
& Train Acc. (\%) $\uparrow$
& Test CE $\downarrow$
& Test Acc. (\%) $\uparrow$
& Time (min) $\downarrow$ \\
\midrule

Action-BED
& $\mathbf{0.050 \pm 0.002}$
& $\mathbf{98.24 \pm 0.04}$
& $\mathbf{0.128 \pm 0.006}$
& $\mathbf{96.55 \pm 0.13}$
& $\mathbf{2.5}$ \\

\addlinespace[1pt]
\midrule

DAD
& $0.398 \pm 0.004$
& $83.10 \pm 0.03$
& $0.609 \pm 0.010$
& $77.75 \pm 0.12$
& 18 \\

iDAD
& $0.113 \pm 0.003$
& $96.28 \pm 0.07$
& $0.175 \pm 0.007$
& $94.76 \pm 0.19$
& 5 \\

\addlinespace[1pt]
\midrule

Random
& $0.786 \pm 0.015$
& $71.78 \pm 0.87$
& $0.797 \pm 0.015$
& $71.40 \pm 0.86$
& 5 \\

\bottomrule
\end{tabular}

\begin{tablenotes}
\footnotesize
\item \textit{Notes.} CE denotes cross-entropy loss and Acc. denotes classification accuracy. Metrics are computed after training on the full training and test datasets. Uncertainties are reported as one standard error across examples. Compute time includes training the acquisition policy and, when applicable, the downstream classifier.
\end{tablenotes}
\end{threeparttable}
\vspace{-2em}
\end{table}


\end{document}